\newtheorem{mythm}{Theorem}
\newtheorem{mylem}{Lemma}
\newtheorem{mycor}{Corollary}
\definecolor{ChangesColor}{RGB}{0,0,0}
\theoremstyle{remark}
\begin{document}


\title{Template Matching via Densities on the Roto-Translation Group}

\author{Erik~J.~Bekkers,
        Marco~Loog,
        Bart~M.~ter~Haar~Romeny,
        and~Remco~Duits
\IEEEcompsocitemizethanks{
\IEEEcompsocthanksitem E.J. Bekkers and B.M. ter Haar Romeny are with the department of Biomedical Engineering, Eindhoven University of Technology (TU/e), the Netherlands. E-mail: \{e.j.bekkers,b.m.terhaarromeny\}@tue.nl
\IEEEcompsocthanksitem M. Loog is with the Pattern Recognition Laboratory, Delft University of Technology, the Netherlands. E-mail: m.loog@tudelft.nl
\IEEEcompsocthanksitem B.M. ter Haar Romeny is also with the department of Biomedical and Information Engineering, Northeastern University, Shenyang, China.
\IEEEcompsocthanksitem R. Duits is with the department of Mathematics and Computer Science, TU/e; he is also affiliated to the department of Biomedical Enginering, TU/e. Email: r.duits@tue.nl}
\thanks{\color{white}Manuscript received ..... .., ....; revised ......... .., .....}}


\IEEEtitleabstractindextext{
\begin{abstract}
We propose a template matching method for the detection of 2D image objects that are characterized by orientation patterns. Our method is based on data representations via orientation scores, which are functions on the space of positions and orientations, and which are obtained via a wavelet-type transform. This new representation allows us to detect orientation patterns in an intuitive and direct way, namely via cross-correlations. Additionally, we propose a generalized linear regression framework for the construction of suitable templates using smoothing splines. Here, it is important to recognize a curved geometry on the position-orientation domain, which we identify with the Lie group SE(2): the roto-translation group. Templates are then optimized in a B-spline basis, and smoothness is defined with respect to the curved geometry. We achieve state-of-the-art results on three different applications: detection of the optic nerve head in the retina (99.83\% success rate on 1737 images), of the fovea in the retina (99.32\% success rate on 1616 images), and of the pupil in regular camera images ($95.86\%$ on 1521 images). The high performance is due to inclusion of both intensity and orientation features with effective geometric priors in the template matching. Moreover, our method is fast due to a cross-correlation based matching approach.
\end{abstract}

\begin{IEEEkeywords}
template matching, multi-orientation, invertible orientation scores, optic nerve head, fovea, retina
\end{IEEEkeywords}}

\maketitle
\IEEEdisplaynontitleabstractindextext
\IEEEpeerreviewmaketitle


\IEEEraisesectionheading{\section{Introduction}}
\label{sec:intro}

\IEEEPARstart{W}{e} propose a cross-correlation based template matching scheme for the detection of objects characterized by orientation patterns. As one of the most basic forms of template matching, cross-correlation is intuitive, easy to implement, and due to the existence of optimization schemes for real-time processing a popular method to consider in computer vision tasks 
\cite{Yoo2009}. However, as intensity values alone provide little context, cross-correlation for the detection of objects has its limitations. More advanced data representations may be used, e.g. via wavelet transforms or feature descriptors \cite{ViolaJones2001,DalalTriggs2005,Lowe1999,Bay2006}. However, then standard cross-correlation can usually no longer be used and one typically resorts to classifiers, which take the new representations as input feature vectors. While in these generic approaches the detection performance often increases with the choice of a more complex representation, so does the computation time. In contrast, in this paper we stay in the framework of template matching via cross-correlation while working with a contextual representation of the image. To this end, we lift an image $f:\mathbb{R}^2 \rightarrow \mathbb{R}$ to an \emph{invertible orientation score} $U_f:\mathbb{R}^2 \rtimes S^1 \rightarrow \mathbb{C}$ via a wavelet-type transform using certain anisotropic filters \cite{Duits2007a,Bekkers2014}.

An orientation score is a complex valued function on the extended domain $\mathbb{R}^2 \rtimes S^1 \equiv SE(2)$ of positions and orientations, and provides a comprehensive decomposition of an image based on local orientations, see Fig.~\ref{fig:odOS} and \ref{fig:osFrame}. Cross-correlation based template matching is then defined via $\mathbb{L}_2$ inner-products of a template $T \in \mathbb{L}_2(SE(2))$ and an orientation score $U_f \in \mathbb{L}_2(SE(2))$. In this paper, we learn templates $T$  by means of generalized linear regression.

\begin{figure}[t]
\begin{center}
\includegraphics[width=\linewidth]{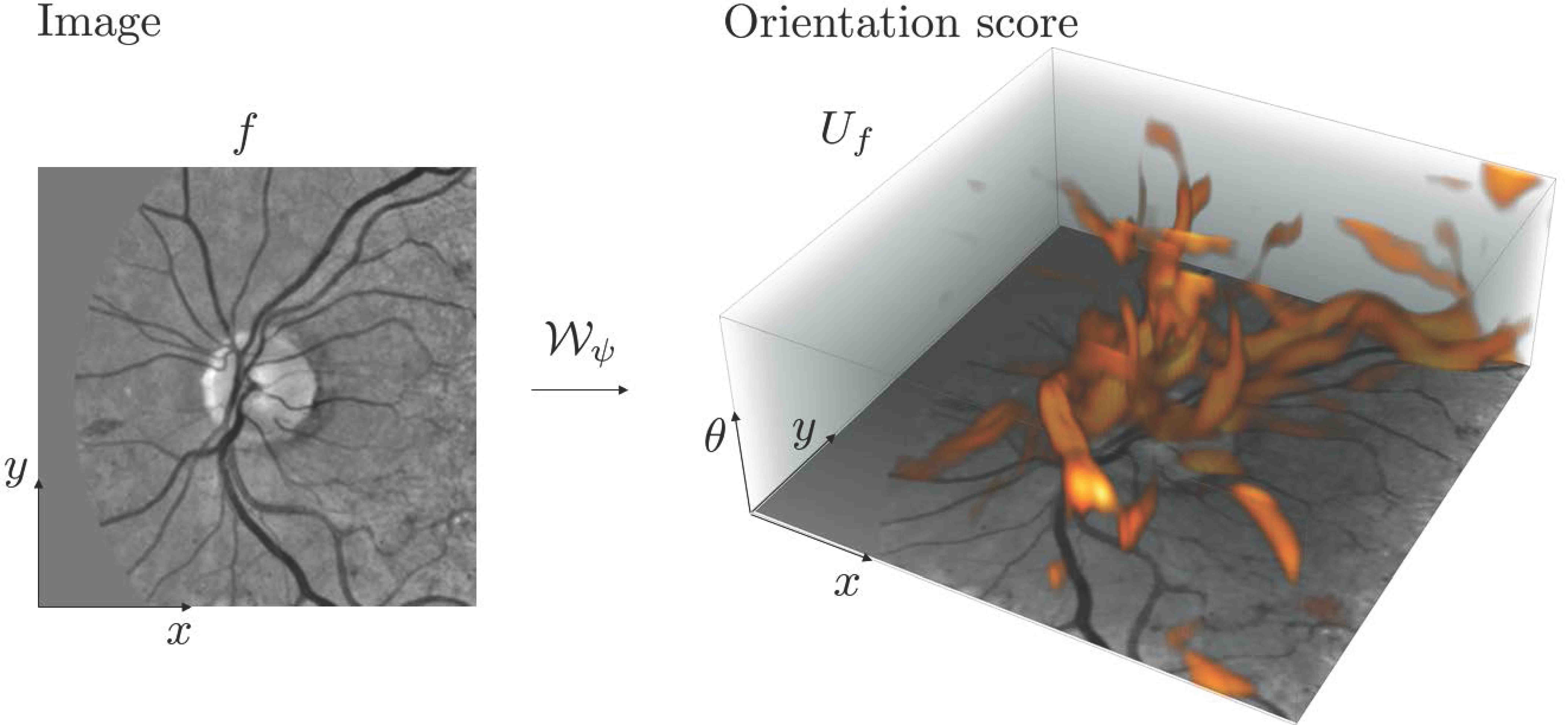}
\end{center}
\caption{A retinal image $f$ of the optic nerve head and a volume rendering of the orientation score $U_f$ (obtained via a wavelet transform $\mathcal{W}_\psi$).
}
\label{fig:odOS}
\end{figure}

%



In the $\mathbb{R}^2$-case (which we later extend to orientation scores, the $SE(2)$-case),
 we define templates $t \in \mathbb{L}_2(\mathbb{R}^2)$ via the optimization of energy functionals of the form
\begin{equation}
\label{eq:energyGeneric}
t^* =  \underset{t \in \mathbb{L}_2(\mathbb{R}^2) }{\operatorname{argmin}} \left\{ E(t) := S(t) + R(t) \right\},
\end{equation}
where the energy functional $E(t)$ consists of a data term $S(t)$, and a regularization term $R(t)$. Since the templates optimized in this form are used in a linear cross-correlation based framework, we will use inner products in $S$, in which case (\ref{eq:energyGeneric}) can be regarded as a generalized linear regression problem with a regularization term. For example, (\ref{eq:energyGeneric}) becomes a regression problem generally known under the name \emph{ridge regression} \cite{Hoerl1970}, when taking
$$
S(t) = \sum_{i=1}^N \left( (t , f_i)_{\mathbb{L}_2(\mathbb{R}^2)} - y_i\right)^2, \;\; \text{and} \;\;
R(t) = \mu \lVert t \rVert^2_{\mathbb{L}_2(\mathbb{R}^2)},
$$
where $f_i$ is one of $N$ image patches, $y_i \in \{0,1\}$ is the corresponding desired filter response, and where $\mu$ is a parameter weighting the regularization term. The regression is then from an input image patch $f_i$ to a desired response $y_i$, and the template $t$ can be regarded as the ``set of weights'' that are optimized in the regression problem. In this article we consider both quadratic (linear regression) and logistic (logistic regression) losses in $S$. For regularization we consider terms of the form
$$
R(t) = \lambda \int_{\mathbb{R}^2} \lVert \nabla t (\mathbf{x}) \rVert^2 {\rm d}\mathbf{x} + \mu \lVert t \rVert^2_{\mathbb{L}_2(\mathbb{R}^2)},
$$
and thus combine the classical ridge regression with a smoothing term (weighted by $\lambda$).


In our extension of smoothed regression
to orientation scores we employ similar techniques. However, here we must recognize a curved geometry on the domain $\mathbb{R}^2 \rtimes S^1$, which we identify with the group of roto-translations: the Lie group $SE(2)$ equipped with group product
\begin{equation}
\label{eq:gproduct}
g \cdot g' = (\mathbf{x},\theta)\cdot(\mathbf{x}',\theta') = (\mathbf{R}_\theta \mathbf{x}' + \mathbf{x}, \theta + \theta').
\end{equation}
In this product the orientation $\theta$ influences the product on the spatial part. Therefore we write $\mathbb{R}^{2} \rtimes S^{1}$ instead of $\mathbb{R}^{2} \times S^{1}$, as it is a semi-direct group product (and not a direct product). Accordingly, we must work with a rotating derivative frame (instead of axis aligned derivatives) that is aligned with the group elements $(\mathbf{x},\theta) \in SE(2)$,
see e.g. the $(\partial_\xi,\partial_\eta,\partial_\theta)$-frames in Fig.~\ref{fig:osFrame}. This derivative frame allows for (anisotropic) smoothing along oriented structures. As we will show in this article (Sec. \ref{sec:stochasticProcess}), the proposed smoothing scheme has the probabilistic interpretation of time integrated Brownian motion on $SE(2)$ \cite{ZhangDuits2014,Duits2010}.




\begin{figure}[t]
\begin{center}
\includegraphics[width=\linewidth]{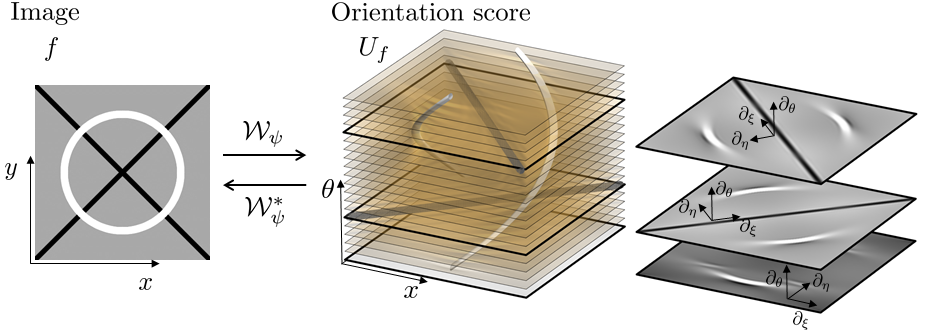}
\end{center}
\caption{In orientation scores $U_f$, constructed from an image $f$ via the orientation score transform $\mathcal{W}_\psi$, we make use of a left-invariant derivative frame $\{\partial_\xi,\partial_\eta,\partial_\theta\}$ that is aligned with the orientation $\theta$ corresponding to each layer in the score. Three slices and the corresponding left-invariant frames are shown separately (at $\theta \in \{0, \frac{\pi}{4}, \frac{3\pi}{4}\}$).}
\label{fig:osFrame}
\end{figure}

{\color{black} \emph{\textbf{Regression and Group Theory.}} }
Regularization in (generalized) linear regression generally leads to more robust classifiers/regressions, especially when a low number of training samples are available. Different types of regularizations in regression problems have been intensively studied in e.g. \cite{HastieBook,Hebiri2011,Cuingnet2013,Xu2009,Qazi2010}, and the choice for regularization-type depends on the problem: E.g. $\mathbb{L}_1$-type regularization is often used to sparsify the regression weights, whereas $\mathbb{L}_2$-type regularization is more generally used to prevent over-fitting by penalizing outliers (e.g. in ridge regression \cite{Hoerl1970}). Smoothing of regression coefficients by penalizing the $\mathbb{L}_2$-norm of the derivative along the coefficients is less common, but it can have a significant effect on performance \cite{LiLi2008,Cuingnet2013}.

We solve problem (\ref{eq:energyGeneric}) in the context of smoothing splines: We discretize the problem by expanding the templates in a finite B-spline basis, and optimize over the spline coefficients. For d-dimensional Euclidean spaces, smoothing splines have been well studied \cite{deBoor1978,Green1993,Unser1999,Unser93}. In this paper, we extend the concept to the curved space $SE(2)$ and provide explicit forms of the discrete regularization matrices. Furthermore, we show that the extended framework can be used for time integrated Brownian motions on $SE(2)$, and show near to perfect comparisons to the exact solutions found in \cite{ZhangDuits2014,Duits2010}.

In general, statistics and regression on Riemannian manifolds are powerful tools in medical imaging and computer vision \cite{Fletcher2013,Miolane2015,Pennec2006,Vidal2005}. More specifically in pattern matching and registration problems Lie groups are often used to describe deformations. E.g. in \cite{Tuzel2008} the authors learn a regression function $\mathbb{R}^m \rightarrow A(2)$ from a discrete $m$-dimensional feature vector to a deformation in the affine group $A(2)$. Their purpose is object tracking in video sequences. This work is however not concerned with deformation analysis, we instead learn a regression function $\mathbb{L}_2(SE(2))\rightarrow\mathbb{R}$ from continuous densities on the Lie group $SE(2)$ (obtained via an invertible orientation score transform) to a desired filter response. Our purpose is object detection in 2D images. In our regression we impose smoothed regression with a time-integrated hypo-elliptic Brownian motion prior and thereby extend least squares regression to smoothed regression on SE(2) involving first order variation in Sobolev-type of norms.

\emph{\textbf{Application Area of the Proposed Method.}}
The strength of our approach is demonstrated with the application to anatomical landmark detection in medical retinal images and pupil localization in regular camera images. In the retinal application we consider the problem of detecting the optic nerve head (ONH) and the fovea.
Many image analysis applications require the robust, accurate and fast detection of these structures, see e.g. \cite{Patton2006,Ramakanth2014,GegundezArias2013,Hansen2010}.
In all three detection problems the objects of interest are characterized by (surrounding) curvilinear structures (blood vessels in the retina; eyebrow, eyelid, pupil and other contours for pupil detection), which are conveniently represented in \emph{invertible} orientation scores. The invertibility condition implies that all image data is contained in the orientation score \cite{Duits2007}\cite{Bekkers2014}.
With the proposed method we achieve state-of-the-art results both in terms of detection performance and speed: high detection performance is achieved by learning templates that make optimal use of the line patterns in orientation scores; speed is achieved by a simple, yet effective, cross-correlation template matching approach. 

\emph{\textbf{Contribution of this Work.}}  This article builds upon two published conference papers \cite{Bekkers2014a,Bekkers2015EMMCVPR}. In the first we demonstrated that high detection performance could be achieved by considering cross-correlation based template matching in $SE(2)$, using only handcrafted templates and with the application of ONH detection in retinal images \cite{Bekkers2014a}. In the second we then showed on the same application that better performance could be achieved by training templates using the optimization of energy functionals of the form of ($\ref{eq:energyGeneric}$), where then only a (left-invariant) smoothing regularizer was considered \cite{Bekkers2015EMMCVPR}. In this article we provide a complete framework for training of templates and matching on $SE(2)$ and contribute to literature by:
\begin{enumerate}
\item Extending the linear regression $SE(2)$ framework\cite{Bekkers2015EMMCVPR} to logistic regression, with clear benefits in pupil detection using a single template (with an increase of success rate from $76\%$ to $94\%$).
\item Studying different types of regression priors, now introducing also a 
ridge regression prior.
\item We show that the $SE(2)$ smoothing prior corresponds to time-integrated hypo-elliptic diffusion on $SE(2)$, providing a Brownian motion interpretation.
\item We show the generic applicability of our method: with the exact same settings of our algorithm we obtain state-of-the-art results on three different applications (ONH detection, cf. Ch.~\ref{subsec:ONHDetection} and Table \ref{tab:stateOfTheArtONH}, fovea detection, cf. Subsec.~\ref{subsec:FoveaDetection} and Table \ref{tab:stateOfTheArtFovea}, and pupil detection, cf. Subsec.~\ref{subsec:PupilDetection} and Fig.~\ref{fig:PupilTemplates}).
\item Improving previous results on ONH detection (reducing the number of failed detections to 3 out of 1737 images).
\item Making our code publicly available at \url{http://erikbekkers.bitbucket.org/TMSE2.html}.
\end{enumerate}

\emph{\textbf{Paper Outline.}}
The remainder of this paper is organized as follows. In Sec.~\ref{sec:templateMatchingR2} we provide the theory for template matching and template construction in the $\mathbb{R}^2$-case. The theory is then extended to the $SE(2)$-case in Sec.~\ref{sec:templateMatchingSE2}.
Additionally, in Sec.~\ref{sec:stochasticProcess} we provide a probabilistic interpretation of the proposed $SE(2)$ prior, and relate it to Brownian motions on $SE(2)$.
In Sec.~\ref{sec:applications} we apply the method to retinal images for ONH (Subsec.~\ref{subsec:ONHDetection}) and fovea detection (Subsec.~\ref{subsec:FoveaDetection}), and to regular camera images for pupil detection (Subsec.~\ref{subsec:PupilDetection}). Finally, we conclude the paper in Sec.~\ref{sec:discussionAndConclusion}.

\section{Template Matching \& Regression on $\mathbb{R}^2$}
\label{sec:templateMatchingR2}

\subsection{Object Detection via Cross-Correlation}
\label{subsec:objectDetectionR2}
We are considering the problem of finding the location of objects (with specific orientation patterns) in an image. While in principle an image may contain multiple objects of interest, the applications discussed
in this paper only require the detection of one object per image. We search for the most likely location
\begin{equation}
\label{eq:objectDetection}
\mathbf{x}^* = \underset{\mathbf{x} \in \mathbb{R}^2 }{\operatorname{argmax}} \;\;P(\mathbf{x}),
\end{equation}
with $P(\mathbf{x}) \in \mathbb{R}$ denoting the objective functional for finding the object of interest at location $\mathbf{x}$. We define $P$ based on inner products in a \emph{linear regression} and \emph{logistic regression} context, where we respectively define $P$ by
\begin{equation}
\label{eq:linearFunctional}
P(\mathbf{x}) = P_{lin}^{\mathbb{R}^2}(\mathbf{x}) := ( \mathcal{T}_\mathbf{x} \;t , f)_{\mathbb{L}_2(\mathbb{R}^2)},
\end{equation}
or
\begin{equation}
\label{eq:logisticFunctional}
\begin{aligned}
& P(\mathbf{x}) = P_{log}^{\mathbb{R}^2}(\mathbf{x})  := \sigma \left( ( \mathcal{T}_\mathbf{x} \; t , f)_{\mathbb{L}_2(\mathbb{R}^2)} \right),\\
& \text{with} \;\;\;\; \sigma(x)  =e^x/(1+e^x),
\end{aligned}
\end{equation}
where $\mathcal{T}_\mathbf{x}$ denotes translation by $\mathbf{x}$ via
$$
( \mathcal{T}_\mathbf{x} t )(\tilde{\mathbf{x}}) = t(\tilde{\mathbf{x}} - \mathbf{x}),
$$
and where the $\mathbb{L}_2(\mathbb{R}^2)$ inner product is given by
\begin{equation}
(t,f)_{\mathbb{L}_2 (\mathbb{R}^2)} := \int_{\mathbb{R}^2} \overline{t(\tilde{\mathbf{x}})} f(\tilde{\mathbf{x}}) {\rm d}\tilde{\mathbf{x}},
\end{equation}
with associated norm $\lVert \cdot \rVert_{\mathbb{L}_2(\mathbb{R}^2)} = \sqrt{ (\cdot , \cdot )_{\mathbb{L}_2(\mathbb{R}^2)} }$.
Note that the inner-product based potentials $P(\mathbf{x})$ can be efficiently evaluated for each $\mathbf{x}$ using convolutions.

For a generalization of cross-correlation based template matching to \emph{normalized} cross correlation, we refer the reader to the supplementary materials. For speed considerations we will however not use normalized cross correlation, but instead use a (fast) preprocessing step to locally normalize the images (cf. Subsec.~\ref{subsubsec:processingPipeline}).

\subsection{Optimizing $t$ Using Linear Regression}
\label{subsec:linearRegresionR2}
Our aim is to construct templates $t$ that are ``aligned'' with image patches that contain the object of interest, and which are orthogonal to non-object patches. Hence, template $t$ is found via the minimization of the following energy 
\begin{multline}
\label{eq:energyR2LinearRegression}
E_{lin}(t) = \sum\limits_{i=1}^N \left(  ( t , f_i )_{\mathbb{L}_2(\mathbb{R}^2)} - y_i \right)^2 \\
 +  \lambda \; \int_{\mathbb{R}^2} \lVert \nabla t(\tilde{\mathbf{x}}) \rVert^2  {\rm d}\tilde{\mathbf{x}} + \mu \; \lVert t \rVert^2_{\mathbb{L}_2(\mathbb{R}^2)},
\end{multline}
with $f_i$ one of the $N$ training patches extracted from an image $f_\mathbf{x}$, and $y_i$ the corresponding label ($y_i=1$ for \emph{objects} and $y_i=0$ for \emph{non-objects}).
In (\ref{eq:energyR2LinearRegression}), the data-term (first term) aims for alignment of template $t$ with object patches, in which case the inner product $( t , f_i )_{\mathbb{L}_2(\mathbb{R}^2)}$ is ideally one, and indeed aims orthogonality to non-object patches (in which case the inner product is zero). The second term enforces spatial smoothness of the template by penalizing its gradient, controlled by $\lambda$. The third (ridge) term improves stability by dampening the $\mathbb{L}_2$-norm of $t$, controlled by $\mu$.

\subsection{Optimizing $t$ Using Logistic Regression}
\label{subsec:logisticRegresionR2}
In object detection we are essentially considering a two-class classification problem: the object is either present or it is not. In this respect, the quadratic loss term in (\ref{eq:energyR2LinearRegression}) might not be the best choice as it penalizes any deviation from the desired response $y_i$, regardless of whether or not the response $( t , f_i )_{\mathbb{L}_2(\mathbb{R}^2)}$ is on the correct side of a decision boundary. 
In other words, the aim is not necessarily to construct a template that best maps an image patch $f_i$ to a response $y_i\in\{0,1\}$, but rather the aim is to construct a template that best makes the separation between \emph{object} and \emph{non-object} patches. With this in mind we resort to the logistic regression model, in which case we interpret the non-linear objective functional given in (\ref{eq:logisticFunctional}) as a probability, and define
\begin{equation}
\begin{array}{rl}
p_{1}( f_i \; ; \; t )     &= p( f_i \; ; \; t ),\\
p_{0}( f_i \; ; \; t ) &= 1 - p( f_i \; ; \; t),\\
\multicolumn{2}{c}{\text{with }  p( f_i \; ; \; t) = \sigma \left( ( t , f_i)_{\mathbb{L}_2(\mathbb{R}^2)} \right),}\\
\end{array}
\end{equation}
with $p_{1}( f_i  ;  t )$ and $p_{0}( f_i  ;  t )$ denoting respectively the probabilities of a patch $f_i$ being an \emph{object} or \emph{non-object} patch. Our aim is now to maximize the likelihood (of each patch $f_i$ having maximum probability $p_{y_i}(f_i ; t)$ for correct label $y_i$):
\begin{equation}
\ell(t) = \prod_{i=1}^N p_{y_i}(f_i;t) = \prod_{i=1}^N p( f_i ; t)^{y_i} (1 - p( f_i ; t))^{1-y_i}.
\end{equation}
We maximize the log-likelood instead,
which is given by
\begin{equation}
\label{eq:loglikelihood}
\begin{array}{l}
\!\! \ell_{log}(t) := \log ( \; \ell(t) \; ) \\
\;\;= \sum\limits_{i=1}^N \log ( \; p( f_i ; t)^{y_i} (1 - p( f_i ; t))^{1-y_i} \; )\\
%
\;\;= \sum\limits_{i=1}^N y_i ( t , f_i )_{\mathbb{L}_2(\mathbb{R}^2)}
- \log\left(
1 + e^{
( t , f_i )_{   \mathbb{L}_2(  \mathbb{R}^2  )   }
}
\right).
\end{array}
\end{equation}%
Maximizing (\ref{eq:loglikelihood}) is known as the problem of logistic regression. Similar to the linear regression case, we impose additional regularization and define the following regularized logistic regression energy, which we aim to \emph{maximize}:
\begin{equation}
\label{eq:energyR2LogisticRegressionLogLikelihood}
E^{\ell}_{log}(t) = \ell_{log}(t) -  \lambda \int_{\mathbb{R}^2} \lVert \nabla t(\tilde{\mathbf{x}}) \rVert^2  {\rm d}\tilde{\mathbf{x}}
- \; \mu \; \lVert t \rVert^2_{ \mathbb{L}_2( \mathbb{R}^2 ) }.
\end{equation}

\subsection{Template Optimization in a B-Spline Basis}
\label{subsec:splineBasisR2}

\textbf{\emph{Templates in a B-Spline Basis.}}
In order to solve the optimizations (\ref{eq:energyR2LinearRegression}) and (\ref{eq:energyR2LogisticRegressionLogLikelihood}), the template is described in a basis of direct products of $n$-th order B-splines~$B^n$:
\begin{equation}
\label{eq:tBsplineR2}
t(x,y) = \sum \limits_{k=1}^{N_k} \sum \limits_{l=1}^{N_l} c_{k,l} \;B^n\!\left( \frac{x}{s_k} - k \right)B^n\!\left( \frac{y}{s_l} - l \right),
\end{equation}
with $B^n(x) = \left(1_{\left[-\frac{1}{2},\frac{1}{2}\right]}*^{(n)}1_{\left[-\frac{1}{2},\frac{1}{2}\right]}\right)(x)$ a $n$-th order B-spline obtained by $n$-fold convolution of the indicator function $1_{\left[-\frac{1}{2},\frac{1}{2}\right]}$, and $c_{k,l}$ the coefficients belonging to the shifted B-splines. Here $s_k$ and $s_l$ scale the B-splines and typically depend on the number $N_k$ and $N_l$ of B-splines.


\textbf{\emph{Linear Regression. \label{ch:linearregression}}}
\label{thm:R2Data}
By substitution of (\ref{eq:tBsplineR2}) in (\ref{eq:energyR2LinearRegression}), the energy functional can be expressed in matrix-vector form (see Section 2 of the supplementary materials):
\begin{equation}
\label{eq:energyR2LinearRegressionDiscrete}
E_{lin}^{B}(\mathbf{c}) = \lVert S \mathbf{c} - \mathbf{y} \rVert^2 + \lambda \; \mathbf{c}^\dagger R \mathbf{c} + \mu \; \mathbf{c}^\dagger I \mathbf{c}.
\end{equation}
Regarding our notations we note that for spatial template $t$ given by (\ref{eq:tBsplineR2}) we have  $E_{lin}(t)= E^{B}_{lin}(\mathbf{c})$, and label `B' indicates finite expansion in the B-spline basis.
The minimizer of (\ref{eq:energyR2LinearRegressionDiscrete}) is given by
\begin{equation}
\label{eq:minimizerR2LinearRegression}
(S^\dagger S + \lambda R + \mu I)\mathbf{c} = S^\dagger \mathbf{y},
\end{equation}
with $^\dagger$ denoting the conjugate transpose, and $I$ denoting the identity matrix.
Here $S$ is a $[N \times N_k N_l]$ matrix given by
\begin{equation}
\begin{array}{rl}
S &= \{(s_{1,1}^i,...,s_{1,N_l}^i,s_{2,1}^i,...,s_{2,N_l}^i,...,...,s_{N_k,N_l}^i)\}_{i=1}^N,\\
s_{k,l} &= (\; B_{s_ks_l}^n * {f}_i \; )(k,l),
\end{array}
\end{equation}
with $B_{s_ks_l}^n (x,y)= B^n\!\left( \frac{x}{s_k} \right)B^n\!\left( \frac{y}{s_l} \right)$, for all (x,y) on the discrete spatial grid on which the input image ${f}_D:\{1,N_x\}\times\{1,N_y\}\rightarrow\mathbb{R}$ is defined. Here $N_k$ and $N_l$ denote the number of splines in resp. $x$ and $y$ direction, and $s_k=\frac{N_x}{N_k}$ and $s_l=\frac{N_y}{N_l}$ are the corresponding resolution parameters. The $[N_k N_l \times 1]$ column vector $\mathbf{c}$ contains the B-spline coefficients, and the $[N \times 1]$ column vector $\mathbf{y}$ contains the labels, stored in the following form
\begin{equation}
\begin{array}{rl}
\mathbf{c} &= (c_{1,1},...,c_{1,N_l},c_{2,1},...,c_{2,N_{l}},...,...,c_{N_k,N_l})^T\\
\mathbf{y} &= (y_1,y_2,...,y_N)^T.
\end{array}
\end{equation}
The $[N_k N_l \times N_k N_l]$ regularization matrix $R$ is given by
\begin{equation}
\label{eq:regularizationMatrixR2}
R = R_x^{s_k} \otimes R_x^{s_l} + R_y^{s_k} \otimes R_y^{s_l},
\end{equation}
where $\otimes$ denotes the Kronecker product, and with
\begin{equation}
\label{eq:regularizationMatrixR2Elements}
\begin{array}{ll}
R_x^{s_k}(k,k') &= -\frac{1}{s_k}\frac{ \partial^2 B^{2n+1}}{\partial x^2}(k'-k), \\
R_x^{s_l}(l,l') &= s_l B^{2n+1}(l'-l), \\
R_y^{s_k}(k,k') &= s_k B^{2n+1}(k'-k),\\
R_y^{s_l}(l,l') &= -\frac{1}{s_l}\frac{ \partial^2 B^{2n+1}}{\partial y^2}(l'-l),
\end{array}
\end{equation}
with $k,k'={1,2,...,N_k}$ and $l,l'={1,2,...,N_l}$. The coefficients $\mathbf{c}$ can then be computed by solving (\ref{eq:minimizerR2LinearRegression}) directly, or via linear system solvers such as conjugate gradient descent. For a derivation of the regularization matrix $R$ we refer to supplementary materials, Sec.~2.

\textbf{\emph{Logistic Regression. \label{ch:log}}}
The logistic regression log-likelihood functional (\ref{eq:energyR2LogisticRegressionLogLikelihood}) can be expressed in matrix-vector notations as follows:
\begin{multline}
\label{eq:energyR2LogisticRegressionLogLikelihoodDiscrete}
E_{log}^{\ell,B}(\mathbf{c}) = \left[ \mathbf{y}^\dagger S \mathbf{c} - \mathbf{1}_N^\dagger \log(\mathbf{1}_N + \exp( S \mathbf{c} ) ) \right] \\
- \lambda \; \mathbf{c}^\dagger R \mathbf{c} - \mu \; \mathbf{c}^\dagger I \mathbf{c},
\end{multline}
where $\mathbf{1}_N = \{1,1,...,1\}^T \in \mathbb{R}^{N \times 1}$, and where the exponential and logarithm are evaluated element-wise. We follow a standard approach for the optimization of (\ref{eq:energyR2LogisticRegressionLogLikelihoodDiscrete}), see e.g. \cite{HastieBook}, and find the minimizer by settings the derivative to $\mathbf{c}$ to zero
\begin{equation}
\label{eq:minimizerR2LogisticRegression}
\nabla_{\mathbf{c}} E_{log}^{\ell,B}(\mathbf{c}) =  S^T (\mathbf{y} - \mathbf{p}) - \lambda \; R \mathbf{c} - \mu \; I \mathbf{c} = \mathbf{0},
\end{equation}
with $\mathbf{p}= (p_1,...,p_N)^T \in \mathbb{R}^{N \times 1}$, with $p_i = \sigma ( (S\mathbf{c})_i)$. To solve (\ref{eq:minimizerR2LogisticRegression}), we use a Newton-Raphson optimization scheme. This requires computation of the Hessian matrix, given by
\begin{equation}
\mathcal{H}(E_{log}^{\ell,B}) = - (S^T W S + \lambda \; R + \mu \; I),
\end{equation}
with diagonal matrix $W = \underset{i \in \{1,...,N\}}{\operatorname{diag}} \left\{ p_i (1 - p_i) \right\}$.
The Newton-Raphson update rule is then given by
\begin{equation}
\label{eq:updateRule}
\begin{aligned}
\mathbf{c}^{new} & = \mathbf{c}^{old} - \mathcal{H}(E_{log}^{\ell,D})^{-1} (\nabla_{\mathbf{c}} E_{log}^{\ell,D}(\mathbf{c})) \\
& = (S^T W S + \lambda \; R + \mu \; I)^{-1} S^T W \mathbf{z},
\end{aligned}
\end{equation}
with $\mathbf{z} = S \mathbf{c}^{old} + W^{-1} (\mathbf{y} - \mathbf{p})$, see e.g. \cite[ch. 4.4]{HastieBook}.
Optimal coefficients found at convergence are denoted with $\mathbf{c}^*$.

Summarizing, we obtain the solution of (\ref{eq:objectDetection}) by substituting the optimized B-spline coefficients $\mathbf{c}^*$ into (\ref{eq:tBsplineR2}), and the resulting $t$ enters (\ref{eq:linearFunctional}) or (\ref{eq:logisticFunctional}). The most likely object location $\mathbf{x}^*$ is then found via (\ref{eq:objectDetection}).

\section{\mbox{Template Matching \& Regression on $SE(2)$}}
\label{sec:templateMatchingSE2}

This section starts with details on the representation of image data in the form of orientation scores (Subsec. \ref{subsec:orientationScores}). Then, we repeat the sections from Sec. \ref{sec:templateMatchingR2} in Subsections \ref{subsec:objectDetectionSE2} to \ref{subsec:splineBasisSE2}, but now in the context of the extended domain $SE(2)$.

\subsection{Orientation Scores on $SE(2)$}
\label{subsec:orientationScores}
\textbf{\emph{Transformation.}}
An orientation score, constructed from image $f:\mathbb{R}^2 \to \mathbb{R}$, is defined as a function $U_f : \mathbb{R}^2 \rtimes S^1 \rightarrow \mathbb{C}$ and depends on two variables ($\mathbf{x},\theta$), where $\mathbf{x}=(x,y) \in \mathbb{R}^2$ denotes position and $\theta \in [0,2\pi)$ denotes the orientation variable. An orientation score $U_f$ of image $f$ can be constructed by means of correlation with some anisotropic wavelet $\psi$ via
\begin{equation}
\label{eq:ostransform}
U_f(\mathbf{x},\theta) = (\mathcal{W}_\psi f)(\mathbf{x},\theta) = \int_{\mathbb{R}^2}\overline{\psi(\mathbf{R}_\theta^{-1}(\tilde{\mathbf{x}}-\mathbf{x}))}f(\tilde{\mathbf{x}}){\rm d}\tilde{\mathbf{x}},
\end{equation}
where  $\psi \in \mathbb{L}_2 (\mathbb{R}^2)$ is the correlation kernel, aligned with the $x$-axis, where $\mathcal{W}_\psi$ denotes the transformation between image $f$ and orientation score $U_f$, $\psi_\theta(\mathbf{x}) =  \psi(\mathbf{R}_\theta^{-1}\mathbf{x})$, and $\mathbf{R}_\theta$ is a counter clockwise rotation over angle $\theta$. 

In this work we choose cake wavelets \cite{Duits2007a,Bekkers2014} for $\psi$. While in general any kind of anisotropic wavelet could be used to lift the image to $SE(2)$, cake wavelets ensure that no data-evidence is lost during the transformation: By design the set of all rotated wavelets uniformly cover the full Fourier domain of disk-limited functions with zero mean, and have thereby the advantage over other oriented wavelets (s.a. Gabor wavelets for specific scales) that they capture all scales and allow for a stable inverse transformation $\mathcal{W}_\psi^*$ from the score back to the image \cite{Duits2010,Duits2007a}.

\textbf{\emph{Left-Invariant Derivatives.}}
The domain of an orientation score is essentially the classical Euclidean motion group $SE(2)$ of planar translations and rotations, and is equipped with group product $g \cdot g' = (\mathbf{x},\theta)\cdot(\mathbf{x}',\theta') = (\mathbf{R}_\theta \mathbf{x}' + \mathbf{x}, \theta + \theta')$. Here, we can recognize a curved geometry (cf. Fig.~\ref{fig:osFrame}), and it is therefore useful to work in rotating frame of reference. As such, we use the left invariant derivative frame \cite{Duits2010,ZhangDuits2014}:
\begin{equation}
\label{eq:leftInvariantDerivatives}
\left\{ \partial_{\xi}  := \cos \theta \, \partial_{x} +\sin \theta \, \partial_{y}, \partial_{\eta}  := -\sin \theta \, \partial_{x} + \cos \theta \, \partial_{y},
\partial_{\theta} \right\}.
\end{equation}
Using this derivative frame we will construct in Subsec.~\ref{subsec:linearRegressionSE2} a regularization term in which we can control the amount of (anisotropic) smoothness along line structures.

\subsection{Object Detection via Cross-Correlation}
\label{subsec:objectDetectionSE2}
As in Section \ref{sec:templateMatchingR2}, we search for the most likely {object} location $\mathbf{x}^*$ via (\ref{eq:objectDetection}), but now we define functional $P$ respectively for the linear and logistic regression case in $SE(2)$ by\footnote{Since both the inner product and the construction of orientation scores $U_f$ from images $f$ are linear, template matching might as well be performed directly on the 2D images (likewise (\ref{eq:linearFunctional}) and (\ref{eq:logisticFunctional})). Hence, here we take the modulus of the score as a non-linear intermediate step \cite{Bekkers2015EMMCVPR}.}:
\begin{align}
\label{eq:linearFunctionalSE2}
P(\mathbf{x}) = {P}_{lin}^{SE(2)}(\mathbf{x}) := & ( \mathcal{T}_{\mathbf{x}}  \; T , \left|U_{f}\right|)_{\mathbb{L}_2(SE(2))},\;\;\;\;\text{or}\\
\label{eq:logisticFunctionalSE2}
P(\mathbf{x}) = {P}_{log}^{SE(2)}(\mathbf{x}) := & \sigma \left( ( \mathcal{T}_{\mathbf{x}}  \; T , \left|U_{f}\right|)_{\mathbb{L}_2(SE(2))} \right),
\end{align}
with $(\mathcal{T}_{\mathbf{x}} T)(\tilde{\mathbf{x}},\tilde{\theta}) = T(\tilde{\mathbf{x}} - \mathbf{x}, \tilde{\theta})$. The $\mathbb{L}_2(SE(2))$-inner product is defined by
\begin{equation}
(T,\left|U_f\right|)_{\mathbb{L}_2 (SE(2))} :=
\int_{\mathbb{R}^2}\int_{0}^{2\pi} \overline{T(\tilde{\mathbf{x}},\tilde{\theta})} \left|U_f\right|(\tilde{\mathbf{x}},\tilde{\theta}) {\rm d}\tilde{\mathbf{x}}{\rm d}\tilde{\theta},
\end{equation}
with norm $\lVert \cdot \rVert_{\mathbb{L}_2(SE(2))} = \sqrt{ (\cdot , \cdot )_{\mathbb{L}_2(SE(2))} }$.

\subsection{Optimizing $T$ Using Linear Regression}
\label{subsec:linearRegressionSE2}
Following the same reasoning as in Section \ref{subsec:linearRegresionR2} we search for the template that minimizes
\begin{multline}
\label{eq:energySE2LinearRegression}
\mathcal{E}_{lin}(T) = \sum\limits_{i=1}^N \left(  ( T , \left|U_{f_i}\right| )_{\mathbb{L}_2(SE(2))} - y_i \right)^2 \\
+ \lambda \int_{\mathbb{R}^2} \int_0^{2\pi} \lVert \nabla T(\tilde{\mathbf{x}},\tilde{\theta}) \rVert^2_{D}{\rm d}\tilde{\mathbf{x}}{\rm d}\tilde{\theta}
+ \mu \lVert T \rVert^2_{\mathbb{L}_2(SE(2))},
\end{multline}
with smoothing term:
\begin{equation}
\lVert \nabla T (g) \rVert_{D}^2 = D_{\xi\xi} \left| \frac{\partial T}{\partial \xi}(g)\right|^2 + D_{\eta\eta} \left| \frac{\partial T}{\partial \eta}(g)\right|^2 + D_{\theta\theta} \left| \frac{\partial T}{\partial \theta}(g)\right|^2.
\end{equation}
Here, $\nabla T = (\frac{\partial T}{\partial \xi},\frac{\partial T}{\partial \eta},\frac{\partial T}{\partial \theta})^T$ denotes the left-invariant gradient. Note that $\partial_\xi$ gives the spatial derivative in the direction aligned with the orientation score kernel used at layer $\theta$, recall Fig.~\ref{fig:osFrame}. The parameters $D_{\xi\xi}$, $D_{\eta\eta}$ and $D_{\theta\theta} \geq 0$ are then used to balance the regularization in the three directions. Similar to this problem, first order Tikhonov-regularization on $SE(2)$ is related, via temporal Laplace transforms, to left--invariant diffusions on the group $SE(2)$ (Sec.~\ref{sec:stochasticProcess}), in which case $D_{\xi\xi}$, $D_{\eta\eta}$ and $D_{\theta\theta}$ denote the diffusion constants in $\xi$, $\eta$ and $\theta$ direction. Here we set $D_{\xi\xi}=1$, $D_{\eta\eta}=0$, and thereby we get Laplace transforms of hypo-elliptic
diffusion processes \cite{Citti2006,Duits2010}.
Parameter $D_{\theta\theta}$ can be used to tune between isotropic (large $D_{\theta\theta}$) and anisotropic (low $D_{\theta\theta}$) diffusion (see e.g. \cite[Fig.~3]{Bekkers2015EMMCVPR}). Note that anisotropic diffusion, via a low $D_{\theta\theta}$, is preferred as we want to maintain line structures in orientation scores.

\subsection{Optimizing $T$ Using Logistic Regression}
\label{subsec:logisticRegressionSE2}
Similarly to what is done in Subsec.~\ref{subsec:logisticRegresionR2} we can change the quadratic loss of (\ref{eq:energySE2LinearRegression}) to a logistic loss, yielding the following energy functional
\begin{multline}
\label{eq:energySE2LogisticRegression}
\mathcal{E}_{log}(T) = \mathscr{L}_{log}(T)
-\lambda \int_{\mathbb{R}^2} \int_0^{2\pi} \lVert \nabla T(\tilde{\mathbf{x}},\tilde{\theta}) \rVert^2_{D} {\rm d}\tilde{\mathbf{x}}{\rm d}\tilde{\theta}\\
- \mu \lVert T \rVert^2_{\mathbb{L}_2(SE(2))},
\end{multline}
with log-likelihood (akin to (\ref{eq:loglikelihood}) for the $\mathbb{R}^2$ case)
\begin{equation}
\begin{array}{rl}
\mathscr{L}_{log}(T) = &\sum\limits_{i=1}^N y_i ( T , \left|U_{f_i}\right| )_{\mathbb{L}_2(SE(2))}\\
&\;\;\;\;\;\;\;- \log\left(
1 + e^{
( T , \left|U_{f_i}\right| )_{   \mathbb{L}_2(  SE(2)  )   }
} \right).
\end{array}
\end{equation}
The optimization of (\ref{eq:energySE2LinearRegression}) and (\ref{eq:energySE2LogisticRegression}) follows quite closely the procedure as described in Sec.~\ref{sec:templateMatchingR2} for the 2D case. In fact, when $T$ is expanded in a B-spline basis, the exact same matrix-vector formulation can be used.

\subsection{Template Optimization in a B-Spline Basis}
\label{subsec:splineBasisSE2}
\textbf{\emph{Templates in a B-Spline Basis.}}
The template $T$ is expanded in a B-spline basis as follows
\begin{equation} \label{splineexp}
\begin{array}{l}
T(x,y,\theta) = \sum \limits_{k=1}^{N_k} \sum \limits_{l=1}^{N_l} \sum \limits_{m=1}^{N_m} c_{k,l,m}\cdot \; \\
\\
\;\;\; B^n\!\left( \frac{x}{s_k} - k \right) B^n\!\left( \frac{y}{s_l} - l \right)B^n\!\left( \frac{\theta \!\!\!\!\! \mod 2\pi}{s_m} - m \right),
\end{array}
\end{equation}
with $N_k$, $N_l$ and $N_m$ the number of B-splines in respectively the $x$, $y$ and $\theta$ direction, $c_{k,l,m}$ the corresponding basis coefficients, and with angular resolution parameter $s_m = 2\pi/N_m$.

\textbf{\emph{Linear Regression.}}\label{ch:linearRegressionSE2}
The shape of the minimizer of energy functional $\mathcal{E}_{lin}(T)$ in the $SE(2)$ case is the same as for $E_{lin}(t)$ in the $\mathbb{R}^2$ case, and is again of the form given in (\ref{eq:energyR2LinearRegressionDiscrete}). However, now the definitions of $S$, $R$ and $\mathbf{c}$ are different. Now, $S$ is a $[N \times N_k N_l N_m]$ matrix given by
\begin{equation}
\begin{aligned}
S &= \{(s_{1,1,1}^i,...,s_{1,1,N_m}^i,...,s_{1,N_l,N_m},...,s_{N_k,N_l,N_m}^i)\}_{i=1}^N,\\
s_{k,l,m} &= (\; B_{s_ks_ls_m}^n * U_{f_i} \;)(k,l,m),
\end{aligned}
\end{equation}
with $B_{s_ks_ls_m}^n(x,y,\theta) = B^n\!\left( \frac{x}{s_k} \right)B^n\!\left( \frac{y}{s_l} \right)B^n\!\left( \frac{\theta \!\! \mod 2\pi}{s_m} \right) $. Vector $\mathbf{c}$ is a $[N_k N_l N_m \times 1]$ column vector containing the B-spline coefficients and is stored as follows:
\begin{equation}
\mathbf{c} = (c_{1,1,1},...,c_{1,1,N_m},...,c_{1,N_l,N_m},...,c_{N_k,N_l,N_m})^T.
\end{equation}
The explicit expression and the derivation of $[N_k N_l N_m \times N_k N_l N_m]$ matrix $R$, which encodes the left invariant derivatives, can be found in the supplementary materials Sec.~2.

\textbf{\emph{Logistic Regression}}\label{ch:logSE2}
Also for the logistic regression case we optimize energy functional (\ref{eq:energySE2LogisticRegression}) in the same form as (\ref{eq:energyR2LogisticRegressionLogLikelihood}) in the $\mathbb{R}^2$ case, by using the corresponding expressions for $S$, $R$, and $\mathbf{c}$ in Eq.~(\ref{eq:energyR2LogisticRegressionLogLikelihoodDiscrete}).
These expressions can be inserted in the functional (\ref{eq:energyR2LogisticRegressionLogLikelihoodDiscrete}) and again the same techniques (as presented in Subsection~\ref{ch:log})
can be used to minimize this cost on $SE(2)$.

\subsection{Probabilistic Interpretation of the $SE(2)$ Smoothing Prior}
\label{sec:stochasticProcess}
In this section we only provide a brief introduction to the probabilistic interpretation of the $SE(2)$ smoothing prior, and refer the interested reader to the supplementary materials for full details.
Consider the classic approach to noise suppression in images via diffusion regularizations with PDE's of the form
\begin{equation}
\label{eq:pdeDiffusion}
\left\{
\begin{array}{cl}
\tfrac{\partial}{\partial \tau}u &= \Delta u,\\
u |_{\tau=0} &= u_0,
\end{array}
\right.
\end{equation}
where $\Delta$ denotes the Laplace operator. Solving (\ref{eq:pdeDiffusion}) for any diffusion time $\tau>0$ gives a smoothed version of the input $u_0$. The time-resolvent process of the PDE is defined by the Laplace transform with respect to $\tau$; time $\tau$ is integrated out using a memoryless negative exponential distribution $P(\mathcal{T}=\tau)=\alpha e^{-\alpha \tau}$. Then, the time integrated solutions
$$
t(\mathbf{x}) = \alpha \int_0^\infty u(\mathbf{x},\tau) e^{-\alpha \tau} {\rm d}\tau,
$$
with decay parameter $\alpha$, are in fact the solutions \cite{DuitsBurgeth2007}
\begin{equation}
\label{eq:timeIntegratedDiffusion}
t = \underset{t \in \mathbb{L}_2(\mathbb{R}^2)}{\operatorname{argmin}} \left[ \lVert t - t_0 \rVert^2_{\mathbb{L}_2(\mathbb{R}^2)} + \lambda \int_{\mathbb{R}^2} \lVert \nabla t (\tilde{\mathbf{x}}) \rVert^2 \, {\rm d}\tilde{\mathbf{x}} \right],
\end{equation}
with $\lambda = \alpha^{-1}$. Such time integrated diffusions (Eq.~(\ref{eq:timeIntegratedDiffusion})) can also be obtained by optimization of the linear regression functionals given by Eq.~(\ref{eq:energyR2LinearRegression}) and Eq.~(\ref{eq:linearFunctionalSE2}) for the $\mathbb{R}^2$ and $SE(2)$ case respectively.

In the supplementary materials we establish this connection for the $SE(2)$ case, and show how the smoothing regularizer in (\ref{eq:energySE2LinearRegression}) and (\ref{eq:energySE2LogisticRegression}) relates to Laplace transforms of hypo-elliptic diffusions on the group $SE(2)$ \cite{ZhangDuits2014,Duits2010}.
More precisely, we formulate a special case of our problem (the \emph{single patch problem}) which involves only a single training sample $U_{f_1}$, and show in a formal theorem that the solution is up to scalar multiplication the same as the resolvent hypo-elliptic diffusion kernel. The underlying probabilistic interpretation is that of Brownian motions on $SE(2)$, where the resolvent hypo-elliptic diffusion kernel gives a probability density of finding a random brush stroke at location $\mathbf{x}$ with orientation $\theta$, given that a `drunkman's pencil' starts at the origin at time zero.

In the supplementary materials we demonstrate the high accuracy of our discrete numeric regression method using B-spline expansions with near to perfect comparisons to the continuous exact solutions of the single patch problem.
In fact, we have established an efficient B-spline finite element implementation of hypo-elliptic Brownian motions on $SE(2)$, in addition to other numerical approaches in \cite{ZhangDuits2014}.

\section{Applications}
\label{sec:applications}

Our applications of interest are in retinal image analysis. In this section we establish and validate an algorithm pipeline for the detection of the optic nerve head (Subsec. \ref{subsec:ONHDetection}) and fovea (Subsec. \ref{subsec:FoveaDetection}) in retinal images, and the pupil (Subsec.~\ref{subsec:PupilDetection}) in regular camera images. Before we proceed to the application sections, we first describe the experimental set-up (Subsec.~\ref{subsec:detailsExperiments}). All experiments discussed in this section are reproducible; the data (with annotations) as well as the full code (Wolfram \emph{Mathematica} notebooks) used in the experiments are made available at: \url{http://erikbekkers.bitbucket.org/TMSE2.html}.
In the upcoming sections we only report the most relevant experimental results. More details on each application (examples of training samples, implementation details, a discussion on parameter settings, computation times, and examples of successful/failed detections) are provided in the supplementary materials.


\subsection{The experimental set-up}
\label{subsec:detailsExperiments}
\textbf{\emph{Templates.}}
\label{subsubsec:templateDetails}
In our experiments we compare the performance of different template types,
which we label as follows:
\begin{itemize}
\item[$A$:] Templates obtained by taking the average of all positive patches ($y_i=1$) in the training set,
then normalized to zero mean and unit standard deviation.
\item[$B$:] Templates optimized without any regularization.
\item[$C$:] Templates optimized with an optimal $\mu$, and with $\lambda = 0$.
\item[$D$:] Templates optimized with an optimal $\lambda$ and with $\mu = 0$.
\item[$E$:] Templates optimized with optimal $\mu$ and $\lambda$.
\end{itemize}
The trained templates ($B$-$E$) are obtained either via linear or logistic regression in the $\mathbb{R}^2$ setting (see Subsec. \ref{ch:linearregression} and Subsec. \ref{ch:log}), or in the $SE(2)$ setting (see Subsec. \ref{ch:linearRegressionSE2} and Subsec. \ref{ch:logSE2}). In both the $\mathbb{R}^2$ and $SE(2)$ case, linear regression based templates are indicated with subscript ${}_{lin}$, and logistic regression based templates with ${}_{log}$. Optimality of parameter values is defined using generalized cross validation (GCV), which we soon explain in this section. We generally found that (via optimization using GCV) the optimal settings for template $E$ were $\mu \approx 0.5\mu^*$, and $\lambda \approx 0.5\lambda^*$, with $\mu^*$ and $\lambda^*$ respectively the optimal parameters for template $C$ and $D$.

\textbf{\emph{Matching with Multiple Templates.}}
When performing template matching, we use Eq.~(\ref{eq:linearFunctional}) and Eq.~(\ref{eq:linearFunctionalSE2}) for respectively the $\mathbb{R}^2$ and $SE(2)$ case for templates obtained via linear regression and for template $A$. For templates obtained via logistic regression we use respectively Eq.~(\ref{eq:logisticFunctional}) and Eq.~(\ref{eq:logisticFunctionalSE2}). When we combine multiple templates we simply add the objective functionals. E.g, when combining template $C_{lin:\mathbb{R}^2}$ and $D_{log:SE(2)}$ we solve the problem
$$
\mathbf{x}^* = \underset{\mathbf{x} \in \mathbb{R}^2 }{\operatorname{argmax}} \;\;P_{C_{lin}}^{\mathbb{R}^2}(\mathbf{x}) + P_{D_{log}}^{SE(2)}(\mathbf{x}),
$$
where $P_{C_{lin}}^{\mathbb{R}^2}(\mathbf{x})$ is the objective functional (see Eq.~(\ref{eq:linearFunctional})) obtained with template $C_{lin:\mathbb{R}^2}$, and $P_{D_{log}}^{SE(2)}(\mathbf{x})$ (see Eq.~(\ref{eq:logisticFunctionalSE2})) is obtained with template $D_{log:SE(2)}$.

\textbf{\emph{Rotation and Scale Invariance.}}
The proposed template matching scheme can adapted for rotation-scale invariant matching, this is discussed in Sec.~5 of the supplementary materials. For a generic object recognition task, however, global rotation or scale invariance are not necessarily desired properties. Datasets often contain objects in a human environment context, in which some objects tend to appear in specific orientations (e.g. eye-brows are often horizontal above the eye and vascular trees in the retina depart the ONH typically along a vertical axis).
Discarding such knowledge by introducing rotation/scale invariance is likely to have an adversary effect on the performance, while increasing computational load.
In Sec.~5 of the supplementary materials we tested a rotation/scale invariant adaptation of our method and show that in the three discussed applications this did indeed not lead to improved results, but in fact worsened the results slightly.

\textbf{\emph{Automatic Parameter Selection via Generalized Cross Validation.}}
\label{subsubsec:GCV}
An ideal template generalizes well to new data samples, meaning that it has low prediction error on independent data samples. One method to predict how well the system generalizes to new data is via generalized cross validation (GCV), which is essentially an approximation of leave-one-out cross validation \cite{Craven1979}.
The vector containing all predictions is given by $\tilde{\mathbf{y}} = S \mathbf{c}_{\mu,\lambda}$, in which we can substitute the solution for $\mathbf{c}_{\mu,\lambda}$ (from Eq.~(\ref{eq:minimizerR2LinearRegression})) to obtain
\begin{equation}
\begin{aligned}
\tilde{\mathbf{y}} &= A_{\mu,\lambda} \mathbf{y}, \;\;\;\;\;\;\;\;\;\;\;\;\text{with}\\
A_{\mu,\lambda}&= S (S^\dagger S + \lambda R + \mu I)^{-1} S^\dagger,
\end{aligned}
\end{equation}
where $A_{\mu,\lambda}$ is the so-called \emph{smoother matrix}. Then the generalized cross validation value \cite{Craven1979} is defined as
\begin{equation}
\label{eq:GCV}
GCV(\mu,\lambda) \equiv  \frac{ \frac{1}{N} \lVert \Omega  (I - A_{\mu,\lambda} ) \mathbf{y} \rVert^2}{ \left(1 - \operatorname{trace} (A_{\mu,\lambda} )/N\right )^2}.
\end{equation}
In the retinal imaging applications we set $\Omega = I$. In the pupil detection application we set $\Omega = \underset{i\in\{1,...,N\}}{\operatorname{diag}}\{y_i\}$.
As such, we do not penalize errors on negative samples as here the diversity of negative patches is too large for parameter optimization via GCV.
Parameter settings are considered optimal when they minimize the GCV value.

In literature various extensions of GCV are proposed for generalized linear models \cite{O'Sullivan1986,Gu1992,Xiang1996}. For logistic regression we use the approach by O'Sullivan et al. \cite{O'Sullivan1986}: we iterate the Newton-Raphson algorithm until convergence, then, at the final iteration we compute the GCV value on the quadratic approximation (Eq.~(\ref{eq:updateRule})).

\textbf{\emph{Success Rates.}}
Performance of the templates is evaluated using success rates. The success rate of a template is the percentage of images in which the target object was successfully localized.
In both optic nerve head (Subsec.~\ref{subsec:ONHDetection}) and fovea (Subsec.~\ref{subsec:FoveaDetection}) detection experiments, a successful detection is defined as such if the detected location $\mathbf{x}^*$ (Eq.~(\ref{eq:objectDetection})) lies within one optic disk radius distance to the actual location.
For pupil detection both the left and right eye need to be detected and we therefore use the following normalized error metric
\begin{equation}
\label{eq:normalizedError}
e=\frac{\operatorname{max}(d_{left},d_{right})}{w},
\end{equation}
in which $w$ is the (ground truth) distance between the left and right eye, and $d_{left}$ and $d_{right}$ are respectively the distances of detection locations to the left and right eye. 

\textbf{\emph{k-Fold Cross Validation.}}
For correct unbiased evaluation, none of the test images are used for training of the templates, nor are they used for parameter optimization. We perform $k$-fold cross validation: The complete dataset is randomly partitioned into $k$ subsets. Training (patch extraction, parameter optimization and template construction) is done using the data from $k-1$ subsets. Template matching is then performed on the remaining subset. This is done for all $k$ configurations with $k-1$ training subsets and one test subset, allowing us to compute the average performance (success rate) and standard deviation. We set $k=5$.

\subsection{Optic Nerve Head Detection in Retinal Images}
\label{subsec:ONHDetection}

\begin{figure*}
\begin{center}
\includegraphics[width=\linewidth]{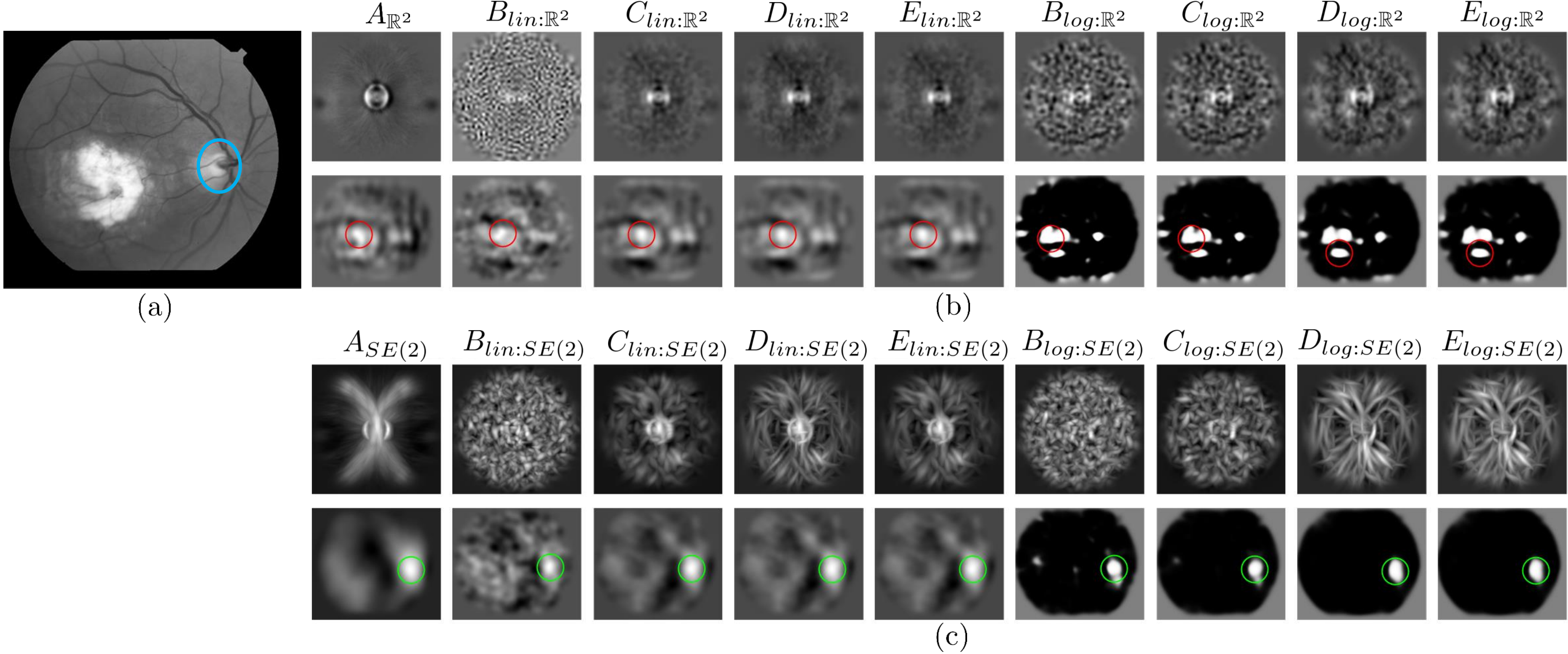}
\end{center}
\caption{Overview of trained templates for ONH detection, and their responses to a challenging retinal image. \textbf{(a)} The example input image with true ONH location in blue. \textbf{(b)} The $\mathbb{R}^2$-type templates (top row) and their responses to the input image (bottom row). \textbf{(c)} The maximum intensity projections (over $\theta$) of the $SE(2)$-type templates (top row) and their responses to the input image (bottom row). Detected ONH locations are indicated with colored circles (green = correct, red = incorrect).
}
\label{fig:ODTemplates}
\end{figure*}

\newlength{\smallspacing}
\setlength{\smallspacing}{1.5mm}
\definecolor{rowcolor}{gray}{0.85}

\begin{table*}
\centering
\caption{Average template matching results ($\pm$ standard deviation) for optic nerve head detection in 5-fold cross validation, number of failed detections in parentheses.}
\begin{tabular}{l|lllll|l}
\toprule
\multicolumn{1}{l|}{Template} & ES (SLO) & TC & MESSIDOR & DRIVE & STARE & All Images\\
ID & 208 & 208 & 1200 & 40 & 81 & 1737\\

\midrule
\multicolumn{7}{c}{{$\mathbb{R}^2$ templates}}\\
\midrule
  $A_{\mathbb{R}^2}$ & 100.0\% {\tiny $\pm$ 0.00\%} (0)     & 99.49\% {\tiny $\pm$ 1.15\%} (1)     & 98.83\% {\tiny $\pm$ 0.56\%} (14)   & 96.36\% {\tiny $\pm$ 4.98\%} (2)        & 74.94\% {\tiny $\pm$ \;\;9.42\%} (20)    & \cellcolor{rowcolor}97.87\% {\tiny $\pm$ 0.52\%} (37) \vspace{\smallspacing}\\

  $B_{lin:\mathbb{R}^2}$ & 99.09\% {\tiny $\pm$ 2.03\%} (2)     & 20.35\% {\tiny $\pm$ 5.99\%} (165)     & \;\;9.67\% {\tiny $\pm$ 2.69\%} (1084)   & \;\;9.09\% {\tiny $\pm$ 12.86\%} \!\!(35)        & \;\;3.56\% {\tiny $\pm$ \;\;3.28\%} (78)    & 21.48\% {\tiny $\pm$ 2.16\%} (1364) \\

  $C_{lin:\mathbb{R}^2}$ & 99.55\% {\tiny $\pm$ 1.02\%} (1)     & 99.57\% {\tiny $\pm$ 0.97\%} (1)     & 98.33\% {\tiny $\pm$ 0.41\%} (20)   & 94.55\% {\tiny $\pm$ 8.13\%} (3)        & 66.96\% {\tiny $\pm$ \;\;16.65\%} \!\!(26)    & 97.07\% {\tiny $\pm$ 0.76\%} (51) \\

  $D_{lin:\mathbb{R}^2}$ & 99.55\% {\tiny $\pm$ 1.02\%} (1)     & 99.57\% {\tiny $\pm$ 0.97\%} (1)     & 98.42\% {\tiny $\pm$ 0.45\%} (19)   & 96.36\% {\tiny $\pm$ 4.98\%} (2)        & 67.53\% {\tiny $\pm$ \;\;17.80\%} \!\!(25)    & \cellcolor{rowcolor}97.24\% {\tiny $\pm$ 0.72\%} (48) \\

  $E_{lin:\mathbb{R}^2}$ & 99.55\% {\tiny $\pm$ 1.02\%} (1)     & 99.57\% {\tiny $\pm$ 0.97\%} (1)     & 98.33\% {\tiny $\pm$ 0.29\%} (20)   & 96.36\% {\tiny $\pm$ 4.98\%} (2)        & 66.90\% {\tiny $\pm$ \;\;19.25\%} \!\!(26)    & 97.12\% {\tiny $\pm$ 0.84\%} (50) \vspace{\smallspacing}\\

  $B_{log:\mathbb{R}^2}$ & \;\;4.36\% {\tiny $\pm$ 3.21\%} (199)     & \;\;4.59\% {\tiny $\pm$ 6.41\%} (199)     & \;\;3.17\% {\tiny $\pm$ 0.86\%} (1162)   & \;\;1.82\% {\tiny $\pm$ 4.07\%} (39)        & \;\;3.64\% {\tiny $\pm$ \;\;8.13\%} (79)    & \;\;3.40\% {\tiny $\pm$ 0.74\%} (1678) \\

  $C_{log:\mathbb{R}^2}$ & 68.69\% {\tiny $\pm$ 6.24\%} (65)     & 98.10\% {\tiny $\pm$ 2.00\%} (4)     & 97.75\% {\tiny $\pm$ 1.01\%} (27)   & 96.36\% {\tiny $\pm$ 4.98\%} (2)        & 66.94\% {\tiny $\pm$ \;\;16.43\%} \!\!(28)    & \cellcolor{rowcolor}92.74\% {\tiny $\pm$ 0.65\%} (126) \\

  $D_{log:\mathbb{R}^2}$ & 41.87\% {\tiny $\pm$ 6.81\%} (121)     & 97.60\% {\tiny $\pm$ 1.82\%} (5)     & 96.00\% {\tiny $\pm$ 1.59\%} (48)   & 91.01\% {\tiny $\pm$ 8.46\%} (4)        & 65.30\% {\tiny $\pm$ \;\;10.05\%} \!\!(28)    & 88.14\% {\tiny $\pm$ 1.21\%} (206) \\

  $E_{log:\mathbb{R}^2}$ & 58.68\% {\tiny $\pm$ 4.48\%} (86)     & 97.59\% {\tiny $\pm$ 2.48\%} (5)     & 97.33\% {\tiny $\pm$ 0.96\%} (32)   & 93.51\% {\tiny $\pm$ 9.00\%} (3)        & 67.88\% {\tiny $\pm$ \;\;12.61\%} \!\!(27)    & 91.20\% {\tiny $\pm$ 0.95\%} (153) \\

\midrule
\multicolumn{7}{c}{{$SE(2)$ templates}}\\
\midrule

  $A_{SE(2)}$ & 98.57\% {\tiny $\pm$ 2.16\%} (3)     & 98.95\% {\tiny $\pm$ 2.35\%} (2)     & 99.58\% {\tiny $\pm$ 0.30\%} (5)   & 98.18\% {\tiny $\pm$ 4.07\%} (1)        & 94.22\% {\tiny $\pm$ \;\;9.64\%} (5)    & \cellcolor{rowcolor}99.08\% {\tiny $\pm$ 0.75\%} (16)  \vspace{\smallspacing}\\

  $B_{lin:SE(2)}$ & 99.06\% {\tiny $\pm$ 1.29\%} (2)     & 94.75\% {\tiny $\pm$ 2.48\%} (11)     & 93.74\% {\tiny $\pm$ 1.80\%} (75)   & 92.05\% {\tiny $\pm$ 7.95\%} (4)        & 85.63\% {\tiny $\pm$ \;\;10.97\%} \!\!(12)    & 94.01\% {\tiny $\pm$ 0.89\%} (104) \\

  $C_{lin:SE(2)}$ & 99.06\% {\tiny $\pm$ 1.29\%} (2)     & 100.0\% {\tiny $\pm$ 0.00\%} (0)     & 100.0\% {\tiny $\pm$ 0.00\%} (0)   & 97.50\% {\tiny $\pm$ 5.59\%} (1)        & 94.00\% {\tiny $\pm$ \;\;6.17\%} (5)    & \cellcolor{rowcolor}99.54\% {\tiny $\pm$ 0.39\%} (8) \\

  $D_{lin:SE(2)}$ & 98.60\% {\tiny $\pm$ 2.05\%} (3)     & 100.0\% {\tiny $\pm$ 0.00\%} (0)     & 99.67\% {\tiny $\pm$ 0.46\%} (4)   & 100.0\% {\tiny $\pm$ 0.00\%} (0)        & 94.00\% {\tiny $\pm$ \;\;6.17\%} (5)    & 99.31\% {\tiny $\pm$ 0.44\%} (12) \\

  $E_{lin:SE(2)}$ & 98.60\% {\tiny $\pm$ 2.05\%} (3)     & 100.0\% {\tiny $\pm$ 0.00\%} (0)     & 99.67\% {\tiny $\pm$ 0.46\%} (4)   & 97.50\% {\tiny $\pm$ 5.59\%} (1)        & 95.11\% {\tiny $\pm$ \;\;5.48\%} (4)    & 99.31\% {\tiny $\pm$ 0.33\%} (12)  \vspace{\smallspacing}\\

  $B_{log:SE(2)}$ & 87.06\% {\tiny $\pm$ 4.20\%} (27)     & 77.68\% {\tiny $\pm$ 5.36\%} (46)     & 84.17\% {\tiny $\pm$ 2.25\%} (190)   & 80.19\% {\tiny $\pm$ 14.87\%} (9)        & 75.10\% {\tiny $\pm$ \;\;9.81\%} (21)    & 83.14\% {\tiny $\pm$ 1.78\%} (293) \\

  $C_{log:SE(2)}$ & 97.66\% {\tiny $\pm$ 2.79\%} (5)     & 99.52\% {\tiny $\pm$ 1.06\%} (1)     & 99.58\% {\tiny $\pm$ 0.42\%} (5)   & 98.18\% {\tiny $\pm$ 4.07\%} (1)        & 95.33\% {\tiny $\pm$ \;\;7.30\%} (4)    & \cellcolor{rowcolor}99.08\% {\tiny $\pm$ 0.13\%} (16) \\

  $D_{log:SE(2)}$ & 95.22\% {\tiny $\pm$ 3.78\%} (10)     & 98.50\% {\tiny $\pm$ 2.27\%} (3)     & 99.25\% {\tiny $\pm$ 0.19\%} (9)   & 98.18\% {\tiny $\pm$ 4.07\%} (1)        & 95.33\% {\tiny $\pm$ \;\;4.74\%} (4)    & 98.45\% {\tiny $\pm$ 0.38\%} (27) \\

  $E_{log:SE(2)}$ & 97.14\% {\tiny $\pm$ 2.61\%} (6)     & 99.52\% {\tiny $\pm$ 1.06\%} (1)     & 99.50\% {\tiny $\pm$ 0.35\%} (6)   & 98.18\% {\tiny $\pm$ 4.07\%} (1)        & 94.22\% {\tiny $\pm$ \;\;6.82\%} (5)    & 98.90\% {\tiny $\pm$ 0.48\%} (19) \\

\midrule
\multicolumn{7}{c}{{Template combinations (sorted on performance)}}\\
\midrule

  \tiny $A_{\mathbb{R}^2} \;\;\;\;\;\;\;\;\;\;\;\, + C_{log:SE(2)}$ & 100.0\% {\tiny $\pm$ 0.00\%} (0)     & 100.0\% {\tiny $\pm$ 0.00\%} (0)     & 99.92\% {\tiny $\pm$ 0.19\%} (1)   & 98.18\% {\tiny $\pm$ 4.07\%} (1)        & 98.67\% {\tiny $\pm$ \;\;2.98\%} (1)    & 99.83\% {\tiny $\pm$ 0.26\%} (3) \\

  \tiny $A_{\mathbb{R}^2} \;\;\;\;\;\;\;\;\;\;\;\, + E_{log:SE(2)}$ & 100.0\% {\tiny $\pm$ 0.00\%} (0)     & 100.0\% {\tiny $\pm$ 0.00\%} (0)     & 99.83\% {\tiny $\pm$ 0.23\%} (2)   & 98.18\% {\tiny $\pm$ 4.07\%} (1)        & 98.67\% {\tiny $\pm$ \;\;2.98\%} (1)    & 99.77\% {\tiny $\pm$ 0.24\%} (4) \\

  \tiny $A_{\mathbb{R}^2} \;\;\;\;\;\;\;\;\;\;\;\, + D_{log:SE(2)}$ & 100.0\% {\tiny $\pm$ 0.00\%} (0)     & 100.0\% {\tiny $\pm$ 0.00\%} (0)     & 99.83\% {\tiny $\pm$ 0.23\%} (2)   & 98.18\% {\tiny $\pm$ 4.07\%} (1)        & 98.67\% {\tiny $\pm$ \;\;2.98\%} (1)    & 99.77\% {\tiny $\pm$ 0.24\%} (4) \\

  \tiny $C_{lin:SE(2)} + E_{log:SE(2)}$ & 99.55\% {\tiny $\pm$ 1.02\%} (1)     & 100.0\% {\tiny $\pm$ 0.00\%} (0)     & 99.83\% {\tiny $\pm$ 0.23\%} (2)   & 100.0\% {\tiny $\pm$ 0.00\%} (0)        & 96.44\% {\tiny $\pm$ \;\;3.28\%} (3)    & 99.65\% {\tiny $\pm$ 0.13\%} (6) \\

  \tiny $C_{lin:SE(2)} + C_{log:SE(2)}$ & 99.55\% {\tiny $\pm$ 1.02\%} (1)     & 100.0\% {\tiny $\pm$ 0.00\%} (0)     & 99.92\% {\tiny $\pm$ 0.19\%} (1)   & 98.18\% {\tiny $\pm$ 4.07\%} (1)        & 96.44\% {\tiny $\pm$ \;\;3.28\%} (3)    & 99.65\% {\tiny $\pm$ 0.13\%} (6) \\

  \multicolumn{1}{l|}{{\;\;\;\;\;\;\;\;\;\;\;\;\;\;\;\;\,...}} & \multicolumn{5}{c|}{{...}} & \multicolumn{1}{c}{{...}}\\

  \hspace{-0.6em}$^*$\tiny $A_{SE(2)} \;\;\;\;\;\; +C_{lin:SE(2)}$ & 99.55\% {\tiny $\pm$ 1.02\%} (1)     & 100.0\% {\tiny $\pm$ 0.00\%} (0)     & 100.0\% {\tiny $\pm$ 0.00\%} (0)   & 98.18\% {\tiny $\pm$ 4.07\%} (1)        & 94.22\% {\tiny $\pm$ \;\;6.82\%} (5)    & 99.60\% {\tiny $\pm$ 0.26\%} (7) \\

  \multicolumn{1}{l|}{{\;\;\;\;\;\;\;\;\;\;\;\;\;\;\;\;\,...}} & \multicolumn{5}{c|}{{...}} & \multicolumn{1}{c}{{...}}\\

  \hspace{-0.5em}$^\dagger$\tiny $A_{\mathbb{R}^2} \;\;\;\;\;\;\;\;\;\;\;\; +A_{SE(2)}$ & 100.0\% {\tiny $\pm$ 0.00\%} (0)     & 100.0\% {\tiny $\pm$ 0.00\%} (0)     & 99.66\% {\tiny $\pm$ 0.35\%} (4)   & 98.18\% {\tiny $\pm$ 4.07\%} (1)        & 88.42\% {\tiny $\pm$ \;\;11.23\%} (9)    & 99.19\% {\tiny $\pm$ 0.63\%} (14) \\

  \multicolumn{1}{l|}{{\;\;\;\;\;\;\;\;\;\;\;\;\;\;\;\;\,...}} & \multicolumn{5}{c|}{{...}} & \multicolumn{1}{c}{{...}}\\
\bottomrule
\multicolumn{7}{l}{$^*$\emph{Best template combination that does not rely on logistic regression.} $^\dagger$\emph{Best template combination that does not rely on template optimization.}}
\end{tabular}
\label{tab:resultsONH}
\end{table*}

Our first application to retinal images is optic nerve head detection. The ONH is one of the key anatomical landmarks in the retina, and its location is often used as a reference point to define regions of interest for the analysis of the retina. The detection hereof is therefore an essential step in many automated retinal image analysis pipelines.

The ONH has two main characteristics: 1) it often appears as a bright disk-like structure on color fundus (CF) images (dark on SLO images), and 2) it is the place from which blood vessels leave the retina. Traditionally, methods have mainly focused on the first characteristic \cite{LuLim2011,Aquino2012,Dashtbozorg2015}. However, in case of bad contrast of the optic disk, or in the presence of pathology (especially bright lesions, see e.g. Fig.~\ref{fig:ODTemplates}), these methods typically fail. Most of the recent ONH detection methods therefore also include the vessel patterns in the analysis; either via explicit vessel segmentation \cite{Sekhar2011,Marin2015}, vessel density measures \cite{Yu2012,Giachetti2013}, or via additional orientation pattern matching steps \cite{Youssif2008}. In our method, both the appearance and vessel characteristics are addressed in an efficient integrated template matching approach, resulting in state-of-the-art performance both in terms of success rates and computation time. We target the first characteristic with template matching on $\mathbb{R}^2$. The second is targeted with template matching on $SE(2)$.

\subsubsection{Processing Pipeline \& Data}
\textbf{\emph{Processing Pipeline.}}
\label{subsubsec:processingPipeline}
%
First, the images are rescaled to a working resolution of 40 $\mu m/pix$. In our experiments the average resolution per dataset was determined using the average optic disk diameter (which is on average $1.84 mm$). The images are normalized for contrast and illumination variations using the method from \cite{Foracchia2005}. Finally, in order to put more emphasis on contextual/shape information, rather than pixel intensities, we apply a soft binarization to the locally normalized (cf.~Eq.~(31) in Ch.~3 of the supplementary materials) image ${f}$  via the mapping $\operatorname{erf}(8 {f})$.

For the orientation score transform we use $N_\theta = 12$ uniformly sampled orientations from $0$ to $\pi$ and lift the image using cake wavelets \cite{Duits2007a,Bekkers2014}. For phase-invariant, nonlinear, left-invariant \cite{Duits2010}, and contractive \cite{Bruna2013} processing on SE(2), we work with the modulus of the complex valued orientation scores rather than with the complex-valued scores themselves (taking the modulus of quadrature filter responses is an effective technique for line detection, see e.g. Freeman et al. \cite{Freeman1991}).

Due to differences in image characteristics, training and matching is done separately for the SLO and the color fundus images. For SLO images we use the near infrared channel, for RGB fundus images we use the green channel.

Positive training samples $f_i$ are defined as $N_x \times N_y$ patches, with  $N_x=N_y=251$, centered around true ONH location in each image. For every image, a negative sample is defined as an image patch centered around random location in the image that does not lie within one optic disk radius distance to the true ONH location. An exemplary ONH patch is given in Fig.~\ref{fig:odOS}. For the B-spline expansion of the templates we set $N_k = N_l = 51$ and $N_m = 12$.

\textbf{\emph{Data.}}
\label{subsubsec:dataONH}
In our experiments we made use of both publicly available data, and a private database. The private database consists of 208 SLO images taken with an EasyScan fundus camera (i-Optics B.V., the Netherlands) and 208 CF images taken with a Topcon NW200 (Topcon Corp., Japan). Both cameras were used to image both eyes of the same patient, taking an ONH centered image, and a fovea centered image per eye. The two sets of images are labeled as "ES" and "TC" respectively. The following (widely used) public databases are also used: MESSIDOR (\url{http://messidor.crihan.fr/index-en.php}), DRIVE (\url{http://www.isi.uu.nl/Research/Databases/DRIVE}) and STARE (\url{http://www.ces.clemson.edu/~ahoover/stare}), consisting of 1200, 40 and 81 images respectively. For each image, the circumference of the ONH was annotated, and parameterized by an ellipse. The annotations for the MESSIDOR dataset were kindly made available by the authors of \cite{Aquino2010} (\url{http://www.uhu.es/retinopathy}). The ONH contour in the remaining images were manually outlined by ourselves. The annotations are made available on our website. The images in the databases contain a mix of good quality healthy images, and challenging diabetic retinopathy cases. Especially MESSIDOR and STARE contain challenging images.

\subsubsection{Results and Discussion}
\emph{\textbf{The templates.}} The different templates for ONH detection are visualized in Fig.~\ref{fig:ODTemplates}. The $SE(2)$ templates are visualized using maximum intensity projections over $\theta$. In this figure we have also shown template responses to an example image. Visually one can clearly recognize the typical disk shape in the $\mathbb{R}^2$ templates, whereas the $SE(2)$ templates also seem to capture the typical pattern of outward radiating blood vessels (compare e.g. $A_{\mathbb{R}^2}$ with $A_{SE(2)}$). Indeed, when applied to a retinal image, where we took an example with an optic disk like pathology, we see that the $\mathbb{R}^2$ templates respond well to the disk shape, but also (more strongly) to the pathology. In contrast, the $SE(2)$ templates respond mainly to vessel pattern and ignore the pathology. We also see, as expected, a smoothing effect of gradient based regularization ($D$ and $E$) in comparison to standard $\mathbb{L}_2$-norm regularization ($C$) and no regularization ($B$). 
Finally, in comparison to linear regression templates, the logistic regression templates have a more binary response due to the logistic sigmoid mapping.

\begin{table}
\caption{Comparison to state of the art: Optic nerve head detection success rates, the number of fails (in parentheses), and computation times.}
\centering
\begin{tabular}{llll|l}
\toprule
Method                                              & MESSIDOR    & DRIVE       & STARE       & Time (s)      \\
\midrule
Lu {\tiny \cite{Lu2011} }                           & 99.8\% (3)  &             & 98.8\% (1)  & \;\;\;\;5.0   \\
Lu {\tiny et al. \cite{LuLim2011} }                 &             & 97.5\% (1)  & 96.3\% (3)  & \;\;40.0      \\
Yu {\tiny et al. \cite{Yu2012} }                    & 99.1\% (11) &             &             & \;\;\;\;4.7   \\
Aquino {\tiny et al. \cite{Aquino2012} }            & 99.8\% (14) &             &             & \;\;\;\;1.7   \\
Giachetti {\tiny et al. \cite{Giachetti2013} }      & 99.7\% (4)  &             &             & \;\;\;\;5.0   \\
Ramakanth {\tiny et al. \cite{Ramakanth2014} }      & 99.4\% (7)  & 100\% (0)   & 93.83\% (5) & \;\;\;\;0.2   \\
Marin {\tiny et al. \cite{Marin2015}}               & 99.8\% (3)  &             &             & \;\;\;\;5.4$^\dagger$  \\
Dashtbozorg {\tiny et al. \cite{Dashtbozorg2015}}   & 99.8\% (3)  &             &             & \;\;10.6$^\dagger$  \vspace{\smallspacing}\\
Proposed                                            & 99.9\% (1)  & 97.8\%  (1) & 98.8\%  (1) & \;\;\;\;0.5   \\
\bottomrule
\multicolumn{5}{l}{$^\dagger$\emph{Timings include simultaneous disk segmentation.}}\\
\end{tabular}%
\label{tab:stateOfTheArtONH}
\end{table}

\emph{\textbf{Detection results.}} Table \ref{tab:resultsONH} gives a breakdown of the quantitative results for the different databases used in the experiments. The templates are grouped in $\mathbb{R}^2$ templates, $SE(2)$ templates, and combination of templates. Within these groups, they are further divided in average, linear regression, and logistic regression templates. The best overall performance within each group is highlighted in gray.

Overall, we see that the $SE(2)$ templates out-perform their $\mathbb{R}^2$ equivalents, and that combinations of the two types of templates give best results. The two types are nicely complementary to each other due to disk-like sensitivity of the $\mathbb{R}^2$ templates and the vessel pattern sensitivity of the $SE(2)$ templates. If one of the two ONH characteristics is less obvious (as is e.g. for the disk-shape in Fig.~\ref{fig:ODTemplates}), the other can still be detected. Also, the failures of $\mathbb{R}^2$ templates are mainly due to either distracting pathologies in the retina, or poor contrast of the optic disk. As reflected by the increased performance of $SE(2)$ templates over $\mathbb{R}^2$ templates, a more stable pattern seems to be the vessel pattern.

From Table \ref{tab:resultsONH} we also deduce that the individual performances of the linear regression templates outperform the logistic regression templates. Moreover, the average templates give best individual performance, which indicates that with our effective template matching framework good performance can already be achieved with basic templates.
However, we also see that low performing individual templates can prove useful when combining templates.
In fact, we see that combinations with all linear $\mathbb{R}^2$ templates are highly ranked, and for the $SE(2)$ templates it is mainly the logistic regression templates. This can be explained by the binary nature of the logistic templates: even when the maximum response of the templates is at an incorrect location, the difference with the correct location is often small. The $\mathbb{R}^2$ template then adds to the sensitivity and precision.
The best results obtained with untrained templates was a $99.19\%$ success rate (14 fails), and with the overall best template combination we obtained a $99.83\%$ success rate (3 fails).

\emph{\textbf{State of the art.}} In Table \ref{tab:stateOfTheArtONH} we compare our results on the publicly available benchmark databases MESSIDOR, DRIVE and STARE, with the most recent methods for ONH detection (sorted from oldest to newest from top to bottom). In this comparison, our best performing method ($A_{\mathbb{R}^2}+C_{log:SE(2)}$) performs better than or equally well as the best methods from literature. We have also listed the computation times, and see that our method is also ranked as one of the fastest methods for ONH detection. The average computation time, using our experimental implementation in Wolfram \emph{Mathematica} 10.4, was $0.5$ seconds per image on a computer with an Intel Core i703612QM CPU and 8GB memory.
A full breakdown of timings of the processing pipeline is given in the supplementary materials Sec.~4.

\begin{figure*}
\begin{center}
\includegraphics[width=\linewidth]{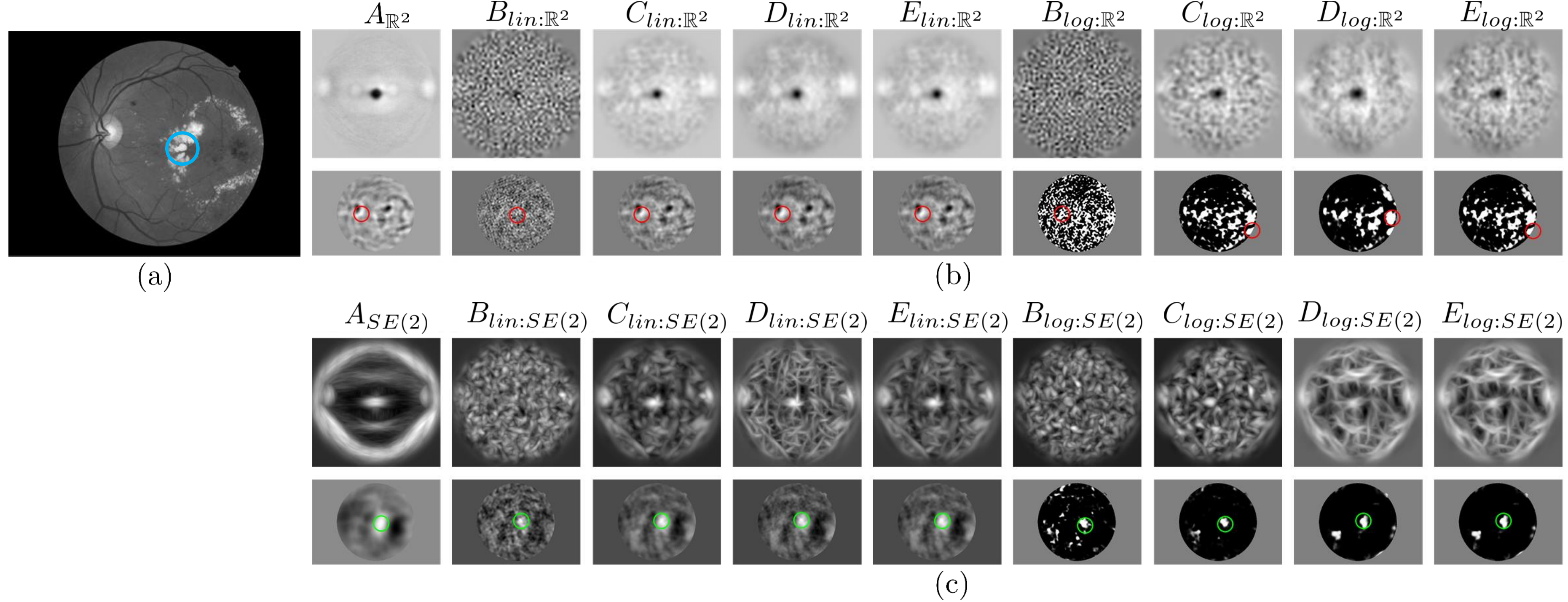}
\end{center}
\caption{Overview of trained templates for fovea detection, and their responses to a challenging retinal image.  {\textbf{(a)}} The example input image with true fovea location in blue.  \textbf{(b)} The $\mathbb{R}^2$-type templates (top row) and their responses to the input image (bottom row).  \textbf{(c)} The maximum intensity projections (over $\theta$) of the $SE(2)$-type templates (top row) and their responses to the input image (bottom row). Detected fovea locations are indicated with colored circles (green = correct, red = incorrect).
}
\label{fig:FoveaTemplates}
\end{figure*}

\subsection{Fovea Detection in Retinal Images}
\label{subsec:FoveaDetection}

Our second application to retinal images is for the detection of the fovea. The fovea is the location in the retina which is responsible for sharp central vision.
It is characterized by a small depression in thickness of the retina, and on healthy retinal images it often appears as a darkened area. Since the foveal area is responsible for detailed vision, this area is weighted most heavily in grading schemes that describe the severity of a disease.
Therefore, correct localization of the fovea is essential in automatic grading systems \cite{Abramoff2015}.

Methods for the detection of the fovea heavily rely on contextual features in the retina \cite{Aquino2014,Giachetti2013,GegundezArias2013,Yu2011,Niemeijer2009}, and take into account the prior knowledge that 1) the fovea is located approximately 2.5 optic disk diameters lateral to the ONH center, that 2) it lies within an avascular zone, and that 3) it is surrounded by the main vessel arcades. All of these methods restrict their search region for the fovea location to a region relative to the (automatically detected) ONH location. To the best of our knowledge, the proposed detection pipeline is the first that is completely independent of vessel segmentations and ONH detection. This is made possible due to the fact that anatomical reference patterns, in particular the vessel structures, are generically incorporated in the learned templates via data representations in orientation scores. 

\subsubsection{Processing Pipeline \& Data}
\textbf{\emph{Processing Pipeline.}}
The proposed fovea detection pipeline is the same as for ONH detection, however, now the positive training samples $f_i$ are centered around the fovea.

\textbf{\emph{Data.}}
The proposed fovea detection method is validated on our (annotated) databases ``ES'' and ``TC'', each consisting of 208 SLO and 208 color fundus images respectively (cf. Subsec.\ref{subsubsec:dataONH}). We further test our method on the most used publicly available benchmark dataset MESSIDOR (1200 images). Success rates were computed based on the fovea annotations kindly made available by the authors of \cite{GegundezArias2013}.

\subsubsection{Results and Discussion}
\emph{\textbf{The templates.}} Akin to Fig.~\ref{fig:ODTemplates}, in Fig.~\ref{fig:FoveaTemplates} the trained fovea templates and their responses to an input image are visualized. The $\mathbb{R}^2$ templates seem to be more tuned towards the dark (isotropic) blob like appearance of the fovea, whereas in the $SE(2)$ templates one can also recognize the pattern of vessels surrounding the fovea (compare $A_{\mathbb{R}^2}$ with $A_{SE(2)}$). To illustrate the difference between these type of templates, we selected an image in which the fovea location is occluded with bright lesions. In this case the method has to rely on contextual information (e.g. the blood vessels). Indeed, we see that the $\mathbb{R}^2$ templates fail due to the absence of a clear foveal blob shape, and that the $SE(2)$ templates correctly identify the fovea location. The effect of regularization is also clearly visible; no regularization ($B$) results in noisy templates, standard $\mathbb{L}_2$ regularization ($C$) results in more stable templates, and smoothed regularization ($D$ and $E$) results in smooth templates. 
In templates $D_{SE(2)}$ and $E_{SE(2)}$ we see that more emphasis is put on line structures.

\emph{\textbf{Detection results.}} A full overview of individual and combined template performance is discussed in the supplementary materials, here we only provide a summary. Again there is an improvement using $SE(2)$ templates over $\mathbb{R}^2$ templates, although the difference is smaller than in the ONH application.
Apparently both the dark blob-like appearance ($\mathbb{R}^2$ templates) and vessel patterns ($SE(2)$ templates) are equally reliable features of the fovea. A combination of templates leads to improved results and we conclude that the templates are again complementary to each other. Furthermore, again linear regression performs better than logistic regression.
In fovea detection
we do observe a large improvement of template training over basic averaging: 1529 of 1616 ($94.6\%$) successful detections with $C_{lin:SE(2)}$ versus 1488 ($92.1\%$) with $A_{SE(2)}$. The best performing $\mathbb{R}^2$ template was $A_{\mathbb{R}^2}$ ($65.6\%$), the best $SE(2)$ template was $C_{lin:SE(2)}$ ($94.6\%$). The best combination of templates was $C_{lin:\mathbb{R}^2}+C_{log:SE(2)}$ with 1605 ($99.3\%$) detections. When using non-optimized templates 
1588 ($98.3\%$) successful detections were achieved (with $A_{\mathbb{R}^2}+A_{SE(2)}$).

\emph{\textbf{State of the art.}} In Table \ref{tab:stateOfTheArtFovea} we compared our results on the publicly available benchmark database MESSIDOR with the most recent methods for fovea detection (sorted from oldest to newest from top to bottom). In this comparison, our best performing method ($C_{lin:\mathbb{R}^2}+C_{log:SE(2)}$) quite significantly outperforms the best methods from literature. Furthermore, our detection pipeline is also the most efficient one; the computation time for fovea detection is the same as for ONH detection, which is $0.5$ seconds.

\begin{table}
\caption{Comparison to state of the art: Fovea detection success rates, the number of fails (in parentheses), and computation times.}
\centering
\begin{tabular}{ll|l}
\toprule
Method                                                            & MESSIDOR  & Time (s)   \\
\midrule
Niemeijer {\tiny et al. \cite{Niemeijer2009,GegundezArias2013} }  &   97.9\% \;\;(25)   & \;\;\;\;7.6$^{\dagger}$\\
Yu {\tiny et al. \cite{Yu2011} }                                  &   95.0\%$^*$ (60)   & \;\;\;\;3.9$^{\dagger}$\\
Gegundez-Arias {\tiny et al. \cite{GegundezArias2013}}            &   96.9\% \;\;(37)   & \;\;\;\;0.9   \\
Giachetti {\tiny et al. \cite{Giachetti2013}}                     &   99.1\% \;\;(11)   & \;\;\;\;5.0$^{\dagger}$  \\
Aquino {\tiny \cite{Aquino2014}}                                  &   98.2\% \;\;(21)   & \;\;10.9$^{\dagger}$  \vspace{\smallspacing}\\
Proposed                                                          &   99.7\% \;\;(3)    & \;\;\;\;0.5   \\
\bottomrule
\multicolumn{3}{l}{$^*$\emph{Success-criterion based on half optic radius.}}\\
\multicolumn{3}{l}{$^\dagger$\emph{Timing includes ONH detection.}}\\
\end{tabular}%
\label{tab:stateOfTheArtFovea}
\end{table}

\begin{figure*}
\begin{center}
\includegraphics[width=\linewidth]{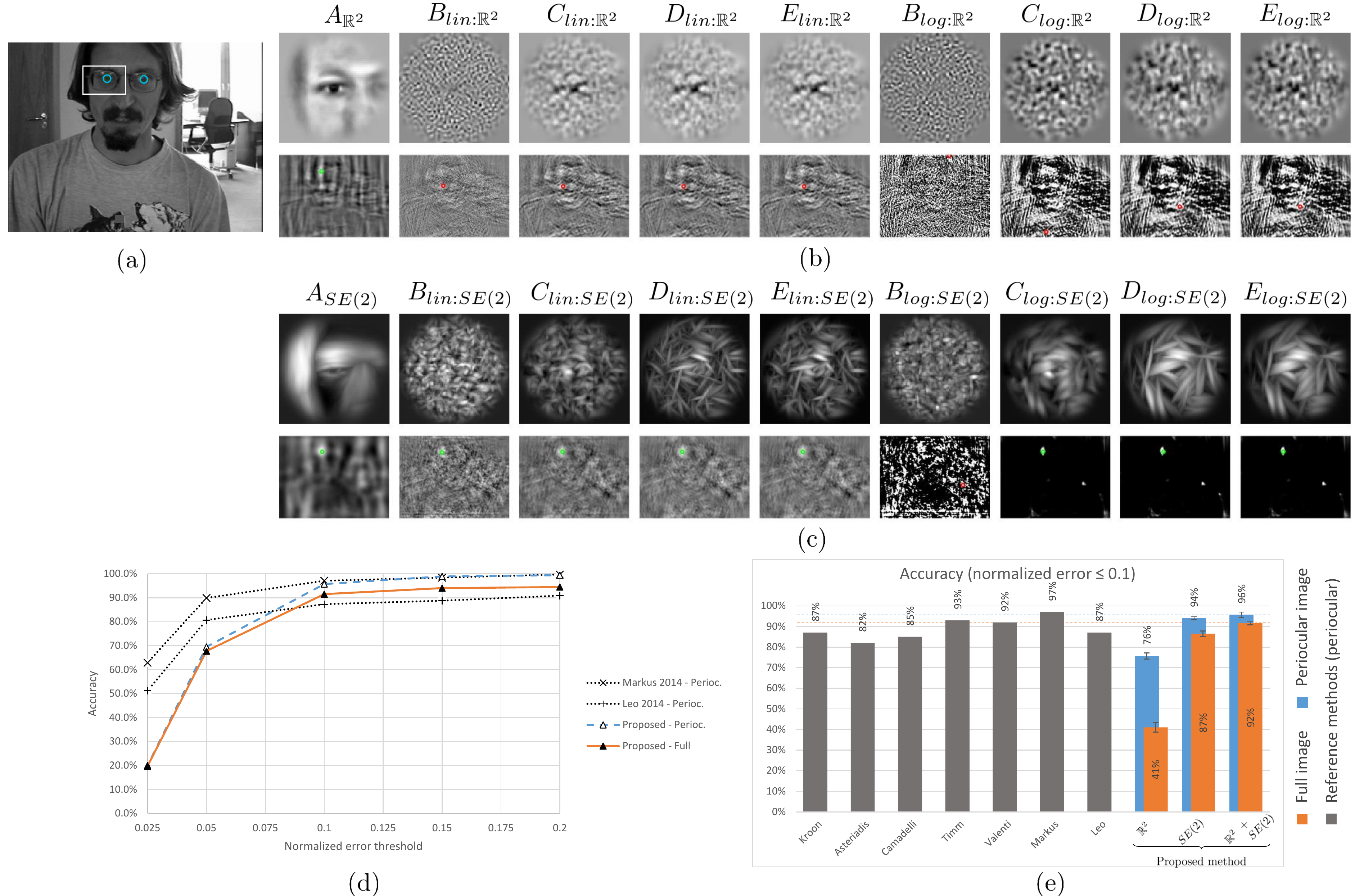}
\end{center}
\caption{Overview of trained templates for right-eye pupil detection, and their responses to a challenging image from the BioID database. \textbf{(a)} The example input image with true pupil locations (blue circle with a radius that corresponds to a normalized error threshold of 0.1, see Eq.~\ref{eq:normalizedError}. The white square indicates the periocular image region for the right eye. \textbf{(b)} The $\mathbb{R}^2$-type templates (top row) and their responses to the input image (bottom row). \textbf{(c)} The maximum intensity projections (over $\theta$) of the $SE(2)$-type templates (top row) and their responses to the input image (bottom row). Detected pupil locations are indicated with colored circles (green = correct, red = incorrect, based on a normalized error threshold of 0.1). \textbf{(d)} Accuracy curves generated by varying thresholds on the normalized error, in comparison with the two most recent methods from literature. \textbf{(e)} Accuracy (at a normalized error threshold of 0.1) comparison with pupil detection methods from literature.
}
\label{fig:PupilTemplates}
\end{figure*}

\subsection{Pupil Detection}
\label{subsec:PupilDetection}
Our third application is that of pupil localization in regular camera images, which is relevant in many applications as they provide important visual cues for face detection, face recognition, and understanding of facial expressions. In particular in gaze estimation the accurate localization of the pupil is essential. Eye detection and tracking is however challenging due to, amongst others: occlusion by the eyelids and variability in size, shape, reflectivity or head pose.


Many pupil localization algorithms are designed to work on periocular images, these are close-up views of the eyes. Such images can be acquired by dedicated eye imaging devices, or by means of cropping a full facial image (see Fig.~\ref{fig:PupilTemplates}(a)).
We will consider both the problem of detection pupils in periocular images and the more difficult problem of detection in full images.

We compare our method against the seven most recent pupil detection methods from literature, for a full overview see \cite{Leo2014} and \cite{Markus2014}. A method similar to our $\mathbb{R}^2$ approach in the sense that it is also based on 2D linear filtering is the method by Kroon et al. \cite{Kroon2008}. In their paper templates are obtained via linear discriminant analysis of pupil images. Asteriada et al. \cite{Asteriadis2009} detect the pupil by matching templates using features that are based on distances to the nearest strong (facial) edges in the image. Campadelli et al. \cite{Campadelli2009} use a supervised approach with a SVM classifier and Haar wavelet features. The method by Timm et al. \cite{Timm2011} is based on searching for gradient fields with a circular symmetry. Valenti et al. \cite{Valenti2012} use a similar approach but additionally include information of isophote curvature, with supervised refinement. Markus et al. \cite{Markus2014} employ a supervised approach using an ensemble of randomized regression trees. Leo et al. \cite{Leo2014} employ a completely unsupervised approach similar to those in \cite{Timm2011,Valenti2012}, but additionally include analysis of self-similarity.

A relevant remark is that all of the above mentioned methods rely on prior face detection, and restrict their search region to periocular images. Our method works completely stand alone, and can be used on full images.

\subsubsection{Processing Pipeline \& Data}
\textbf{\emph{Processing Pipeline.}}
Interestingly, we could again employ the same processing pipeline (including local normalization via \cite{Foracchia2005}) which was used for ONH and fovea detection. In our experiments we train templates for the left and right eye separately. 

\textbf{\emph{Data.}}
We validated our pupil detection approach on the publicly available BioID database (\url{http://www.bioid.com}), which is generally considered as one of the most challenging and realistic databases for pupil detection in facial images. The database consists of $1521$ frontal face grayscale images with significant variation in illumination, scale and pose.

\subsubsection{Results and Discussion}
\emph{\textbf{The templates.}} Fig.~\ref{fig:PupilTemplates}(b) and (c) show respectively the trained $\mathbb{R}^2$ and $SE(2)$ templates for pupil detection of the right eye, and their filtering response to the input image in Fig.~\ref{fig:PupilTemplates}(a). Here the trained $\mathbb{R}^2$ templates seemed to capture the pupil as a small blob in the center of the template, but apart from that no real structure can be observed. In the average template we do however clearly see structure in the form of an ``average face''. The $SE(2)$ templates reveal structures that resemble the eyelids in nearly all templates. The linear regression templates look sharper and seem to contain more detail than the average template, and the logistic regression templates seem to take a good compromise between course features and details.

\emph{\textbf{Detection results.}} We again refer to the supplementary materials for a full benchmarking analysis, in summary we observed the following. In terms of success rates we see a similar pattern as with the ONH and fovea application, however, here we see that the learned templates ($C$,$D$ and $E$) significantly outperform the average templates, and that logistic regression leads to better templates than using linear regression (94.0\% success rate for $C_{log:SE(2)}$ vs 87.2\% for $D_{lin:SE(2)}$). Overall, the $SE(2)$ templates outperform the $\mathbb{R}^2$ templates, linear regression templates outperform the average template, and logistic regression templates outperform linear regression templates.
The best $\mathbb{R}^2$ template was $D_{lin:\mathbb{R}^2}$ with 1151 of 1521 detections ($75.7\%$), the best $SE(2)$ template was $C_{log:SE(2)}$ ($94.0\%$). The best combination of templates was $D_{lin:\mathbb{R}^2}$ with $E_{lin:SE(2)}$ ($95.6\%$). Without template training (i.e., using average templates $A$) the performance was only $68.2\%$. Success rates 
using the best template combination are given in Fig.~\ref{fig:PupilTemplates}(d) and (e). The processing time for detecting both pupils simultaneously was on average 0.4 seconds per image.

\emph{\textbf{State of the art.}} In Fig.~\ref{fig:PupilTemplates}(d) we compared our approach to the two most recent pupil detection methods from literature for several normalized error thresholds. Here we see that with allowed errors of $0.1$ (blue circles Fig.~\ref{fig:PupilTemplates}(a)) and higher our method competes very well with the state of the art, despite the fact that our generic method is not adapted to the application. Further application specific tuning and preprocessing could be applied to improve precision (for $e \ll 0.1$), but this is beyond the scope of this article. Moreover, we see that our method can be used on full images instead of the periocular images without much loss in performance.
The fact that our method is still very accurate on full image processing
shows that it can be used as a preprocessing step for other applications.

If Fig.~\ref{fig:PupilTemplates}(e) we compared our approach to the seven most recent methods from literature (sorted from old to new). Here we see that the only method outperforming our method, at standard accuracy requirements ($e\leq0.1$), is the method by Markus et al. \cite{Markus2014}. Even when considering processing of the full images the only other method that outperforms ours is the method by Timm et al. \cite{Timm2011}, whose performance is measured using periocular images.

\subsection{General Observations}
\label{subsec:GeneralDiscussion}
The application of our method to the three problems (ONH, fovea and pupil detection) showed the following:
\begin{enumerate}
\item State-of-the-art performance was achieved on three different applications, using a single (generic) detection framework and without application specific parameter adaptations.
\item Cross correlation based template matching via data representations on $SE(2)$ improves results over standard $\mathbb{R}^2$ filtering.
\item Trained templates, obtained using energy functionals of the form (\ref{eq:energyGeneric}), often perform better than basic average templates. In particular in pupil detection the optimization of templates proved to be essential.
\item Our newly introduced logistic regression approach leads to improved results in pupil detection via single templates. When combining templates we observe only a small improvement of choosing logistic regression (instead of linear regression) for the application of ONH and fovea detection.
\item Regularization in both linear and logistic regression is important. Here both ridge and smoothing regularization priors have complementary benefits.
\item Our method does not rely on any other detection systems (such as ONH detection in the fovea application, or face detection in the pupil detection), and still performs well compared to methods that do.
\item Our method is fast and parallelizable as it is based on inner products, as such it could be efficiently implemented using convolutions.
\end{enumerate}

\color{black}

\section{Conclusion}
\label{sec:discussionAndConclusion}





In this paper we have presented an efficient cross-correlation based template matching scheme for the detection of combined orientation and blob patterns. Furthermore, we have provided a generalized regression framework for the construction of templates. The method relies on data representations in orientation scores, which are functions on the Lie group $SE(2)$, and we have provided the tools for proper smoothing priors via resolvent hypo-elliptic diffusion processes on $SE(2)$ (solving time-integrated hypo-elliptic Brownian motions on $SE(2)$). The strength of the method was demonstrated with two applications in retinal image analysis (the detection of the optic nerve head (ONH), and the detection of the fovea) and additional experiments for pupil detection in regular camera images. In the retinal applications we achieved state-of-the-art results with an average detection rate of $99.83\%$ on $1737$ images for ONH detection, and $99.32\%$ on $1616$ images for fovea detection.
Also on pupil detection we obtained state-of-the-art performance with a $95.86\%$ success rate on 1521 images. We showed that the success of the method is due to the inclusion of both intensity and orientation features in template matching. The method is also computationally efficient as it is entirely based on a sequence of convolutions (which can be efficiently done using fast Fourier transforms). These convolutions are parallelizable, which can further speed up our already fast experimental \emph{Mathematica} implementations that are publicly available at \url{http://erikbekkers.bitbucket.org/TMSE2.html}. In future work we plan to investigate the applicability of smoothing on $SE(2)$ in variational settings, as this could also be used in (sparse) line enhancement and segmentation problems.



\ifCLASSOPTIONcompsoc
  \section*{Acknowledgments}
\else
  \section*{Acknowledgment}
\fi

The authors would like to thank the groups that kindly made available the benchmark datasets and annotations.
The authors gratefully acknowledge Gonzalo Sanguinetti (TU/e) for fruitful discussions and feedback on this manuscript. The research leading to the results of this article has received funding from the European Research Council under the European Community's 7th Framework Programme (FP7/2007–2014)/ERC grant agreement No. 335555. This work is also part of the H\'{e} Programme of Innovation Cooperation, which is (partly) financed by the Netherlands Organisation for Scientific Research (NWO).

\clearpage

\appendices


\section{Probabilistic Interpretation of the Smoothing Prior in $SE(2)$}
\label{sec:stochasticProcess}

In this section we relate the $SE(2)$ smoothing prior to time resolvent hypo-elliptic\footnote{This diffusion process on $SE(2)$ is called \emph{hypo-elliptic} as its generator equals $(\partial_\xi)^2 + D_{\theta\theta}(\partial_\theta)^2$ and diffuses only in 2 directions in a 3D space. This boils down to a sub-Riemannian manifold structure \cite{Citti2006,ZhangDuits2014}. Smoothing in the missing $(\partial_\eta)$ direction is achieved via the commutator: $[ \partial_\theta, \cos \theta \partial_x + \sin \theta \partial_y ] = -\sin \theta \partial_x + \cos \theta \partial_y$.} diffusion processes on $SE(2)$. First we aim to familiarize the reader with the concept of resolvent diffusions on $\mathbb{R}^2$ in Subsec.~\ref{subsec:resolventDiffusions}. Then we pose in Subsec.~\ref{subsec:toyProblem} a new problem (the single patch problem), which is a special case of our $SE(2)$ linear regression problem, that we use to link the left-invariant regularizer to tue resolvents of hypo-elliptic diffusions on $SE(2)$.

\subsection{Resolvent Diffusion Processes}
\label{subsec:resolventDiffusions}
A classic approach to noise suppression in images is via diffusion regularizations with PDE's of the form \cite{DuitsBurgeth2007}
\begin{equation}
\label{eq:pdeDiffusion}
\left\{
\begin{array}{cl}
\tfrac{\partial}{\partial \tau}u &= \Delta u,\\
u |_{\tau=0} &= u_0,
\end{array}
\right.
\end{equation}
where $\Delta$ denotes the Laplace operator. Solving (\ref{eq:pdeDiffusion}) for any diffusion time $\tau>0$ gives a smoothed version of the input $u_0$. The time-resolvent process of the PDE is defined by the Laplace transform with respect to $\tau$; time $\tau$ is integrated out using a memoryless negative exponential distribution $P(\mathcal{T}=\tau)=\alpha e^{-\alpha \tau}$. Then, the time integrated solutions
$$
t(\mathbf{x}) = \alpha \int_0^\infty u(\mathbf{x},\tau) e^{-\alpha \tau} {\rm d}\tau,
$$
with decay parameter $\alpha$, are in fact the solutions 
\begin{equation}
t = \underset{t \in \mathbb{L}_2(\mathbb{R}^2)}{\operatorname{argmin}} \left[ \lVert t - t_0 \rVert^2_{\mathbb{L}_2(\mathbb{R}^2)} + \lambda \int_{\mathbb{R}^2} \lVert \nabla t (\tilde{\mathbf{x}}) \rVert^2 \, {\rm d}\tilde{\mathbf{x}} \right],
\end{equation}
with $\lambda = \alpha^{-1}$, and corresponding Euler-Lagrange equation
\begin{equation}
(I - \lambda \Delta) t = t_0 \;\;\; \Leftrightarrow \;\;\; t = \lambda^{-1} \left( \frac{1}{\lambda} - \Delta \right)^{-1} t_0,
\end{equation}
to which we refer as the ``resolvent'' equation \cite{Yosida1995}, as it involves operator $(\alpha I - \Delta)^{-1}$, $\alpha = \lambda^{-1}$. In the next subsections, we follow a similar procedure with $SE(2)$ instead of $\mathbb{R}^2$, and show how the smoothing regularizer in Eq.~(28) and (30) of the main article relates to Laplace transforms of hypo-elliptic diffusions on the group $SE(2)$ \cite{ZhangDuits2014,Duits2010}.

\subsection{The Fundamental Single Patch Problem}
\label{subsec:toyProblem}
In order to grasp what the (anisotropic regularization term) in Eq. (28) and (30) of the main article actually means in terms of stochastic interpretation/probabilistic line propagation, let us consider the following single patch problem and optimize
\begin{multline}\label{min}
\mathcal{E}_{sp}(T) =
\left| \left( G_s *_{\mathbb{R}^2} T(\cdot,\cdot,\theta_0) \right)(\mathbf{x}_0) - 1 \right|^2 \\
+ \lambda \int_{\mathbb{R}^2} \int_0^{2\pi} \lVert \nabla T(\tilde{\mathbf{x}},\tilde{\theta}) \rVert^2_{D} {\rm d}\tilde{\mathbf{x}}{\rm d}\tilde{\theta}
+ \mu \lVert T \rVert^2_{\mathbb{L}_2(SE(2))},
\end{multline}
with $(\mathbf{x}_0,\theta_0)=g_0:=(x_0,y_0,\theta_0) \in SE(2)$ the fixed center of the template, and with spatial Gaussian kernel
$$
G_s(\mathbf{x}) = \frac{1}{4 \pi s} e^{-\frac{\lVert \mathbf{x} \rVert^2}{4 s}}.
$$
Regarding this problem, we note the following:
\begin{itemize}
\item In the original problem (28) of the main article we take $N=1$, with
\begin{equation}
\label{eq:uf1}
U_{f_1}(x,y,\theta)=G_s(x-x_0,y-y_0) \, \delta_{\theta_0}(\theta)
\end{equation}
representing a local spatially smoothed spike in $SE(2)$, and set $y_1 = 1$. The general single patch case (for arbitrary $U_{f_1}$) can be deduced by superposition of such impulse responses.
\item We use $\mu>0$ to suppress the output elsewhere.
\item We use $0 < s \ll 1$. This minimum scale due to sampling removes the singularity at $(\mathbf{0},0)$ from the kernel that solves (\ref{min}), as proven in \cite{ZhangDuits2014}.
\end{itemize}

\begin{mythm}
\label{thm:1}
The solution to the single patch problem (\ref{min}) coincides up to scalar multiplication with the time integrated hypo-elliptic Brownian motion kernel on $SE(2)$ (depicted in Fig.~\ref{fig:stochasticEnhancement}).
\end{mythm}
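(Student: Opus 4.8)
The plan is to exploit convexity. The functional $\mathcal{E}_{sp}$ in (\ref{min}) is a strictly convex quadratic functional of $T \in \mathbb{L}_2(SE(2))$: the data term is a squared affine functional, the two regularizers are nonnegative quadratic forms, and $\mu>0$ makes the whole functional coercive, so it admits a unique minimizer characterized by a vanishing first variation. First I would rewrite the data term as an inner product. Using the representation (\ref{eq:uf1}) of $U_{f_1}$ as a smoothed spike and the evenness of $G_s$, one checks that $(G_s *_{\mathbb{R}^2} T(\cdot,\cdot,\theta_0))(\mathbf{x}_0) = (T, U_{f_1})_{\mathbb{L}_2(SE(2))}$, so the data term equals $|(T,U_{f_1}) - 1|^2$. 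This is exactly the single-patch specialization $N=1$, $y_1=1$ of the linear-regression energy (\ref{eq:energySE2LinearRegression}).

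Next I would compute the Gâteaux derivative of $\mathcal{E}_{sp}$ in an arbitrary direction $\phi$ and set it to zero. The data term contributes $((T,U_{f_1})-1)\,U_{f_1}$; the ridge term contributes $\mu T$; and the smoothing term, after integration by parts, contributes $-\lambda\,(D_{\xi\xi}\partial_\xi^2 + D_{\eta\eta}\partial_\eta^2 + D_{\theta\theta}\partial_\theta^2)T$. The integration by parts is legitimate because $SE(2)$ is unimodular, so the left-invariant vector fields $\partial_\xi,\partial_\eta,\partial_\theta$ are skew-adjoint with respect to the Haar measure $\mathrm{d}\mathbf{x}\,\mathrm{d}\theta$. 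With the stated choice $D_{\xi\xi}=1$, $D_{\eta\eta}=0$, the second-order part reduces to the hypo-elliptic generator $Q := \partial_\xi^2 + D_{\theta\theta}\,\partial_\theta^2$ of the footnote. Collecting terms, the Euler--Lagrange equation reads $(\mu I - \lambda Q)\,T = c\,U_{f_1}$, where $c := 1 - (T,U_{f_1})$ is a single real scalar.

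Then I would solve this resolvent equation and fix the scalar by self-consistency. Since $Q$ is self-adjoint and negative semidefinite, $\mu I - \lambda Q \succeq \mu I$ is boundedly invertible on $\mathbb{L}_2(SE(2))$, giving $T = c\,(\mu I - \lambda Q)^{-1}U_{f_1} = \tfrac{c}{\lambda}(\alpha I - Q)^{-1}U_{f_1}$ with $\alpha = \mu/\lambda$. Substituting back yields $c = (1 + ((\mu I - \lambda Q)^{-1}U_{f_1}, U_{f_1}))^{-1} > 0$, so the minimizer is a strictly positive multiple of $(\alpha I - Q)^{-1}U_{f_1}$. Finally, mirroring the $\mathbb{R}^2$ computation in Subsec.~\ref{subsec:resolventDiffusions}, I would identify the resolvent through the Laplace transform $\alpha(\alpha I - Q)^{-1} = \alpha\int_0^\infty e^{\tau Q}\,e^{-\alpha\tau}\,\mathrm{d}\tau$, so that $(\alpha I - Q)^{-1}U_{f_1}$ is precisely the time-integrated hypo-elliptic diffusion of the smoothed spike $U_{f_1}$. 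As $s \downarrow 0$ this spike tends to $\delta_{g_0}$, and the result is the time-integrated hypo-elliptic Brownian motion kernel centered at $g_0$, establishing the claim up to the positive scalar $c$.

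The main obstacle will be the hypo-ellipticity of $Q$: because $D_{\eta\eta}=0$, the operator diffuses only in the $\partial_\xi$ and $\partial_\theta$ directions and is genuinely not elliptic, so existence, uniqueness and regularity of the resolvent kernel do not follow from standard elliptic theory. This is resolved by Hörmander's condition --- the bracket $[\partial_\theta,\partial_\xi] = \partial_\eta$ of the footnote recovers the missing direction --- together with the results of \cite{ZhangDuits2014}, and by the use of a small but strictly positive scale $s>0$, which removes the singularity of the kernel at the origin and keeps $U_{f_1} \in \mathbb{L}_2(SE(2))$. These are exactly the ingredients already assumed in the problem setup preceding the theorem.
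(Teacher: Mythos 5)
Your proposal is correct and takes essentially the same route as the paper's own proof: the vanishing first variation of $\mathcal{E}_{sp}$ yields the resolvent equation $(\mu I - \lambda \Delta_{SE(2)})T = c\,U_{f_1}$ with $c = 1 - (T,U_{f_1})_{\mathbb{L}_2(SE(2))}$, which is inverted and identified, via the Laplace-transform representation of the resolvent, with the time-integrated hypo-elliptic diffusion kernel translated to $g_0$ — your inner-product formulation of the data term merely bypasses the paper's explicit computation of the adjoint $S_s^*$, which amounts to the same identity $S_s^* y = y\,U_{f_1}$. Your self-consistency step fixing the scalar, $c = \bigl(1 + ((\mu I - \lambda\Delta_{SE(2)})^{-1}U_{f_1},U_{f_1})\bigr)^{-1} > 0$, is a small refinement worth keeping: it shows the minimizer is a \emph{positive} multiple of the kernel with $(T,U_{f_1}) < 1$, whereas the paper divides by $1-T_0^s$ while asserting $T_0^s > 1$, an inequality that positivity of the resolvent operator in fact reverses.
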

\begin{proof}[\textbf{Proof}]
We optimize $\mathcal{E}_{sp}(T)$ over the set $\mathcal{S}(SE(2))$ of
all functions $T:SE(2) \rightarrow \mathbb{R}$ that are bounded and on $SE(2)$, infinitely differentiable on $SE(2) \setminus \{g_0\}$, and rapidly decreasing in spatial direction, and $2\pi$ periodic in $\theta$.
We omit topological details on function spaces and H\"{o}rmanders condition \cite{Hoermander1967}.
Instead, we directly proceed with applying the Euler-Lagrange technique to the single patch problem:
\begin{multline} \label{EL}
\forall_{\delta \in \mathcal{S}(SE(2))}: \underset{\epsilon \downarrow 0}{\operatorname{lim}} \left\{ \frac{\mathcal{E}_{sp}(T + \epsilon \delta) - \mathcal{E}_{sp}(T)}{\epsilon} \right\} = 0
\Leftrightarrow \\
(S^*_s S_s + \lambda R + \mu I) T = S^*_s y_1 = S^*_s 1,
\end{multline}
with linear functional (distribution) $\mathcal{S}_s$ given by
$$ (S_s T) = (G_{s} *_{\mathbb{R}^2} T(\cdot,\theta_0))(\mathbf{x}_0),$$
and with regularization operator $R$ given by
$$
R = -\Delta_{SE(2)} := - (D_{\theta\theta} \partial_\theta^2 + D_{\xi\xi}\partial_\xi^2 + D_{\eta\eta}\partial_\eta^2) \geq 0.
$$
Note that
$\lim \limits_{s \to 0} S_{s} = \delta_{(\mathbf{x}_0,\theta_0)}$ in distributional sense, and that the constraint $s>0$ is crucial for solutions $T$ to be bounded at $(\mathbf{x}_0,\theta_0)$. By definition the adjoint operator $S_s^*$ is given by
$$
\begin{array}{rl}
(S^*_s y, T)_{\tiny\mathbb{L}_2(SE(2))} \!\!\!\!\!\! & = (y, S_s T)
       = y \int_{\mathbb{R}^2} G_s(\mathbf{x}-\mathbf{x}_0) T(\mathbf{x},\theta_0)\, {\rm d}\mathbf{x}\\
      & = y \int\limits_0^{2\pi} \int\limits_{\mathbb{R}^2} G_s(\mathbf{x}-\mathbf{x}_0)\delta_{\theta_0}(\theta) T(\mathbf{x},\theta)\, {\rm d}\mathbf{x}{\rm d}\theta,\\
      & = (y \;G_s(\cdot-\mathbf{x}_0) \delta_{\theta_0}(\cdot),T )_{\tiny\mathbb{L}_2(SE(2))}
\end{array}
$$
and thereby we deduce that
$$
\begin{aligned}
(S^*_s y)(\mathbf{x},\theta) &= y \;G_s(\mathbf{x}-\mathbf{x}_0) \delta_{\theta_0}(\theta), \\
\ \ \ \   S^*_s (S_s T) &= T_0^{s}\; G_s(\mathbf{x}-\mathbf{x}_0) \delta_{\theta_0}(\theta),
\end{aligned}
$$
with $\infty > T_0^{s}:=(G_{s} *_{\mathbb{R}^2} T(\cdot,\theta_0))(\mathbf{x}_0)>  1$ for $0 < s \ll 1$.
The  Euler-Lagrange equation (\ref{EL}) becomes
$$
(-\lambda \Delta_{SE(2)} + \mu I ) T = (1 - T^s_0)G_{s}(\mathbf{x}-\mathbf{x}_0) \delta_{\theta_0}(\theta).
$$
Now, when setting $T_{new} = \frac{T}{1-T_0^s}$ we arrive at the hypo-elliptic resolvent equation on $SE(2)$:
\begin{equation}
\label{aap}
\begin{aligned}
(- \lambda \Delta_{SE(2)} + \mu I) T_{new} = (G_{s} *_{\mathbb{R}^2} \delta_{\mathbf{x}_0}) \delta_{\theta_0} \;\;\;\;\;\;
 \Leftrightarrow \\
T_{new} =\left( - \lambda \Delta_{SE(2)} + \mu I \right)^{-1} e^{s \Delta_{\mathbb{R}^2}} \delta_{g_0} \>\\
			  = e^{s \Delta_{\mathbb{R}^2}} \left( - \lambda \Delta_{SE(2)} + \mu I \right)^{-1} \delta_{g_0}
\end{aligned}
\end{equation}
where we write $e^{s \Delta_{\mathbb{R}^2}} f= G_{s} *_{\mathbb{R}^{2}} f$ for the diffusion operator, to stress
the vanishing commutators
\[
[e^{s \Delta_{\mathbb{R}^2}},\Delta_{SE(2)}]=e^{s \Delta_{\mathbb{R}^2}}\Delta_{SE(2)}-\Delta_{SE(2)} e^{s \Delta_{\mathbb{R}^2}}=0,
\]
which directly follows from $[\Delta_{\mathbb{R}^2},\Delta_{SE(2)}]=0$.
In fact, from these vanishing commutators one can deduce that, thanks to the isotropy of Gaussian kernel, blurring with inner-scale $s>0$ can be done either before applying the resolvent operator or after (as seen in (\ref{aap})).

The solutions $T_{new}$ are precisely the probabilistic kernels $R_{\alpha,s}:SE(2) \to \mathbb{R}$ for time integrated contour enhancements studied in \cite{ZhangDuits2014,Duits2010}.
In fact we see that
\[
T_{new}(g)=\mu^{-1} R_{\alpha,s}(g_0^{-1}g),
\]
where $R_{\alpha,s}= (I -\alpha^{-1} \Delta_{SE(2)})^{-1} e^{s \Delta_{\mathbb{R}^{2}}} \delta_{(\mathbf{0},0)}$ (i.e., the impuls response of the resolvent operator) denotes the time-integration of the hypo-elliptic diffusion kernel
$K_{\tau,s}= e^{\tau \Delta_{SE(2)}}e^{s \Delta_{\mathbb{R}^2}} \delta_{(\mathbf{0},0)}$:
\[
R_{\alpha,s}(g)= \alpha \int_{0}^{\infty} K_{\tau,s}(g) \, e^{-\alpha \tau}\, {\rm d}\tau,
\]
for which 3 different exact analytic formulas are derived in \cite{Duits2010}. The kernel $R_{\alpha,s}(\mathbf{x},\theta)$ denotes the probability density of finding a random brush stroke (regardless its traveling time) at location $\mathbf{x}$ with orientation $\theta$ given that a `drunkman's pencil' starts at $g=(\mathbf{0},0)$ at time zero. Here the traveling time $\tau$ of the random pencil is assumed to be negatively exponentially distributed with expectation $\alpha^{-1}$.
\end{proof}

\begin{figure}
\centerline{
\includegraphics[width=\hsize]{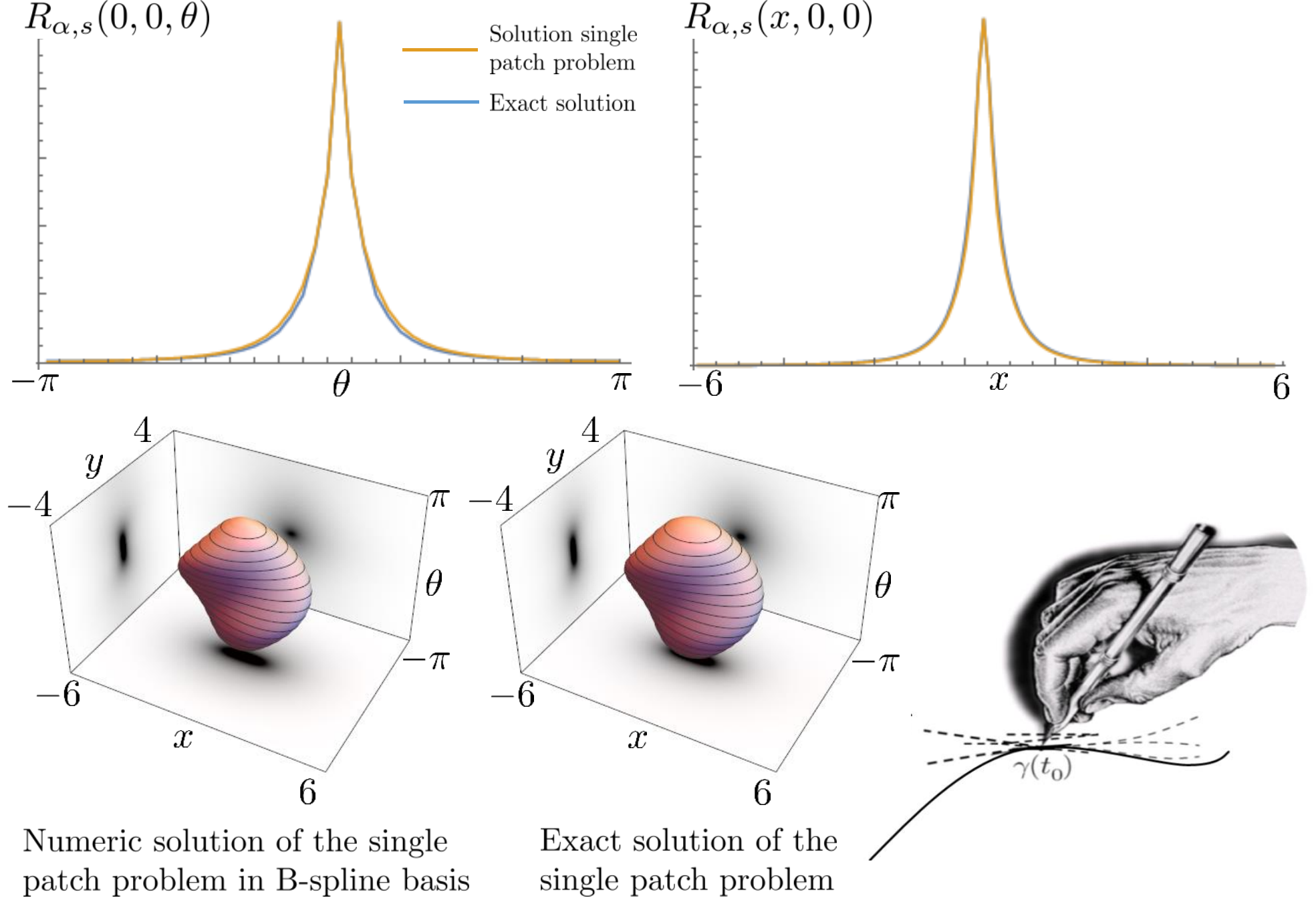}
}
\caption{
Top row: Comparison of kernel $R_{\alpha,s}(x,y,\theta)$ along respectively the $\theta$ and $x$ axis. Bottom row:
Isosurface of the kernel computed by solving the fundamental single patch problem (\ref{min}), the exact solution, and an illustration of the drunkman's pencil. For Monte Carlo simulations of the drunkman's pencil see the supplementary materials.
\label{fig:stochasticEnhancement}}
\end{figure}

\subsection{Expansion in B-splines}
\label{subsec:expansionBSplines}
Now we consider the B-spline expansions (Eq.~(34) in the main article) and apply our optimization algorithm (cf. Subsec.~2.4 of the main article) to the single patch problem
(\ref{min}), with $(\mathbf{x}_0,\theta_0) = (\mathbf{0},0)$. Here we no longer need a smoothing with a continuous Gaussian $G_s$, as expansion in the B-spline basis already includes regularization. Now we set for the smooth spike $U_{f_1}(x,y,\theta) = B^n\!\left( \frac{x}{s_k} \right) B^n\!\left( \frac{y}{s_l} \right)B^n\!\left( \frac{\theta \!\!\!\! \mod 2\pi}{s_m} \right)$, and we thus approximate spikes by the same B-spline basis in which we expressed our templates.
We accept extra regularization (like we did with the Gaussian in the previous section) and choose to represent a spike by a normal B-spline.
After all, via the central limit theorem B-splines converge to Gaussians when increasing $n$.
We also considered to instead use the fundamental B-Spline \cite[Fig.~2]{Unser93}, which is sharper but suffers from oscillations, yielding less favorable results.


In our normal B-spline setting, 
this choice of smooth spike representation (cf. Eq.~(14) in the main article) leads to the following
equations
$$
(S^\dagger S  + \lambda R + \mu I ) T = S^\dagger 1,
$$
with $S$ the $[1 \times N_k N_l N_m]$-matrix whose components are given by $M(0,0,0)\, B_{s_k s_l s_m}(k,l,m)$.
Akin to the previous derivations (\ref{aap}) this matrix-equation can be rewritten as
$$
\left( \lambda R + \mu I \right) T_{new} = S^\dagger 1.
$$
In particular our $B$-spline basis algorithm is a new algorithm that can be used for the resolvent (hypo-)elliptic diffusion process on $SE(2)$.
The benefit over Fourier based algorithms is the local support of the basis functions, which allow for sparse representations.

In Fig.~\ref{fig:stochasticEnhancement} we compare the impulse response for Tikhonov regularization via our B-spline expansion algorithm with the Brownian motion prior on $SE(2)$
(using a fine B-spline basis) to
the exact solutions derived in \cite{ZhangDuits2014,Duits2010}.
The strong accuracy of our algorithm shows that even in the discrete B-spline setting the probabilistic interpretation (Thm.~\ref{thm:1}) of our prior in $SE(2)$-template matching holds.





\subsection{The Drunkman's Pencil}
Similar to the diffusions on $\mathbb{R}^2$, given by (\ref{eq:pdeDiffusion}), the hypo-elliptic diffusion process on $SE(2)$ is described by the following PDE:
\begin{equation}
\label{eq:pdeDiffusionSE2}
\left\{
\begin{array}{cl}
\tfrac{\partial}{\partial \tau}W &= (D_{\xi\xi} \partial_{\xi}^2 + D_{\theta\theta} \partial_{\theta}^2) W,\\
W |_{\tau=0} &= W_0,
\end{array}
\right.
\end{equation}
initialized with $W_0 \in \mathbb{L}_2(\mathbb{R}^2)$ at time $\tau = 0$. The PDE can be used to obtain the solutions of our single patch problem by initializing $W_0$ with a smooth spike such as we did in Subsec.~\ref{subsec:expansionBSplines}, e.g. taking $W_0 =  U_{f_1}(x,y,\theta) = B^n\!\left( \frac{x}{s_k} \right) B^n\!\left( \frac{y}{s_l} \right)B^n\!\left( \frac{\theta \!\!\!\! \mod 2\pi}{s_m} \right)$.

The PDE in (\ref{eq:pdeDiffusionSE2}) is the forward Kolmogorov equation \cite{hsustochastic} of the following stochastic process \cite{ZhangDuits2014}:
\begin{equation}
\label{eq:stochasticProcess}
\left\{
\begin{array}{l}
\mathbf{x}(\tau) = \mathbf{x}(0) + \\
\;\;\;\;\;\;\;\;\;\;\;\;\;\;\;\;\;\; \sqrt{2 D_{\xi\xi}} \; \epsilon_\xi \int_0^\tau (\cos \theta(\tau) \mathbf{e}_x + \sin \theta(\tau) \mathbf{e}_y ) \frac{1}{2\sqrt{\tau}} \rm d\tau\\
\theta(\tau) = \theta(0) + \sqrt{\tau} \sqrt{2 D_{\theta\theta}} \; \epsilon_\theta, \;\;\;\;\;\;\;\;\;\; \epsilon_\xi, \epsilon_\theta ~ \mathcal{N}(0,1),
\end{array}
\right.
\end{equation}
where $\epsilon_\xi$ and $\epsilon_\theta$ are sampled from a normal distribution with expectation $0$ and unit standard deviation. The stochastic process in (\ref{eq:stochasticProcess}) can be interpreted as the motion of a drunkman's pencil: it randomly moves forward and backwards, and randomly changes its orientation along the way. The resolvent hypo-elliptic diffusion kernels $R_{\alpha,s}(g)$ (solutions to the fundamental single patch problem, up to scalar multiplication) can then also be obtained via Monte Carlo simulations, where the stochastic process is sampled many times with a negatively exponentially distributed traveling time ($P(\mathcal{T}=\tau)=\alpha e^{-\alpha \tau}$) in order to be able to estimate the probability density kernel $R_{\alpha,s}(g)$. This process is illustrated in Fig.~\ref{fig:contourEnhancement}.

\begin{figure}
\centerline{
\includegraphics[width=\hsize]{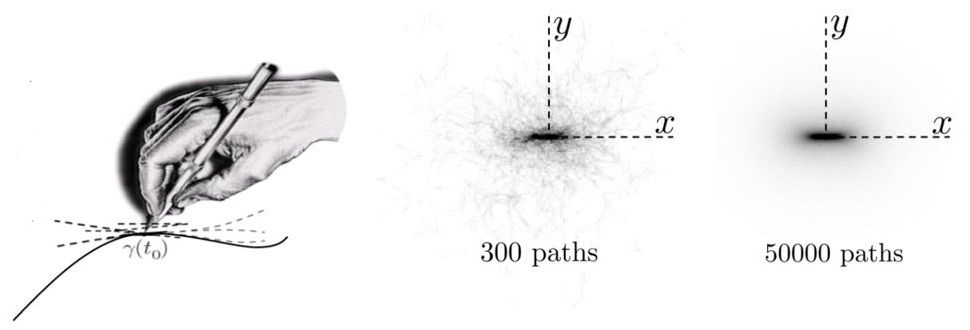}
}
\caption{
Stochastic random process for contour enhancement.
\label{fig:contourEnhancement}}
\end{figure}

\section{The Smoothing Regularization Matrix R}
\label{sec:regMatrix}
When expanding the templates $t$ and $T$ in a finite B-Spline basis (Sec.~2 and 3 of the main article), the energy functionals (7), (11), (28) and (30) of the main article can be expressed in matrix vector form. The following theorems summarize how to compute the matrix $R$, which encodes the smoothing prior, for respectively the $\mathbb{R}^2$ and $SE(2)$ case.
\begin{mylem}
\label{lem:regtermR2}
The discrete smoothing regularization-term of energy functional (7) of the main article can be expressed directly in the B-Spline coefficients $\mathbf{c}$ as follows
\begin{equation}
\iint_{\mathbb{R}^2} \lVert \nabla t(x,y) \rVert^2 {\rm d}x {\rm d}y
 = \mathbf{c}^\dagger R \mathbf{c},
\end{equation}
with $\mathbf{c}$ given by Eq.~(16) of the main article, and with
\begin{equation}
\label{eq:RR2}
R = R_x^k \otimes R_x^l + R_y^k \otimes R_y^l,
\end{equation}
a $[N_k N_l \times N_k N_l]$ matrix. The elements of the matrices in (\ref{eq:RR2}) are given by
\begin{equation}
\label{eq:elementsR2}
\begin{array}{rl}
R_x^k(k,k') &= -\frac{1}{s_k}\frac{ \partial^2 B^{2n+1}}{\partial x^2}(k'-k)\\
R_x^l(l,l') &= s_l B^{2n+1}(l'-l),\\
R_y^k(k,k') &= s_k B^{2n+1}(k'-k),\\
R_y^l(l,l') &= -\frac{1}{s_l}\frac{ \partial^2 B^{2n+1}}{\partial y^2}(l'-l).
\end{array}
\end{equation}
\end{mylem}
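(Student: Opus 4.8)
The plan is to substitute the separable (tensor-product) B-spline expansion of $t$ from the main article directly into the Dirichlet-type integral and to exploit the product structure so that every two-dimensional integral factorizes into one-dimensional pieces. Writing $\lVert \nabla t \rVert^2 = |\partial_x t|^2 + |\partial_y t|^2$ and differentiating the basis functions termwise, I would obtain for instance
$$\partial_x t(x,y) = \sum_{k,l} c_{k,l}\, \tfrac{1}{s_k}(B^n)'\!\left(\tfrac{x}{s_k}-k\right) B^n\!\left(\tfrac{y}{s_l}-l\right),$$
so that after multiplying by its complex conjugate and integrating, each cross term $(k,l),(k',l')$ splits into the product of an $x$-integral involving $(B^n)'(B^n)'$ and a $y$-integral involving $B^n B^n$; the symmetric splitting holds for $|\partial_y t|^2$. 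The structural point I would emphasize here is that $s_k=N_x/N_k$ and $s_l=N_y/N_l$ are fixed scales independent of the summation indices, so a single change of variables $u=x/s_k$ (resp.\ $w=y/s_l$) normalizes all integrals at once and contributes only a uniform Jacobian factor $s_k$ (resp.\ $s_l$).

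Next I would establish the two elementary B-spline identities that drive everything. First, since the box is the degree-zero B-spline and B-splines satisfy $B^m * B^{m'} = B^{m+m'+1}$, we have $B^n * B^n = B^{2n+1}$; as $B^n$ is even, correlation equals convolution and hence $\int_{\mathbb{R}} B^n(u-k)B^n(u-k')\,{\rm d}u = B^{2n+1}(k'-k)$. Second, for the derivative pairing I would prove $\int_{\mathbb{R}} (B^n)'(u-k)(B^n)'(u-k')\,{\rm d}u = -\frac{\partial^2 B^{2n+1}}{\partial x^2}(k'-k)$, most cleanly in the Fourier domain where $\widehat{(B^n)'}=i\omega\,\widehat{B^n}$ yields $\omega^2|\widehat{B^n}|^2 = \omega^2\widehat{B^{2n+1}} = \widehat{-(B^{2n+1})''}$ (integration by parts gives the same conclusion). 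Combining these with the Jacobian factors reproduces exactly the four one-dimensional kernels: the $x$-derivative contributes $R_x^k(k,k')=-\frac1{s_k}\frac{\partial^2 B^{2n+1}}{\partial x^2}(k'-k)$ paired with $R_x^l(l,l')=s_l B^{2n+1}(l'-l)$, and the $y$-derivative gives $R_y^k(k,k')=s_k B^{2n+1}(k'-k)$ with $R_y^l(l,l')=-\frac1{s_l}\frac{\partial^2 B^{2n+1}}{\partial y^2}(l'-l)$.

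Finally I would reassemble the double sum over $(k,l)$ and $(k',l')$ as a Hermitian quadratic form. Because the $x$- and $y$-integrals factorize and the coefficients are stored in the lexicographic order of Eq.~(16), the contribution of $|\partial_x t|^2$ is precisely $\mathbf{c}^\dagger (R_x^k\otimes R_x^l)\mathbf{c}$ and that of $|\partial_y t|^2$ is $\mathbf{c}^\dagger(R_y^k\otimes R_y^l)\mathbf{c}$, whence $R=R_x^k\otimes R_x^l+R_y^k\otimes R_y^l$. I expect the main obstacle to be not the analysis but the bookkeeping: proving the derivative identity cleanly and, above all, verifying that the separable one-dimensional factors assemble into the Kronecker product in the index ordering induced by the vectorization of $\mathbf{c}$, with the correct placement of complex conjugation so that the form is genuinely $\mathbf{c}^\dagger R\mathbf{c}$ with real symmetric $R$.
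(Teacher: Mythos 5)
Your proposal is correct and follows essentially the same route as the paper's proof: substitute the tensor-product B-spline expansion, split $\lVert \nabla t \rVert^2$ into $|\partial_x t|^2 + |\partial_y t|^2$, factor each 2D integral into 1D pieces with the scale factors $s_k, s_l$ tracked through the change of variables, reduce the pairings to $B^n * B^n = B^{2n+1}$ and the derivative identity $\int (B^n)'(u-k)(B^n)'(u-k')\,{\rm d}u = -(B^{2n+1})''(k'-k)$, and reassemble the double sums as Kronecker products in the lexicographic ordering of $\mathbf{c}$. The only cosmetic difference is that you establish the derivative identity via Fourier transforms (or integration by parts), whereas the paper uses a change of variables plus the parity properties $B^n(x)=B^n(-x)$ and $(B^n)'(x) = -(B^n)'(-x)$ to turn the correlation into a negated convolution; both yield exactly the same four kernels.
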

\begin{proof}
For the sake of readability we divide the regularization-term in two parts:
\begin{equation}
\label{eq:regularizationTermDiscrete}
\begin{array}{rl}
\iint_{\mathbb{R}^2} \lVert \nabla t(x,y) \rVert^2 {\rm d}x {\rm d}y
& = \iint_{\mathbb{R}^2} \left| \frac{\partial t}{\partial x}(x,y)\right|^2 \\
& \hspace{6em} + \left| \frac{\partial t}{\partial y}(x,y)\right|^2 {\rm d}x {\rm d}y,
\end{array}
\end{equation}
where
$$
\begin{array}{rl}
\mathcal{R}_x &= \iint_{\mathbb{R}^2} \left| \frac{\partial t}{\partial x}(x,y)\right|^2 {\rm d}x {\rm d}y,\;\;\text{and}\\
\mathcal{R}_y &= \iint_{\mathbb{R}^2} \left| \frac{\partial t}{\partial y}(x,y)\right|^2 {\rm d}x {\rm d}y.
\end{array}
$$
We first derive the matrix-vector representation of $\mathcal{R}_x$ as follows:
\begin{equation}
\begin{array}{rl}
\mathcal{R}_x &= \iint_{\mathbb{R}^2} \left| \frac{\partial t}{\partial x}(x,y)\right|^2 {\rm d}x {\rm d}y\\
&= \sum \limits_{k,k'=1}^{N_k} \sum \limits_{l,l'=1}^{N_l} \iint_{\mathbb{R}^2}
\overline{c_{k,l} \frac{ \partial B^n}{\partial x}(\frac{x}{s_k}-k) B^n(\frac{y}{s_l}-l)}\\
&\hspace{9em}c_{k,l} \frac{ \partial B^n}{\partial x}(\frac{x}{s_k}-k') B^n(\frac{y}{s_l}-l') {\rm d}x {\rm d}y\\
&= \sum \limits_{k,k'=1}^{N_k} \sum \limits_{l,l'=1}^{N_l}
\overline{c_{k,l}} c_{k',l'}\\
&\hspace{6em}\left[\int\limits_{-\infty}^\infty
\frac{ \partial B^n}{\partial x}(\frac{x}{s_k}-k) \frac{ \partial B^n}{\partial x}(\frac{x}{s_k}-k')
{\rm d}x\right]\\
&\hspace{9em}\left[\int\limits_{-\infty}^\infty
B^n(\frac{y}{s_l}-l) B^n(\frac{y}{s_l}-l')
{\rm d}y\right]\\
&\overset{1}{=} \sum \limits_{k,k'=1}^{N_k} \sum \limits_{l,l'=1}^{N_l}
\overline{c_{k,l}} c_{k',l'}
\left[\frac{1}{s_k}\left(\frac{ \partial B^n}{\partial x} * \frac{ \partial B^n}{\partial x}\right)(k'-k)\right]\\
&\hspace{14em}\left[s_l \left(B^n * B^n\right)(l'-l)\right]\\
&\overset{2}{=} \sum \limits_{k,k'=1}^{M} \sum \limits_{l,l'=1}^{N}
\overline{c_{k,l}} c_{k',l'}
\left[\frac{1}{s_k} \frac{ \partial^2 B^{2n+1}}{\partial x^2}(k'-k)\right]\\
&\hspace{14em}\left[s_l B^{2n+1}(l'-l)\right]\\
&= \mathbf{c}^\dagger (R_x^k \otimes R_x^l) \mathbf{c}.
\end{array}
\end{equation}
Here the following properties are used:
\begin{enumerate}
  \item The integrals of shifted B-splines can be expressed as convolutions:
  \begin{multline*}
    \label{eq:convolutionTerms}
    \int_{-\infty}^{\infty} \frac{ \partial B^n}{\partial x}\left(\frac{x}{s_k}-k\right) \frac{ \partial B^n }{ \partial x}\left(\frac{x}{s_k} - k'\right) {\rm d}x \\
    =
    - \frac{1}{s_k} \int_{-\infty}^{\infty} \frac{ \partial B^n}{\partial u}(u)\frac{ \partial B^n }{\partial u}( (k'-k) - u) {\rm d}u\\
    =
    - \frac{1}{s_k} \left(\frac{ \partial B^n}{\partial u} * \frac{ \partial B^n}{\partial u}\right)(k'-k).
  \end{multline*}
  This is easily verified by substitution of integration variable ($u = -\frac{x}{s_k}+k$) and noting that $B^n(x) = B^n(-x)$ and $\frac{\partial B^n}{\partial x}(x) = -  \frac{\partial B^n}{\partial x}(-x)$.
  \item Convolution of two B-splines $B^n$ of order $n$ results in a B-Spline $B^{2n+1}$ of order $2n+1$:
  \begin{equation*}
    \label{eq:splineConvolution}
    B^n * B^n = B^{2n + 1}.
  \end{equation*}
\end{enumerate}
The elements of the matrices $R_y^k$ and $R_y^l$ are derived in a similar manner.
\end{proof}
As a result of Lemma~\ref{lem:regtermR2} we can state the following.
\begin{mycor} \label{cor:1}
Let $V=\operatorname{span}\{ B^n_{k,l} \}$, with $k = 1,\ldots, N_k, l=1,\ldots, N_l$, and shifted B-splines $B^n_{k,l}$ of order $n$ (cf. Subsec. 2.4 of the main article). Let the energy function $E_{lin}^B: \mathbb{R}^{N_k N_l} \to \mathbb{R}^+$ be given by Eq.~(13) of the main article. Then the optimal continuous template of the constrained optimization problem (cf. Eq.~(7) of the main article)
\[
t^*=\underset{t \in V}{\operatorname{argmin}}\;\; E_{lin}(t)
\]
has coefficients $\mathbf{c}^*$ w.r.t. the B-spline basis for $V$,
that are the unique solution of
\[
\nabla_{\mathbf{c}} {E}^B(\mathbf{c}^*)=\mathbf{0},
\]
which boils down to Eq.~(14) of the main article.
\end{mycor}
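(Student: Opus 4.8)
The plan is to turn the continuous minimization of $E_{lin}$ over the finite-dimensional space $V$ into an unconstrained \emph{convex quadratic} minimization over the coefficient vector $\mathbf{c}\in\mathbb{R}^{N_k N_l}$, and then to use strict convexity to obtain a unique stationary point that must solve the normal equations. Throughout I treat $\mathbf{c}$ as real, since the templates $t$ (and, in the $SE(2)$ case, $\left|U_f\right|$) are real-valued; the matrices $S^\dagger S$, $R$, $I$ are Hermitian, so all quadratic forms below are real and the gradient computation is standard.

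First I would exploit that the shifted B-splines $\{B^n_{k,l}\}$ form a basis of $V$, so the coordinate map $\mathbf{c}\mapsto t=\sum_{k,l}c_{k,l}B^n_{k,l}$ is a linear bijection from $\mathbb{R}^{N_k N_l}$ onto $V$. Minimizing $E_{lin}$ over $t\in V$ is therefore equivalent to minimizing the pulled-back functional $\mathbf{c}\mapsto E_{lin}\!\left(\sum_{k,l}c_{k,l}B^n_{k,l}\right)$ over all coefficient vectors. Substituting the expansion into the three terms of Eq.~(7), using Lemma~\ref{lem:regtermR2} for the smoothing term (which yields exactly $\mathbf{c}^\dagger R\,\mathbf{c}$) and the definitions of $S$, $\mathbf{y}$, $I$ for the data and ridge terms, gives $E_{lin}(t)=E^B_{lin}(\mathbf{c})$ with $E^B_{lin}$ as in Eq.~(13). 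Hence the two problems have identical minimizers.

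Next I would note that $E^B_{lin}$ is a quadratic functional with constant Hessian $2\,(S^\dagger S+\lambda R+\mu I)$. Here $S^\dagger S\succeq 0$ trivially, and by Lemma~\ref{lem:regtermR2} the matrix $R$ represents the nonnegative form $\iint_{\mathbb{R}^2}\lVert\nabla t\rVert^2\,{\rm d}x\,{\rm d}y$, so $R\succeq 0$; together with $\mu I\succ 0$ (as $\mu>0$) the Hessian is symmetric positive definite. Consequently $E^B_{lin}$ is strictly convex and coercive, so it admits a \emph{unique} global minimizer $\mathbf{c}^*$, and for such a functional the first-order condition $\nabla_{\mathbf{c}}E^B_{lin}(\mathbf{c}^*)=\mathbf{0}$ is both necessary and sufficient. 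Computing the gradient termwise, $\nabla_{\mathbf{c}}\lVert S\mathbf{c}-\mathbf{y}\rVert^2=2S^\dagger(S\mathbf{c}-\mathbf{y})$, $\nabla_{\mathbf{c}}(\lambda\,\mathbf{c}^\dagger R\,\mathbf{c})=2\lambda R\mathbf{c}$ (using $R=R^\dagger$), and $\nabla_{\mathbf{c}}(\mu\,\mathbf{c}^\dagger\mathbf{c})=2\mu\mathbf{c}$; setting the sum to zero and dividing by $2$ reproduces Eq.~(14).

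The only genuine obstacle is the \emph{uniqueness} assertion, which rests entirely on the Hessian being \emph{strictly} positive definite. This is guaranteed by the ridge term $\mu>0$; without it one would only have $S^\dagger S+\lambda R\succeq 0$, and since $R$ alone is singular (its kernel contains the constant templates in $V$), uniqueness would require the additional nondegeneracy condition $\ker(S)\cap\ker(R)=\{\mathbf{0}\}$. I would therefore present the strict-convexity step carefully, invoking $\mu>0$ to close the argument, and remark that for the degenerate case $\mu=0$ one instead assumes coercivity of the combined data-plus-smoothing form on $V$.
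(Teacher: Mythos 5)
Your proposal is correct and follows essentially the same route as the paper, which states this corollary without a separate proof as an immediate consequence of Lemma~\ref{lem:regtermR2}: substitute the B-spline expansion to obtain the quadratic form $E^B_{lin}(\mathbf{c})$ of Eq.~(13), then set the gradient to zero to recover the normal equations of Eq.~(14). Your explicit strict-convexity argument, and in particular the observation that uniqueness genuinely requires $\mu>0$ (or the nondegeneracy condition $\ker(S)\cap\ker(R)=\{\mathbf{0}\}$ when $\mu=0$, which is relevant since the paper's template types $B$ and $D$ do set $\mu=0$), is a careful addition that the paper leaves unstated.
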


\begin{mylem}
\label{lem:regtermSE2}
The discrete regularization-term of energy functional (28) of the main article can be expressed directly in the B-Spline coefficients:
\begin{equation}
\iiint_{SE(2)} \lVert \nabla T \rVert_D \; {\rm d}x{\rm d}y{\rm d}\theta
=
\mathbf{c}^T (D_{\xi\xi} R_\xi + D_{\eta\eta} R_\eta + D_{\theta\theta} R_\theta) \mathbf{c}.
\end{equation}
Matrix $R_\xi$ is given by
\begin{multline}
\label{eq:RmatrixXi}
R_\xi =
\left(R_\xi^{Ix} \otimes R_\xi^{Iy} \otimes R_\xi^{I\theta}\right)
+ \left(R_\xi^{IIx} \otimes R_\xi^{IIy} \otimes R_\xi^{II\theta}\right)\\
+ \left(R_\xi^{IIIx} \otimes R_\xi^{IIIy} \otimes R_\xi^{III\theta}\right)
+ \left(R_\xi^{IVx} \otimes R_\xi^{IVy} \otimes R_\xi^{IV\theta}\right)
\end{multline}
with the elements of the matrices used in the Kronecker products given by
\begin{equation}
\label{eq:RmatricesXi}
\begin{array}{l}
R_\xi^{Ix}(k,k')\;\;\;  = - \frac{1}{s_k}  \frac{\partial^2 B^{2n+1}}{\partial x^2}  (k'-k),\\
R_\xi^{Iy}(l,l')\;\;\;\;\;  = s_l  B^{2n+1}  (l'-l),\\
R_\xi^{I\theta}(m,m')  = \int\limits_0^\pi \cos^2(\theta)B^n(\frac{\theta}{s_m}-m)B^n(\frac{\theta}{s_m}-m'){\rm d}\theta,
\end{array}
\end{equation}
\begin{equation}
\begin{array}{l}
R_\xi^{IIx}(k,k') \;\;\, = - R_\xi^{IIIx}(k,k') \;\;\; = \frac{\partial B^{2n+1}}{\partial x}  (k'-k),\\
R_\xi^{IIy}(l,l') \;\;\;\; = - R_\xi^{IIIy}(l,l') \;\;\;\;\; = -\frac{\partial B^{2n+1}}{\partial y}  (l'-l),\\
R_\xi^{II\theta}(m,m')  = \;\;\; R_\xi^{III\theta}(m,m')  = \\
\hspace{6em} \int\limits_0^\pi \cos(\theta)\sin(\theta)B^n(\frac{\theta}{s_m}-m)B^n(\frac{\theta}{s_m}-m'){\rm d}\theta,
\end{array}
\end{equation}
\begin{equation}
\begin{array}{rl}
R_\xi^{IVx}(k,k') \;\;\,\,& = s_k B^{2n+1}(k'-k),\\
R_\xi^{IVy}(l,l') \;\;\;\;\;& = - \frac{1}{s_l} \frac{\partial^2 B^{2n+1}}{\partial y^2}  (l'-l),\\
R_\xi^{IV\theta}(m,m') & = \int\limits_0^\pi \sin^2(\theta)B^n(\frac{\theta}{s_m}-m)B^n(\frac{\theta}{s_m}-m'){\rm d}\theta.
\end{array}
\end{equation}
Matrix $R_\eta$ is given by
\begin{multline}
R_\eta =
\left(R_\xi^{IIx} \otimes R_\xi^{IIy} \otimes R_\xi^{IV\theta}\right)
- \left(R_\xi^{IIx} \otimes R_\xi^{IIy} \otimes R_\xi^{II\theta}\right)\\
- \left(R_\xi^{IIIx} \otimes R_\xi^{IIIy} \otimes R_\xi^{III\theta}\right)
+ \left(R_\xi^{IVx} \otimes R_\xi^{IVy} \otimes R_\xi^{I\theta}\right).
\end{multline}
Matrix $R_\theta$ is given by
\begin{equation}
R_\theta =
\left(R_\theta^{x} \otimes R_\theta^{y} \otimes R_\theta^{\theta}\right),
\end{equation}
with the elements of the matrices given by
\begin{equation}
\label{eq:RmatricesTheta}
\begin{array}{rl}
R_\theta^{x}(k,k')\;\;\; & = s_k  B^{2n+1} (k'-k),\\
R_\theta^{y}(l,l')\;\;\;\;\; & = s_l  B^{2n+1}  (l'-l),\\
R_\theta^{\theta}(m,m') & = - \frac{1}{s_m}  \frac{\partial^2 B^{2n+1}}{\partial \theta^2}  (m'-m).
\end{array}
\end{equation}
\end{mylem}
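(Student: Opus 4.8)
The plan is to mirror the proof of Lemma~\ref{lem:regtermR2}, the only genuinely new feature being that the left-invariant derivatives $\partial_\xi=\cos\theta\,\partial_x+\sin\theta\,\partial_y$ and $\partial_\eta=-\sin\theta\,\partial_x+\cos\theta\,\partial_y$ carry $\theta$-dependent coefficients. First I would substitute the tensor-product B-spline expansion~(\ref{splineexp}) for $T$ into the smoothing functional and split the integrand into its three $D$-weighted contributions, treating the $\partial_\xi$, $\partial_\eta$ and $\partial_\theta$ parts separately. Since $T$ is real-valued, each part is a genuine squared real quantity.

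For the $\partial_\xi$ contribution I would expand
\[
|\partial_\xi T|^2=\cos^2\theta\,(\partial_x T)^2+2\cos\theta\sin\theta\,(\partial_x T)(\partial_y T)+\sin^2\theta\,(\partial_y T)^2,
\]
which produces exactly the four pieces $I$--$IV$ of $R_\xi$. Because the basis factorizes as a product of one-dimensional B-splines in $x$, $y$ and $\theta$, each of these pieces splits into a product of three one-dimensional integrals, i.e.\ a Kronecker product of three matrices. The $x$- and $y$-integrals reduce precisely as in Lemma~\ref{lem:regtermR2}: using $B^n*B^n=B^{2n+1}$ together with integration by parts (and the parity $B^n(x)=B^n(-x)$, $\partial_x B^n(x)=-\partial_x B^n(-x)$), an overlap of two (derivatives of) splines collapses to a single value of $B^{2n+1}$, $\partial_x B^{2n+1}$ or $\partial_x^2 B^{2n+1}$ at the integer shift $k'-k$ (resp.\ $l'-l$). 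The one new ingredient is the angular factor: the weights $\cos^2\theta$, $\sin^2\theta$, $\cos\theta\sin\theta$ cannot be absorbed into a shift, so the $\theta$-integrals are left explicit, giving $R_\xi^{I\theta}$, $R_\xi^{IV\theta}$ and $R_\xi^{II\theta}=R_\xi^{III\theta}$.

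The main obstacle, and the source of the intricate index structure, is the mixed term. To write the total as a symmetric form $\mathbf{c}^T R_\xi\mathbf{c}$ the $\partial_x$--$\partial_y$ contribution must be symmetrized over the swap $(k,l,m)\leftrightarrow(k',l',m')$. The first-derivative overlap $\int \partial_x B^n(\tfrac{x}{s_k}-k)\,B^n(\tfrac{x}{s_k}-k')\,{\rm d}x$ equals $\partial_x B^{2n+1}(k'-k)$, which is an \emph{odd} function of the shift and hence antisymmetric in $k\leftrightarrow k'$; the same holds in $y$. This is exactly why the cross term is carried by the two Kronecker products $II$ and $III$ with $R_\xi^{IIIx}=-R_\xi^{IIx}$ and $R_\xi^{IIIy}=-R_\xi^{IIy}$: the two summands place the $x$- and $y$-derivative on opposite factors, and their sum recovers the full (factor-of-two) symmetric cross contribution. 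Propagating these signs consistently through the $x$, $y$ and $\theta$ factors is the delicate bookkeeping step.

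Finally I would repeat the computation for $\partial_\eta$ and $\partial_\theta$. Expanding $|\partial_\eta T|^2=\sin^2\theta\,(\partial_x T)^2-2\cos\theta\sin\theta\,(\partial_x T)(\partial_y T)+\cos^2\theta\,(\partial_y T)^2$ produces the same factor types as for $R_\xi$ but with the angular weights $\cos^2\theta\leftrightarrow\sin^2\theta$ interchanged and the cross term negated, so no new building blocks arise and every term of $R_\eta$ is expressible through the matrices already defined for $R_\xi$; a useful consistency check is the rotation invariance $|\partial_\xi T|^2+|\partial_\eta T|^2=(\partial_x T)^2+(\partial_y T)^2$, under which the trigonometric angular factors of $R_\xi$ and $R_\eta$ must add to plain overlaps. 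The $\partial_\theta$ term carries neither a trigonometric weight nor a spatial derivative, so it reduces directly to the single Kronecker product $R_\theta^{x}\otimes R_\theta^{y}\otimes R_\theta^{\theta}$ with plain spatial overlaps $s_k B^{2n+1}$, $s_l B^{2n+1}$ and the angular second derivative $-\tfrac{1}{s_m}\partial_\theta^2 B^{2n+1}$. Summing the three $D$-weighted contributions then yields the claimed identity.
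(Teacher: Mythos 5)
Your proposal is correct and takes essentially the same route as the paper, whose entire proof consists of exactly this remark: it ``follows the same steps as in the proof of Lemma~\ref{lem:regtermR2}, only here left-invariant derivatives are used,'' with the four terms $I$--$IV$ of $R_\xi$ arising from expanding $\left|\cos\theta\,\partial_x T+\sin\theta\,\partial_y T\right|^2$ --- you have simply filled in those steps (the Kronecker factorization over $x$, $y$, $\theta$, the $B^n\!*\!B^n=B^{2n+1}$ reduction with parity arguments, the explicit trigonometric $\theta$-integrals, and the $II$/$III$ split of the antisymmetric first-derivative cross term). Incidentally, your derivation of $R_\eta$ (interchange $\cos^2\theta\leftrightarrow\sin^2\theta$, negate the cross term) produces $R_\xi^{Ix}\otimes R_\xi^{Iy}\otimes R_\xi^{IV\theta}$ as its first summand, which indicates that the superscripts ``$II$'' in the first summand of $R_\eta$ as printed in the lemma are a typo for ``$I$''.
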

\begin{proof}
The proof of Lemma~\ref{lem:regtermSE2} follows the same steps as in the proof of Lemma~\ref{lem:regtermR2}, only here left-invariant derivatives are used (cf. Eq.~(24) of the main article). The four separate terms $I-IV$ of Eq.~(\ref{eq:RmatrixXi}) arise from the left invariant derivative $\partial_\xi$: $\left|\frac{\partial T}{\partial \xi}\right|^2 = \left|\cos( \theta )\frac{\partial T}{\partial x} + \sin( \theta )\frac{\partial T}{\partial y}\right|^2$.
\end{proof}
Lemma~\ref{lem:regtermSE2} has the following consequence.
\begin{mycor} \label{cor:2}
Let $V=\operatorname{span}\{ B^n_{k,l,m} \}$, with $k = 1,\ldots, N_k, l=1,\ldots, N_l, m=1,\ldots, N_m$, and shifted B-splines $B^n_{k,l,m}$ of order $n$ (cf. Subsec. 3.5 of the main article). Let the energy function $\mathcal{E}_{lin}^B: \mathbb{R}^{N_k N_l N_m} \to \mathbb{R}^+$ be given by
\[
\mathcal{E}_{lin}^B(\mathbf{c}) = \frac{1}{N} \|S \mathbf{c} - \mathbf{y}\|^2 + \mathbf{c}^\dagger (\lambda R+\mu I) \mathbf{c}
\]
With $S$ and $\mathbf{y}$ given by (33) of the main article and with $R=D_{\xi\xi} R_{\xi} + D_{\eta \eta}R_{\eta} + D_{\theta\theta} R_{\theta}$ given in Lemma~\ref{lem:regtermSE2}.
Then the optimal continuous template of the constrained optimization problem (cf. Eq.~(28) of the main article)
\[
T^*=\underset{T \in V}{\operatorname{argmin}}\;\; \mathcal{E}_{lin}(T)
\]
has coefficients $\mathbf{c}^*$ w.r.t. the B-spline basis for $V$
that are the unique solution of
\[
\nabla_{\mathbf{c}} \mathcal{E}^B(\mathbf{c}^*)=\mathbf{0} .
\]
This boils down to Eq.~(14) of the main article, but then on $\mathbb{R}^{N_k N_l N_m}$ with matrices $R$ and $S$ given above.
\end{mycor}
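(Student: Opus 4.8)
The plan is to mirror the argument already used for Corollary~\ref{cor:1} in the $\mathbb{R}^2$ case: reduce the continuous minimization over the finite-dimensional subspace $V$ to an unconstrained quadratic minimization over the coefficient vector $\mathbf{c}\in\mathbb{R}^{N_k N_l N_m}$, and then invoke strict convexity to get existence, uniqueness, and the first-order characterization. First I would observe that the expansion map $\mathbf{c}\mapsto T=\sum_{k,l,m} c_{k,l,m}B^n_{k,l,m}$ is a linear isomorphism from $\mathbb{R}^{N_k N_l N_m}$ onto $V$, since the shifted B-splines $B^n_{k,l,m}$ form a basis of $V$. Consequently, minimizing $\mathcal{E}_{lin}(T)$ over $T\in V$ is equivalent to minimizing the composed function $\mathbf{c}\mapsto\mathcal{E}_{lin}(T(\mathbf{c}))$ over all of $\mathbb{R}^{N_k N_l N_m}$, and the two minimizers correspond under this bijection.

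Next I would substitute the B-spline expansion (Eq.~(34) of the main article) into the three terms of the energy functional (Eq.~(28) of the main article). By linearity of the $\mathbb{L}_2(SE(2))$ inner product, $(T,|U_{f_i}|)_{\mathbb{L}_2(SE(2))}=(S\mathbf{c})_i$ with $S$ as in Eq.~(33) of the main article, so the data term becomes $\sum_{i=1}^N((T,|U_{f_i}|)-y_i)^2=\|S\mathbf{c}-\mathbf{y}\|^2$. For the anisotropic smoothing term, Lemma~\ref{lem:regtermSE2} applies \emph{verbatim} and yields $\iiint_{SE(2)}\|\nabla T\|_D^2\,{\rm d}x{\rm d}y{\rm d}\theta=\mathbf{c}^\dagger R\,\mathbf{c}$ with $R=D_{\xi\xi}R_\xi+D_{\eta\eta}R_\eta+D_{\theta\theta}R_\theta$, while the ridge term is represented as $\mu\,\mathbf{c}^\dagger I\mathbf{c}$ under the chosen normalization. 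Collecting terms gives exactly the quadratic $\mathcal{E}_{lin}^B(\mathbf{c})=\tfrac{1}{N}\|S\mathbf{c}-\mathbf{y}\|^2+\mathbf{c}^\dagger(\lambda R+\mu I)\mathbf{c}$ of the statement.

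Then I would establish convexity and uniqueness. Because $\mathbf{c}^\dagger R\,\mathbf{c}=\iiint_{SE(2)}\|\nabla T\|_D^2\geq 0$ for every $\mathbf{c}$ (the integrand is nonnegative since $D_{\xi\xi},D_{\eta\eta},D_{\theta\theta}\geq 0$), the symmetric matrix $R$ is positive semidefinite. Combined with $S^\dagger S\succeq 0$ and $\mu I\succ 0$ for $\mu>0$ (and, more generally, whenever $S^\dagger S+\lambda R$ is positive definite, i.e.\ $\ker S\cap\ker R=\{0\}$), the Hessian $\tfrac{2}{N}S^\dagger S+2(\lambda R+\mu I)$ is positive definite. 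Hence $\mathcal{E}_{lin}^B$ is a strictly convex, coercive quadratic, so it possesses a unique global minimizer $\mathbf{c}^*$, characterized by the stationarity condition $\nabla_{\mathbf{c}}\mathcal{E}^B(\mathbf{c}^*)=\mathbf{0}$. Computing $\nabla_{\mathbf{c}}\mathcal{E}^B=\tfrac{2}{N}S^\dagger(S\mathbf{c}-\mathbf{y})+2(\lambda R+\mu I)\mathbf{c}$ and setting it to zero yields, after clearing the overall $1/N$ factor (equivalently absorbing it into $\lambda$ and $\mu$), the normal equations $(S^\dagger S+\lambda R+\mu I)\mathbf{c}^*=S^\dagger\mathbf{y}$, which is Eq.~(14) of the main article with the matrices $R$ and $S$ of the $SE(2)$ setting.

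The only genuine obstacle is bookkeeping rather than conceptual: verifying positive semidefiniteness of the full $SE(2)$ regularization matrix $R$ is not obvious from the explicit sum-of-Kronecker-products form in Lemma~\ref{lem:regtermSE2}, but it becomes immediate once $R$ is recognized — precisely through that lemma — as the matrix representation of the nonnegative Dirichlet-type quadratic form $\|\nabla T\|_D^2$ integrated over $SE(2)$. Everything else is a term-by-term transcription of the $\mathbb{R}^2$ argument of Corollary~\ref{cor:1}.
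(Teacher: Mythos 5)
Your proof is correct and takes essentially the same route as the paper: the paper presents Corollary~\ref{cor:2} as an immediate consequence of Lemma~\ref{lem:regtermSE2}, namely that substituting the B-spline expansion turns the data, smoothing and ridge terms into $\|S\mathbf{c}-\mathbf{y}\|^2$, $\mathbf{c}^\dagger R\,\mathbf{c}$ and $\mu\,\mathbf{c}^\dagger\mathbf{c}$, after which the minimizer over $V$ is characterized by the vanishing gradient, i.e.\ Eq.~(14) of the main article with the $SE(2)$ matrices $S$ and $R$. The extra details you supply --- the coefficient-to-template isomorphism, positive semidefiniteness of $R$ read off from the Dirichlet-type form, strict convexity and uniqueness for $\mu>0$, and absorbing the $1/N$ factor into $\lambda,\mu$ --- merely make explicit steps the paper leaves implicit.
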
 

\section{Normalized Cross Correlation}
\label{sec:normalizedCrossCorrelation}
In most applications it is necessary to make the detection system invariant to local contrast and luminosity changes. In our template matching framework this can either be achieved via certain pre-processing steps that filter out these variations, or by means of normalized cross-correlation. In normalized cross-correlation, both the template as well as the image are (locally) normalized to zero mean and unit standard deviation (with respect to the inner product used in the cross-correlations).
In this section, we explain the necessary adaptations to extend the standard cross-correlation based framework to normalized cross-correlations.

\subsection{Normalized Cross-Correlation in $\mathbb{R}^2$}
\label{subsec:normedCCR2}
In the usual cross-correlation based template matching approach, as described in Sec.~2 of the main article, we rely on the standard $\mathbb{L}_2(\mathbb{R}^2)$ inner product (Eq. (6) of the main article). In normalized cross-correlation it is however convenient to extend this inner product to include a windowing function $m$ which indicates the relevant region (support) of the template. As such, the inner product with respect to windowing function $m$ is given by
\begin{equation}
(t,f)_{\mathbb{L}_2 (\mathbb{R}^2, m d \tilde{\mathbf{x}})} := \int_{\mathbb{R}^2} \overline{t(\tilde{\mathbf{x}})} f(\tilde{\mathbf{x}}) m(\tilde{\mathbf{x}}) {\rm d}\tilde{\mathbf{x}},
\end{equation}
with associated norm $\lVert \cdot \rVert_{\mathbb{L}_2(\mathbb{R}^2, m d \tilde{\mathbf{x}})} = \sqrt{ (\cdot , \cdot )_{\mathbb{L}_2(\mathbb{R}^2, m d \tilde{\mathbf{x}})} }$. The windowing function has to be a smooth function $m:\mathbb{R}^2 \rightarrow \mathbb{R}^+$ with $\int_{\mathbb{R}^2} m(\tilde{\mathbf{x}}){\rm d}\tilde{\mathbf{x}} = 1$. In this work, the use of a window $m$ is also convenient to deal with boundary conditions in the optimization problems for template construction.
We define
\begin{equation}
\label{eq:mass}
m(\mathbf{x}) :=  \varsigma \; e^{-\frac{\lVert \mathbf{x} \rVert^2}{s}} \sum\limits_{i=0}^n \frac{(\lVert \mathbf{x} \rVert^2/s)^{i}}{i!},
\end{equation}
which smoothly approximates the indicator function $1_{[0,r]}(\lVert \mathbf{x} \rVert)$, covering a disk with radius $r$, when setting $s = \frac{2 r^2}{1+2n}$, see e.g. \cite[Fig.~2]{Bekkers2014}. The constant $\varsigma$ normalizes the function such that $\int_{\mathbb{R}^2} m(\tilde{\mathbf{x}}){\rm d}\tilde{\mathbf{x}} = 1$.

In normalized cross-correlation the image is locally normalized (at position $\mathbf{x}$) to zero mean and unit standard deviation, which is done as follows
\begin{equation}
\label{eq:normalization}
\hat{f}_\mathbf{x}(\tilde{\mathbf{x}}) := \frac{f(\tilde{\mathbf{x}}) - \langle f \rangle_{\mathcal{T}_\mathbf{x} m}}{ \lVert f(\tilde{\mathbf{x}}) - \langle f \rangle_{\mathcal{T}_\mathbf{x} m} \rVert_{\mathbb{L}_2(\mathbb{R}^2, \mathcal{T}_\mathbf{x} m {\rm d}\tilde{\mathbf{x}})}},
\end{equation}
with local mean $\langle f \rangle_{m} = ( 1 , f )_{\mathbb{L}_2(\mathbb{R}^2, m {\rm d}\tilde{\mathbf{x}})}$. Template $\hat{t}$ can be obtained via normalization of a given template $t$ via
\begin{equation}
\label{eq:templateNormalizationR2}
\hat{t}(\tilde{\mathbf{x}}) := \frac{t(\tilde{\mathbf{x}}) - \langle t \rangle_{m}}{ \lVert t(\tilde{\mathbf{x}}) - \langle t \rangle_{m} \rVert_{\mathbb{L}_2(\mathbb{R}^2, m {\rm d}\tilde{\mathbf{x}})}}.
\end{equation}
Template matching is then done in the usual way (via (4) of the main article), however now $\hat{t}$ and $\hat{f}_{\mathbf{x}}$ are used instead of $t$ and $f$. In fact, the entire $\mathbb{R}^2$ cross-correlation template matching, and template optimization framework is extended to normalized cross-correlation by substituting all instances of $t$ with $\hat{t}$, $f$ with $\hat{f}_\mathbf{x}$, and $(\cdot , \cdot)_{\mathbb{L}_2(\mathbb{R}^2)}$ with $(\cdot,\cdot)_{\mathbb{L}_2 (\mathbb{R}^2, m d \tilde{\mathbf{x}})}$ in Sec.~2 of the main article. However, since templates $\hat{t}$ are directly constructed via the minimization of energy functionals, we will not explictely normalize the templates, unless they are obtained by other methods. E.g., Eq.~(\ref{eq:templateNormalizationR2}) is used in the main article to construct basic templates obtained by averaging positive object patches (Subsec.~4.1 of the main article).

\subsection{Normalized Cross-Correlation in $SE(2)$}
\label{subsec:normedCCSE2}
Similar to the $\mathbb{R}^2$ case, templates and orientation scores are locally normalized to zero mean and unit standard deviation, however, now with respect to the $\mathbb{L}_2(SE(2), M d\tilde{g})$ inner product, which is given by
\begin{multline}
(T,U_f)_{\mathbb{L}_2 (SE(2),M {\rm d}\tilde{g})} := \\
\int_{\mathbb{R}^2}\int_{0}^{2\pi} \overline{T(\tilde{\mathbf{x}},\tilde{\theta})} U_f(\tilde{\mathbf{x}},\tilde{\theta}) M(\tilde{\mathbf{x}},\tilde{\theta}) {\rm d}\tilde{\mathbf{x}}{\rm d}\tilde{\theta},
\end{multline}
with norm $\lVert \cdot \rVert_{\mathbb{L}_2(SE(2),M {\rm d}\tilde{g})} = \sqrt{ (\cdot , \cdot )_{\mathbb{L}_2(SE(2),M {\rm d}\tilde{g})} }$. Also here windowing function $M$ indicates the support of the template, and has the property $\int_{\mathbb{R}^2}\int_{0}^{2\pi} M(\tilde{\mathbf{x}},\tilde{\theta}) {\rm d}\tilde{\mathbf{x}}{\rm d}\tilde{\theta} = 1$. We define
\begin{equation}
M(\mathbf{x},\theta) := \frac{1}{2\pi} m(\mathbf{x}),
\end{equation}
independent of $\theta$ and with $m(\mathbf{x})$ given by (\ref{eq:mass}).

The (locally at $g$) normalized orientation score and template $T$ are then given by
\begin{align}
\hat{U}_{f,g}(\tilde{\mathbf{x}},\tilde{\theta}) &:= \frac{U_f(\tilde{\mathbf{x}},\tilde{\theta}) - \langle U_f \rangle_{\mathcal{L}_{g} M}}{ \lVert U_f(\tilde{\mathbf{x}},\tilde{\theta}) - \langle U_f \rangle_{\mathcal{L}_g M} \rVert_{\mathbb{L}_2(SE(2), \mathcal{L}_g M d\tilde{g})}},\label{eq:normalizationSE2a}\\
\hat{T}(\tilde{\mathbf{x}},\tilde{\theta}) &:= \frac{T(\tilde{\mathbf{x}},\tilde{\theta}) - \langle T \rangle_{M}}{ \lVert T(\tilde{\mathbf{x}},\tilde{\theta}) - \langle T \rangle_{M} \rVert_{\mathbb{L}_2(SE(2), M {\rm d}\tilde{g})}},\label{eq:normalizationSE2b}
\end{align}
with mean $\langle U_f \rangle_{M} = ( 1 , U_f )_{\mathbb{L}_2( SE(2), M {\rm d}\tilde{g}) )}$.

\subsection{Efficient local normalization of $\hat{f}_{\mathbf{x}}$ and $\hat{U}_{f,g}$.}
\label{subsec:efficientNormalization}
Since the normalized image $\hat{f}_{\mathbf{x}}$ depends on the location $\mathbf{x}$ it needs to be calculated for every translation of the template, which makes normalized cross-correlation computationally expensive. Therefore, (\ref{eq:normalization}) can be approximated by assuming that the local average is approximately constant in the area covered by $m$. That is, assuming $\langle f \rangle_{\mathcal{T}_{\mathbf{x}} m}(\tilde{\mathbf{x}}) \approx \langle f \rangle_{\mathcal{T}_{\tilde{\mathbf{x}}} m}(\tilde{\mathbf{x}}) = (m \star f)(\tilde{\mathbf{x}})$  for \mbox{$\lVert \tilde{\mathbf{x}}-\mathbf{x} \rVert < r$}, with $r$ the radius that determines the extent of $m$, (\ref{eq:normalization}) is approximated as follows:
\begin{equation}
\label{eq:fnormedapprox}
\hat{f}_{\mathbf{x}}(\tilde{\mathbf{x}}) \approx \cfrac{f(\tilde{\mathbf{x}}) - (m \star f)(\tilde{\mathbf{x}})}{ \sqrt{ (m \star (f - (m \star f))^2)(\tilde{\mathbf{x}}) } }.
\end{equation}
Similarly, in the $SE(2)$-case (\ref{eq:normalizationSE2a}) can be approximated via
\begin{equation}
\hat{U}_{f,g}(\tilde{\mathbf{x}},\tilde{\theta}) \approx \frac{U_f(\tilde{\mathbf{x}},\tilde{\theta}) - (M \star_{SE(2)} U_f)(\mathbf{x},\tilde{\theta}) }{ \sqrt{  (M \star_{SE(2)} (U_f - (M \star_{SE(2)} U_f))^2 )(\mathbf{x},\tilde{\theta})  }  }.
\end{equation}

\subsection{Including a Region of Interest Mask}
\label{subsec:roi}
Depending on the application, large portions of the image might be masked out. This for example the case in retinal images (see circular masks in Fig.~\ref{fig:resultsOverviewImageONH}). To deal with this, template matching is only performed inside the region of interest defined by a mask image $m^{roi}:\mathbb{R}^2 \rightarrow \{0,1\}$. Including such a mask is important in normalized template matching, and can be done by replacing the standard inner products by
\begin{align}
\label{eq:productwithmask}
(t,f)^{roi}_{\mathbb{L}_2(\mathbb{R}^2,m,d\tilde{\mathbf{x}})} &:= \frac{ (t,f m^{roi})_{\mathbb{L}_2(\mathbb{R}^2,m,d\tilde{\mathbf{x}})} }{ (1, m^{roi})_{\mathbb{L}_2(\mathbb{R}^2,m,d\tilde{\mathbf{x}})} },\\
(T,U_f)^{roi}_{\mathbb{L}_2(SE(2),M,d\tilde{g})} &:= \frac{ (T,U_f M^{roi})_{\mathbb{L}_2(SE(2),M,d\tilde{g})} }{ (1, M^{roi})_{\mathbb{L}_2(SE(2),M,d\tilde{g})} },
\end{align}
with $M^{roi}(\mathbf{x},\theta) = m^{roi}(\mathbf{x})$.

\section{Additional Details on the Detection Problems}
\label{sec:additionalDetails}
In this section we describe additional details about the implementation and results of the three detection problems discussed in the main article.

\begin{figure*}
\begin{center}
\includegraphics[width=\linewidth]{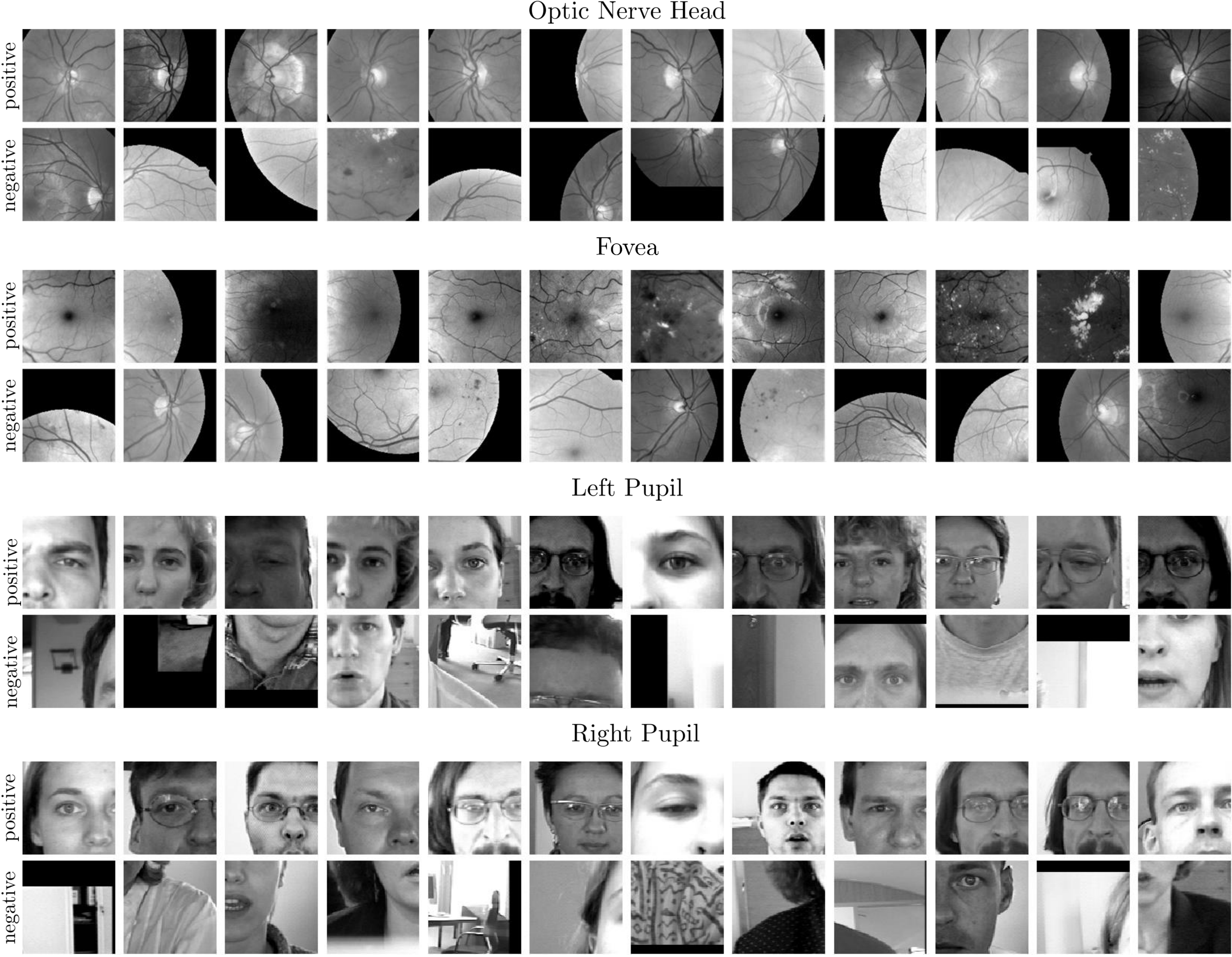}
\end{center}
\caption{A selection of positive and negative image patches $f_i$ used in the training of templates.
}
\label{fig:TrainingSamples}
\end{figure*}

\begin{table*}
\centering
\caption{Average processing times. For optic nerve head detection (ONH) the average is taken over 1529 images of the TC, MESSIDOR, DRIVE and STARE database. For fovea detection the average is taken over 1408 images of the TC and MESSIDOR database. For pupil detection the average is taken over 1521 images of the BioID database.}
\begin{tabular}{lll|ll|ll}
\cmidrule[1.5pt]{1-7}
& \multicolumn{2}{c|}{ONH} & \multicolumn{2}{c|}{Fovea} & \multicolumn{2}{c}{\;\;Pupil (left \& right)\;\;}\\
& \multicolumn{1}{l}{$\mathbb{R}^2$} & \multicolumn{1}{l|}{$SE(2)$} & \multicolumn{1}{l}{$\mathbb{R}^2$} & \multicolumn{1}{l|}{$SE(2)$} & \multicolumn{1}{l}{$\mathbb{R}^2$} & \multicolumn{1}{l}{$SE(2)$} \\
\cmidrule{2-7}
& \multicolumn{6}{c}{{Timings (ms)}}\\
\cmidrule{2-7}

  \multicolumn{1}{l}{1. Rescaling}          & 106\hspace{3em}   & 106\hspace{3em}     & 111\hspace{3em}    & 111\hspace{3em}      & 0\hspace{3em}    & 0\hspace{3em} \\
  \multicolumn{1}{l}{2. $\mathbb{R}^2$-Processing}  & 66    & 66      & 64    & 64      & 71    & 71 \\
  \multicolumn{1}{l}{3. OS Transform}         & 0     & 108     & 0     & 108     & 0    & 121 \\
  \multicolumn{1}{l}{4. $SE(2)$-Processing}     & 0     & 5       & 0     & 5       & 0   & 6 \\
  \multicolumn{1}{l}{5. Template Matching}      & 20    & 195     & 19    & 190     & 26    & 116 \vspace{\smallspacing}\\
  \multicolumn{1}{l}{Total}             & 192   & 479     & 195   & 477     & 97    & 313 \\

\cmidrule{2-7}
& \multicolumn{6}{c}{{Combined Total Timings (ms) - $\mathbb{R}^2$ and $SE(2)$}}\\
\cmidrule{2-7}

\multicolumn{1}{c}{ }& \multicolumn{2}{c|}{497} & \multicolumn{2}{c|}{501} & \multicolumn{2}{c}{420} \\

\cmidrule{2-7}
& \multicolumn{6}{c}{{Combined Total Timings (ms) - Fovea and ONH}}\\
\cmidrule{2-7}

\multicolumn{1}{c}{ }& \multicolumn{4}{c|}{730} & \\

\cmidrule[1.5pt]{1-7}
\end{tabular}
%
\label{tab:timings}
\end{table*}

\begin{table*}
\centering
\caption{Success rates for optic nerve head detection ($\pm$ standard deviation, number of fails in parenthesis) with varying accuracy requirements in 5-fold cross validation. Maximum distance to ground truth location is expressed in optic disk radius $R$.}
\begin{tabular}{l|lllll}
\toprule
 & \multicolumn{5}{c}{Maximum distance to ground truth}\\
Database (\# of images)  & \;\;\;\;\;\;\;\;\;\;\;\;R/8\;\;\;\;\;\;\;\;\;\;\;\; & \;\;\;\;\;\;\;\;\;\;\;\;R/4\;\;\;\;\;\;\;\;\;\;\;\; & \;\;\;\;\;\;\;\;\;\;\;\;R/2\;\;\;\;\;\;\;\;\;\;\;\; & \;\;\;\;\;\;\;\;\;\;\;\;R\;\;\;\;\;\;\;\;\;\;\;\; & \;\;\;\;\;\;\;\;\;\;\;\;2R\;\;\;\;\;\;\;\;\;\;\;\;\\

\midrule

ES (SLO)\;\;\;\;\;\;\;(208) & 98.05\% {\tiny $\pm$ 2.04\%} (4) & 100.0\% {\tiny $\pm$ 0.00\%} (0) & 100.0\% {\tiny $\pm$ 0.00\%} (0) & 100.0\% {\tiny $\pm$ 0.00\%} (0) & 100.0\% {\tiny $\pm$ 0.00\%} (0)\\
TC\;\;\;\;\;\;\;\;\;\;\;\;\;\;\;\;\;(208) & 84.19\% {\tiny $\pm$ 4.34\%} (33) & 94.54\% {\tiny $\pm$ 3.51\%} (11) & 99.52\% {\tiny $\pm$ 1.06\%} (1) & 100.0\% {\tiny $\pm$ 0.00\%} (0) & 100.0\% {\tiny $\pm$ 0.00\%} (0)\\
MESSIDOR \;\;(1200) & 73.07\% {\tiny $\pm$ 3.69\%} (323) & 94.41\% {\tiny $\pm$ 1.47\%} (67) & 99.50\% {\tiny $\pm$ 0.46\%} (6) & 99.92\% {\tiny $\pm$ 0.19\%} (1) & 100.0\% {\tiny $\pm$ 0.00\%} (0)\\
DRIVE \;\;\;\;\;\;\;\;\;\,(40) & 70.84\% {\tiny $\pm$ 26.0\%} (13) & 91.69\% {\tiny $\pm$ 12.3\%} (4) & 98.18\% {\tiny $\pm$ 4.07\%} (1) & 98.18\% {\tiny $\pm$ 4.07\%} (1) & 100.0\% {\tiny $\pm$ 0.00\%} (0)\\
STARE \;\;\;\;\;\;\;\;\;\,(81) & 48.12\% {\tiny $\pm$ 10.27\%} (42) & 74.94\% {\tiny $\pm$ 6.52\%} (20) & 89.39\% {\tiny $\pm$ 8.16\%} (9) & 98.67\% {\tiny $\pm$ 2.98\%} (1) & 98.67\% {\tiny $\pm$ 2.98\%} (1)\\
&&&&&\\
All Images\;\;\;\;(1737) & 76.11\% {\tiny $\pm$ 2.58\%} (415) & 94.13\% {\tiny $\pm$ 0.79\%} (102) & 99.02\% {\tiny $\pm$ 0.26\%} (17) & 99.83\% {\tiny $\pm$ 0.26\%} (3) & 99.94\% {\tiny $\pm$ 0.13\%} (1)\\

\bottomrule
\end{tabular}
\label{tab:resultsONHAccuracy}
\end{table*}

\subsection{Training Samples}
In all three applications training samples were used to compute the templates. Positive training samples were centered around the object of interest. Negative training samples were centered around random locations in the image, but not within a certain distances to the true positive object location. In the retinal applications this distance was one optic disk radius, in the pupil detection application this was a normalized distance of 0.1 (cf. Eq.(39) of the main article). An selection of the 2D image pathes that were used in the experiments are shown in Fig.~\ref{fig:TrainingSamples}.

\subsection{Processing Pipeline, Settings and Timings}
\label{subsec:implementationdetails}
\subsubsection{Processing Pipeline}
\label{subsec:ProcessingPipeline}
In all three application the same processing pipeline was used. The pipeline can be divided into the following 5 steps:
\begin{enumerate}
\item \emph{Resizing}. Each input image is resized to a certain operating resolution and cropped to remove large regions with value 0 (outside the field of view mask in retinal images, see e.g. Fig.~\ref{fig:resultsOverviewImageONH}). The retinal images are resized such that the pixel size was approximately $40 \mu m/pix$. In the pupil detection application no rescaling or cropping was done.
\item \emph{$\mathbb{R}^2$-Processing}. In all three applications we applied a local intensity and contrast normalization step using an adaptation of \cite{Foracchia2005} which we explain below. The locally normalized image $\hat{f}$ is then mapped through an error function via $\operatorname{erf}(8 \hat{f})$ to dampen outliers.
\item \emph{Orientation score transform}. The processed image is then taken as input for an orientation score transform using Eq.~(23) of the main article. For the oriented wavelets we used cake wavelets \cite{Duits2007a,Bekkers2014} of size $[51 \times 51]$ and with angular resolution $s_\theta = \pi/12$, and with sampling $\theta$ from $0$ to $\pi$.
\item \emph{$SE(2)$-Processing}. For phase-invariant, nonlinear, left-invariant \cite{Duits2010}, and contractive \cite{Bruna2013} processing on SE(2), we work with the modulus of the complex valued orientation scores rather than with the complex-valued scores themselves (taking the modulus of quadrature filter responses is an effective technique for line detection, see e.g. Freeman et al. \cite{Freeman1991}).
\item \emph{Template Matching}. Finally we perform template matching using respectively Eqs.~(3),(4) and (5) of the main article for the $\mathbb{R}^2$ case and Eqs.~(3),(25) and (26) of the main article for the $SE(2)$ case.
\end{enumerate}

Regarding the image resolutions (step 1) we note that the average image size after rescaling was $[300\times 300]$. The average image resolutions in each database were as follows:
\begin{itemize}
\item \emph{ES (SLO)} contained images of average resolution $13.9\mu m /pix$.
\item \emph{TC} contained images of average resolution $9.4 \mu m /pix$.
\item \emph{MESSIDOR} contained images of 3 cameras with average resolutions $13.6\mu m /pix$, $9.1\mu m /pix$ and $8.6\mu m /pix$.
\item \emph{DRIVE} contained images of average resolution $21.9\mu m /pix$.
\item \emph{STARE} contained images of average resolution $17.6\mu m /pix$.
\end{itemize}

Regarding local image normalization (step 2) we note the following. Local image normalization was done using an adaptation of \cite{Foracchia2005}. The method first computes a local average and standard deviation of pixel intensities, and the image is locally normalized to zero mean and unit standard deviation. This is done via Eq.~(\ref{eq:fnormedapprox}). Then a background mask is construct by setting pixels with a larger distance than 1 standard deviation to the average (Mahalanobis distance) to 0, and other pixels to 1. This mask is then used to ignore outliers in a second computation of the local average and standard deviation. The final normalized image is again computed via Eq.~(\ref{eq:fnormedapprox}) but now with the inclusion of the background mask, see Eq.~(\ref{eq:productwithmask}).

\subsubsection{Template Settings}
In the retinal applications we used $\mathbb{R}^2$ templates of size $[N_x \times N_y] = [251 \times 251]$ which were covered by a grid of B-spline basis functions of size $[N_k \times N_l] = [51 \times 51]$, the  $SE(2)$ templates were of size $[N_x \times N_y \times N_\theta] = [251 \times 251 \times 12]$ and were covered by a grid of B-spline basis functions of size $[N_k \times N_l \times N_m] = [51 \times 51 \times 12]$.

In the pupil detection application we used $\mathbb{R}^2$ templates of size $[N_x \times N_y] = [101 \times 101]$ which were also covered by a grid of B-spline basis functions of size $[N_k \times N_l] = [51 \times 51]$, the  $SE(2)$ templates were of size $[N_x \times N_y \times N_\theta] = [101 \times 101 \times 12]$ and were also covered by a grid of B-spline basis functions of size $[N_k \times N_l \times N_m] = [51 \times 51 \times 12]$.

The regularization parameters ($\lambda$, $\mu$ and $D_{\theta\theta}$) for the different template types were automatically optimized using generalized cross validation.

\subsubsection{Timings}
We computed the average time for detecting one (or two) object(s) in an image and tabulated the results in Tab.~\ref{tab:timings}. Here we sub-divided the timings into the 5 processing steps explained in Subsec.~\ref{subsec:ProcessingPipeline}. The average (full) processing time on the retinal images was in both applications approximately $500ms$. When both the ONH and fovea are detected by the same processing pipeline the processing took $730ms$. For pupil detection the average time to detect $\emph{both}$ the left and right pupil on the \emph{full} images was $420ms$.

The retinal images were on average of size $[1230 \times 1792]$, and $[300\times300]$ after cropping and resizing. The images in the pupil detection application were not resized or cropped and were of size $[286 \times 384]$.

All experiments were performed using Wolfram \emph{Mathematica} 10.4, on a computer with an Intel Core i703612QM CPU and 8GB memory.

\subsection{Detection Results}
In this section we provide the results for the three separate applications. A general discussion of these results can be found in the main article.

\subsubsection{Optic Nerve Head Detection}
A Table of detection performance for each type of template is provided in Tab.~1 of the main article. In Fig.~\ref{fig:resultsOverviewImageONH} we show the 3 failed cases for ONH detection, and a selection of correct ONH localizations in difficult images. In Table \ref{tab:resultsONHAccuracy} we show detection results for varying accuracy criteria. Note that detection results are typically reported for the accuracy requirement of 1 optic disk radius with the target (see also state-of-the-art comparison in Table~2 of the main article).

\begin{figure*}
\begin{center}
\includegraphics[width=\linewidth]{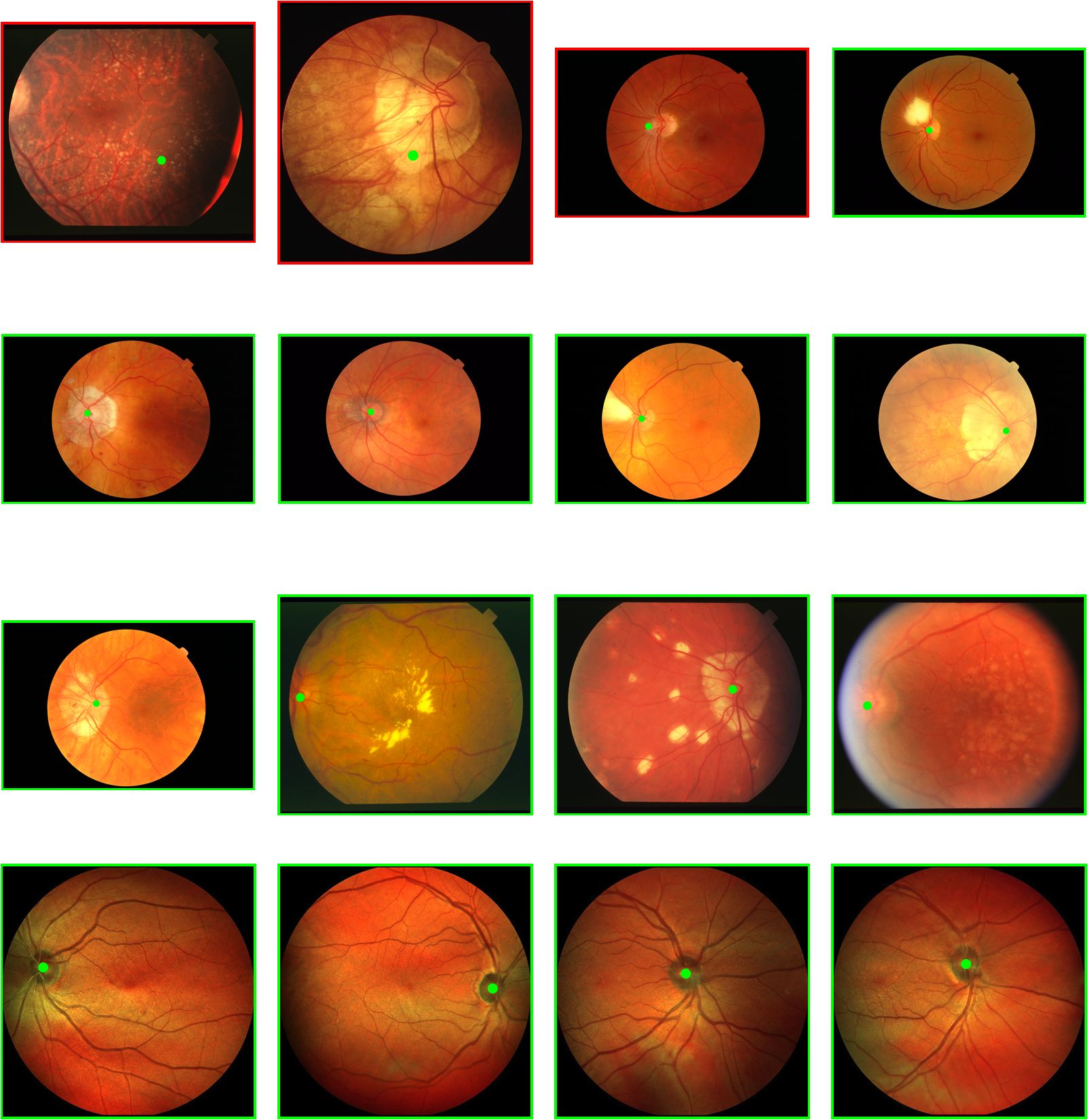}
\end{center}
\caption{Detection results of our best method for optic nerve head detection in retinal images. Successful detection are indicated with a green frame around the image, failed detections are indicated with a red frame. In the ONH detection application there were only 3 fails in a set of 1737 images.
}
\label{fig:resultsOverviewImageONH}
\end{figure*}

\subsubsection{Fovea Detection}
A Table of detection performance for each type of template is provided in Tab.~\ref{tab:resultsFovea}. In Fig.~\ref{fig:resultsOverviewImageFovea} we show next to a selection of successful detections the only 5 failed cases on images from conventional fundus (CF) cameras (TC, MESSIDOR, DRIVE, STARE), and 3 of the failed detections in images coming from an scanning laser ophthalmoscopy (SLO) camera.

As can also be read from Tab.~\ref{tab:resultsFovea}, we found that fovea detection in SLO images was significantly more difficult than fovea detection in CF images. The reason for this is that on SLO images the clear dark blob-like shape is not always present on these images. Compare for example the positive fovea patches from Fig.~\ref{fig:TrainingSamples} (where one generally sees a dark blob at the center) with the fovea locations in the bottom row of images in Figs.~\ref{fig:resultsOverviewImageONH} and \ref{fig:resultsOverviewImageFovea}.

Additionally, the ES (SLO) and CF databases are also more difficult than the MESSIDOR database for fovea detection, as these two databases contain a mix of both fovea centered and ONH centered images. The MESSIDOR database contains only fovea centered images, in which case the fovea is always located around the center of the image. Therefore, even though MESSIDOR is one of the most used databases, it might not be the most representative database for fovea detection benchmarking.

We show detection performance for a range of accuracy requirements in Table~\ref{tab:resultsFoveaAccuracy} for the different databases used in our experiments, and in Table~\ref{tab:resultsFoveaAccuracyStateOfArt} a comparison to the state of the art. There we see that for the stricter requirement of detection within half an optic disk radius our method still outperforms the state of the art. We also see that with further decreasing the acceptance distance ($R/4$ or lower) none of the methods provided acceptable results.

\begin{table*}
\centering
\caption{Average template matching results ($\pm$ standard deviation) for fovea detection in 5-fold cross validation, number of failed detections in parentheses.}
\begin{tabular}{l|lll|l}
\toprule
\multicolumn{1}{l|}{Template} & ES (SLO) & TC & MESSIDOR & All Images\\
ID & 208 & 208 & 1200 & 1616\\

\midrule
\multicolumn{5}{c}{{$\mathbb{R}^2$ templates}}\\
\midrule

  $A_{\mathbb{R}^2}$ & 76.36\% {\tiny $\pm$ 6.79\%} (49)     & 98.24\% {\tiny $\pm$ 2.74\%} (3)     & 98.41\% {\tiny $\pm$ 0.22\%} (19)   & \cellcolor{rowcolor}95.60\% {\tiny $\pm$ 0.98\%} (71) \vspace{\smallspacing}\\

  $B_{lin:\mathbb{R}^2}$ & 23.50\% {\tiny $\pm$ 3.81\%} (159)     & 31.66\% {\tiny $\pm$ 9.03\%} (142)     & 51.19\% {\tiny $\pm$ 5.97\%} (587)   & 45.07\% {\tiny $\pm$ 3.33\%} (888) \\

  $C_{lin:\mathbb{R}^2}$ & 45.65\% {\tiny $\pm$ 8.61\%} (113)     & 98.24\% {\tiny $\pm$ 2.74\%} (3)     & 98.59\% {\tiny $\pm$ 0.36\%} (17)   & 91.77\% {\tiny $\pm$ 1.26\%} (133) \\

  $D_{lin:\mathbb{R}^2}$ & 44.21\% {\tiny $\pm$ 4.62\%} (116)     & 99.49\% {\tiny $\pm$ 1.14\%} (1)     & 98.84\% {\tiny $\pm$ 0.31\%} (14)   & 91.90\% {\tiny $\pm$ 0.59\%} (131) \\

  $E_{lin:\mathbb{R}^2}$ & 46.10\% {\tiny $\pm$ 8.11\%} (112)     & 98.86\% {\tiny $\pm$ 1.57\%} (2)     & 98.67\% {\tiny $\pm$ 0.34\%} (16)   & 91.95\% {\tiny $\pm$ 1.18\%} \cellcolor{rowcolor}(130) \vspace{\smallspacing}\\

  $B_{log:\mathbb{R}^2}$ & 1.43\% {\tiny $\pm$ 1.31\%} (205)     & 10.27\% {\tiny $\pm$ 5.09\%} (185)     & 20.07\% {\tiny $\pm$ 3.00\%} (959)   & 16.53\% {\tiny $\pm$ 2.52\%} (1349) \\

  $C_{log:\mathbb{R}^2}$ & 9.59\% {\tiny $\pm$ 3.74\%} (188)     & 70.30\% {\tiny $\pm$ 8.57\%} (61)     & 77.61\% {\tiny $\pm$ 4.64\%} (267)   & 68.06\% {\tiny $\pm$ 3.53\%} (516) \\

  $D_{log:\mathbb{R}^2}$ & 11.48\% {\tiny $\pm$ 4.70\%} (184)     & 83.47\% {\tiny $\pm$ 7.80\%} (32)     & 88.22\% {\tiny $\pm$ 2.81\%} (141)   & 77.90\% {\tiny $\pm$ 2.00\%} \cellcolor{rowcolor}(357) \\

  $E_{log:\mathbb{R}^2}$ & 2.86\% {\tiny $\pm$ 2.62\%} (202)     & 79.68\% {\tiny $\pm$ 7.92\%} (40)     & 84.79\% {\tiny $\pm$ 5.16\%} (181)   & 73.82\% {\tiny $\pm$ 2.62\%} (423) \\

\midrule
\multicolumn{5}{c}{{$SE(2)$ templates}}\\
\midrule

  $A_{SE(2)}$ & 67.81\% {\tiny $\pm$ 4.69\%} (67)     & 79.13\% {\tiny $\pm$ 9.11\%} (40)     & 98.25\% {\tiny $\pm$ 0.68\%} (21)   & \cellcolor{rowcolor}92.08\% {\tiny $\pm$ 0.84\%} (128) \vspace{\smallspacing}\\

  $B_{lin:SE(2)}$ & 83.19\% {\tiny $\pm$ 2.76\%} (35)     & 71.53\% {\tiny $\pm$ 7.36\%} (58)     & 91.31\% {\tiny $\pm$ 0.68\%} (104)   & 87.81\% {\tiny $\pm$ 1.25\%} (197) \\

  $C_{lin:SE(2)}$ & 83.65\% {\tiny $\pm$ 3.18\%} (34)     & 84.13\% {\tiny $\pm$ 6.25\%} (32)     & 98.23\% {\tiny $\pm$ 1.04\%} (21)   & \cellcolor{rowcolor}94.62\% {\tiny $\pm$ 0.36\%} (87) \\

  $D_{lin:SE(2)}$ & 73.57\% {\tiny $\pm$ 4.71\%} (55)     & 83.69\% {\tiny $\pm$ 6.83\%} (33)     & 97.88\% {\tiny $\pm$ 1.17\%} (25)   & 93.01\% {\tiny $\pm$ 1.09\%} (113) \\

  $E_{lin:SE(2)}$ & 77.83\% {\tiny $\pm$ 4.29\%} (46)     & 84.88\% {\tiny $\pm$ 6.69\%} (30)     & 98.22\% {\tiny $\pm$ 1.23\%} (21)   & 94.00\% {\tiny $\pm$ 0.93\%} (97) \vspace{\smallspacing}\\

  $B_{log:SE(2)}$ & 75.49\% {\tiny $\pm$ 5.73\%} (51)     & 60.80\% {\tiny $\pm$ 5.68\%} (80)     & 92.79\% {\tiny $\pm$ 1.98\%} (86)   & 86.56\% {\tiny $\pm$ 2.20\%} (217) \\

  $C_{log:SE(2)}$ & 79.33\% {\tiny $\pm$ 6.57\%} (43)     & 70.87\% {\tiny $\pm$ 10.28\%} (59)     & 96.90\% {\tiny $\pm$ 0.71\%} (37)   & 91.39\% {\tiny $\pm$ 1.36\%} \cellcolor{rowcolor}(139) \\

  $D_{log:SE(2)}$ & 62.09\% {\tiny $\pm$ 6.66\%} (79)     & 72.57\% {\tiny $\pm$ 8.59\%} (54)     & 96.64\% {\tiny $\pm$ 1.05\%} (40)   & 89.30\% {\tiny $\pm$ 0.63\%} (173) \\

  $E_{log:SE(2)}$ & 68.34\% {\tiny $\pm$ 8.59\%} (66)     & 72.20\% {\tiny $\pm$ 8.53\%} (55)     & 96.57\% {\tiny $\pm$ 0.96\%} (41)   & 89.98\% {\tiny $\pm$ 1.25\%} (162) \\

\midrule
\multicolumn{5}{c}{{Template combinations (sorted on performance)}}\\
\midrule

  $C_{lin:\mathbb{R}^2}+C_{log:SE(2)}$ & 97.17\% {\tiny $\pm$ 3.01\%} (6)     & 99.17\% {\tiny $\pm$ 1.13\%} (2)     & 99.74\% {\tiny $\pm$ 0.38\%} (3)   & \cellcolor{rowcolor}99.32\% {\tiny $\pm$ 0.26\%} (11) \\

   \hspace{-0.8em}$^*$ $A_{\mathbb{R}^2} \;\;\;\;\; +C_{lin:SE(2)}$ & 98.08\% {\tiny $\pm$ 2.03\%} (4)     & 98.07\% {\tiny $\pm$ 1.95\%} (4)     & 99.68\% {\tiny $\pm$ 0.33\%} (4)   & 99.26\% {\tiny $\pm$ 0.47\%} (12) \\

  $E_{lin:\mathbb{R}^2}+C_{log:SE(2)}$ & 96.20\% {\tiny $\pm$ 3.15\%} (8)     & 99.17\% {\tiny $\pm$ 1.13\%} (2)     & 99.75\% {\tiny $\pm$ 0.23\%} (3)   & 99.20\% {\tiny $\pm$ 0.35\%} (13) \\

  $E_{lin:\mathbb{R}^2}+C_{lin:SE(2)}$ & 96.65\% {\tiny $\pm$ 2.13\%} (7)     & 99.17\% {\tiny $\pm$ 1.13\%} (2)     & 99.66\% {\tiny $\pm$ 0.36\%} (4)   & 99.19\% {\tiny $\pm$ 0.42\%} (13) \\

  $C_{lin:\mathbb{R}^2}+C_{lin:SE(2)}$ & 97.14\% {\tiny $\pm$ 1.97\%} (6)     & 98.78\% {\tiny $\pm$ 1.78\%} (3)     & 99.58\% {\tiny $\pm$ 0.31\%} (5)   & 99.13\% {\tiny $\pm$ 0.40\%} (14) \\

  $A_{\mathbb{R}^2} \;\;\;\;\; +E_{lin:SE(2)}$ & 97.59\% {\tiny $\pm$ 1.73\%} (5)     & 98.07\% {\tiny $\pm$ 1.95\%} (4)     & 99.59\% {\tiny $\pm$ 0.28\%} (5)   & 99.13\% {\tiny $\pm$ 0.25\%} (14) \\

  $E_{lin:\mathbb{R}^2}+E_{lin:SE(2)}$ & 96.16\% {\tiny $\pm$ 2.76\%} (8)     & 99.17\% {\tiny $\pm$ 1.13\%} (2)     & 99.58\% {\tiny $\pm$ 0.31\%} (5)   & 99.07\% {\tiny $\pm$ 0.38\%} (15) \\

  $E_{lin:\mathbb{R}^2}+D_{lin:SE(2)}$ & 95.71\% {\tiny $\pm$ 3.07\%} (9)     & 99.17\% {\tiny $\pm$ 1.13\%} (2)     & 99.58\% {\tiny $\pm$ 0.31\%} (5)   & 99.01\% {\tiny $\pm$ 0.40\%} (16) \\

  $C_{lin:\mathbb{R}^2}+E_{lin:SE(2)}$ & 96.16\% {\tiny $\pm$ 2.76\%} (8)     & 98.78\% {\tiny $\pm$ 1.78\%} (3)     & 99.58\% {\tiny $\pm$ 0.31\%} (5)   & 99.01\% {\tiny $\pm$ 0.51\%} (16) \\

  $A_{\mathbb{R}^2} \;\;\;\;\; +C_{log:SE(2)}$ & 96.65\% {\tiny $\pm$ 2.13\%} (7)     & 98.07\% {\tiny $\pm$ 1.95\%} (4)     & 99.58\% {\tiny $\pm$ 0.42\%} (5)   & 99.01\% {\tiny $\pm$ 0.26\%} (16) \\

  \multicolumn{1}{l|}{{\;\;\;\;\;\;\;\;\;\;\;\;\;...}} & \multicolumn{3}{c|}{{...}} & \multicolumn{1}{c}{{...}}\\

  \hspace{-0.8em}$^\dagger$ $A_{\mathbb{R}^2} \;\;\;\;\; +A_{SE(2)}$ & 92.85 \% {\tiny $\pm$ 4.68\%} (15)     & 95.84\% {\tiny $\pm$ 2.58\%} (8)     & 99.58\% {\tiny $\pm$ 0.30\%} (5)   & 98.27\% {\tiny $\pm$ 0.70\%} (28) \\

  \multicolumn{1}{l|}{{\;\;\;\;\;\;\;\;\;\;\;\;\;...}} & \multicolumn{3}{c|}{{...}} & \multicolumn{1}{c}{{...}}\\
\bottomrule
\multicolumn{5}{l}{$^*$\emph{Best template combination that does not rely on logistic regression.}}\\
\multicolumn{5}{l}{$^\dagger$\emph{Best template combination that does not rely on template optimization.}}
\end{tabular}
%
\label{tab:resultsFovea}
\end{table*}

\begin{table*}
\centering
\caption{Success rates for fovea detection ($\pm$ standard deviation, number of fails in parenthesis) with varying accuracy requirements in 5-fold cross validation. Maximum distance to ground truth location is expressed in optic disk radius $R$.}
\begin{tabular}{l|lllll}
\toprule
 & \multicolumn{5}{c}{Maximum distance to ground truth}\\
Database (\# of images)  & \;\;\;\;\;\;\;\;\;\;\;\;R/8\;\;\;\;\;\;\;\;\;\;\;\; & \;\;\;\;\;\;\;\;\;\;\;\;R/4\;\;\;\;\;\;\;\;\;\;\;\; & \;\;\;\;\;\;\;\;\;\;\;\;R/2\;\;\;\;\;\;\;\;\;\;\;\; & \;\;\;\;\;\;\;\;\;\;\;\;R\;\;\;\;\;\;\;\;\;\;\;\; & \;\;\;\;\;\;\;\;\;\;\;\;2R\;\;\;\;\;\;\;\;\;\;\;\;\\

\midrule

ES (SLO)\;\;\;\;\;\;\;(208) & 66.91\% {\tiny $\pm$ 4.64\%} (69) & 92.85\% {\tiny $\pm$ 3.16\%} (15) & 94.74\% {\tiny $\pm$ 1.93\%} (11) & 97.17\% {\tiny $\pm$ 3.01\%} (6) & 97.66\% {\tiny $\pm$ 3.28\%} (5)\\
TC\;\;\;\;\;\;\;\;\;\;\;\;\;\;\;\;\;(208) & 49.51\% {\tiny $\pm$ 4.07\%} (106) & 80.33\% {\tiny $\pm$ 3.22\%} (40) & 95.41\% {\tiny $\pm$ 1.77\%} (9) & 99.17\% {\tiny $\pm$ 1.13\%} (2) & 99.61\% {\tiny $\pm$ 0.88\%} (1)\\
MESSIDOR \;\;(1200) & 61.81\% {\tiny $\pm$ 2.64\%} (459) & 90.56\% {\tiny $\pm$ 1.31\%} (113) & 98.07\% {\tiny $\pm$ 0.87\%} (23) & 99.74\% {\tiny $\pm$ 0.38\%} (3) & 100.0\% {\tiny $\pm$ 0.00\%} (0)\\
&&&&&\\
All Images\;\;\;\;(1616) & 60.78\% {\tiny $\pm$ 1.84\%} (634) & 89.60\% {\tiny $\pm$ 0.80\%} (168) & 97.34\% {\tiny $\pm$ 0.65\%} (43) & 99.32\% {\tiny $\pm$ 0.26\%} (11) & 99.63\% {\tiny $\pm$ 0.40\%} (6)\\

\bottomrule
\end{tabular}
\label{tab:resultsFoveaAccuracy}
\end{table*}

\begin{table*}
\centering
\caption{Success rates for fovea detection (number of fails in parenthesis) with varying accuracy requirements; a comparison to literature using the MESSIDOR database. Maximum distance to ground truth location is expressed in optic disk radius $R$.}
\begin{tabular}{l|lllll}
\toprule
 & \multicolumn{5}{c}{Maximum distance to ground truth}\\
Method  & \;\;\;\;\;\;\;\;\;\;\;\;R/8\;\;\;\;\;\;\;\;\;\;\;\; & \;\;\;\;\;\;\;\;\;\;\;\;R/4\;\;\;\;\;\;\;\;\;\;\;\; & \;\;\;\;\;\;\;\;\;\;\;\;R/2\;\;\;\;\;\;\;\;\;\;\;\; & \;\;\;\;\;\;\;\;\;\;\;\;R\;\;\;\;\;\;\;\;\;\;\;\; & \;\;\;\;\;\;\;\;\;\;\;\;2R\;\;\;\;\;\;\;\;\;\;\;\;\\

\midrule

Niemeijer {\tiny et al. \cite{Niemeijer2009,GegundezArias2013} }& 75.67\% (292)& 93.50\% (78) & 96.83\% (38) & 97.92\% (25) & - \\
Yu et al. {\tiny et al. \cite{Yu2011}} & - & - & 95.00\% (60) & - & - \\
Gegundez-Arias {\tiny et al. \cite{GegundezArias2013}}   & 80.42\% (235) & 93.90\% (73)& 96.08\% (47)& 96.92\% (37)& 97.83\% (26)\\
Giachetti {\tiny et al. \cite{Giachetti2013}} & - & - & - & 99.10\% (11) & - \\
Aquino {\tiny \cite{Aquino2014}} & - & - & - & 98.20\% (21) & - \\
&&&&&\\
Proposed & 61.81\% (459) & 90.56\% (113) & 98.07\% (23) & 99.74\% (3) & 100.0\% (0)\\

\bottomrule
\end{tabular}
\label{tab:resultsFoveaAccuracyStateOfArt}
\end{table*}

\begin{figure*}
\begin{center}
\includegraphics[width=\linewidth]{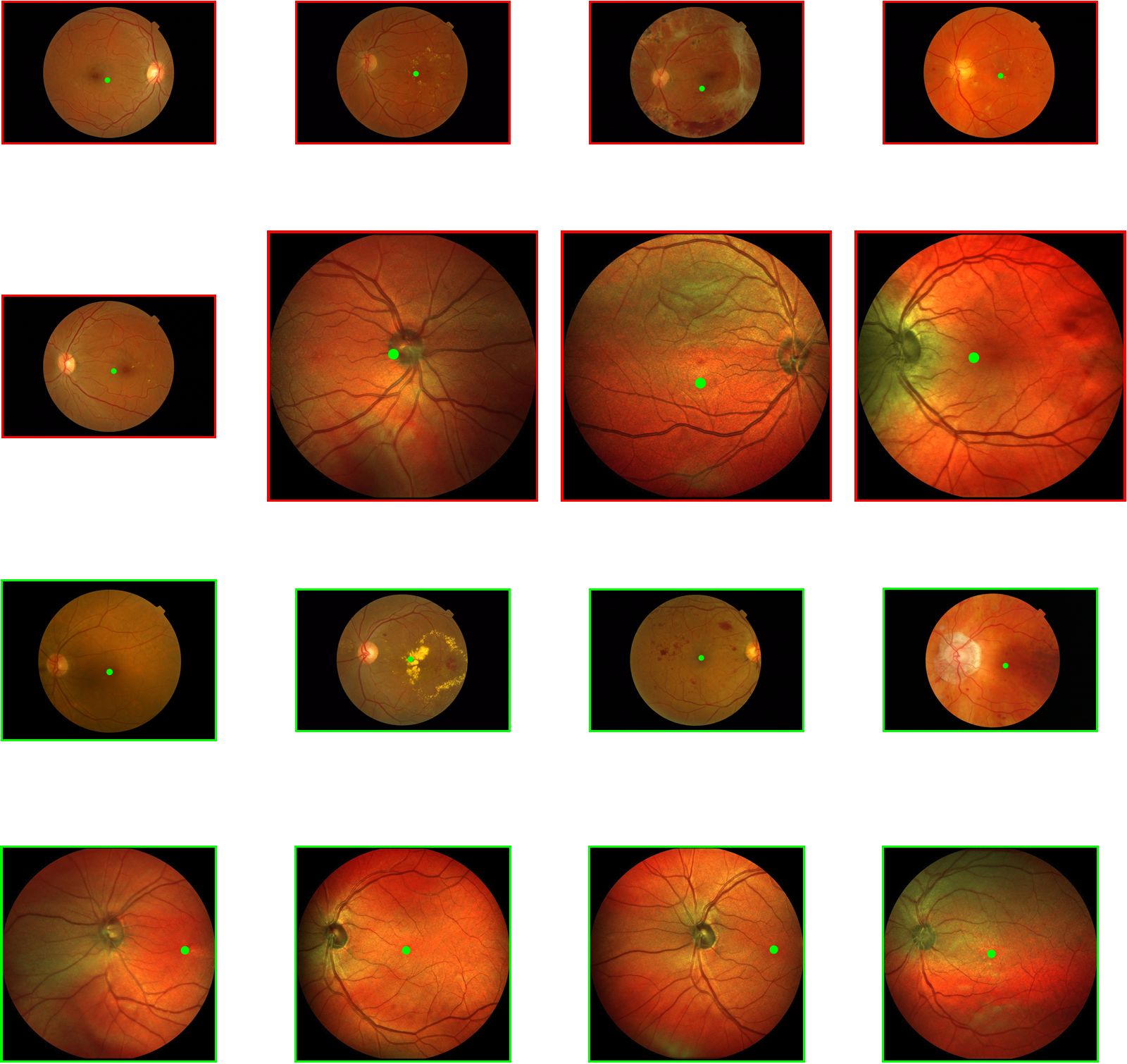}
\end{center}
\caption{Detection results of our best method for fovea detection in retinal images. Successful detection are indicated with a green frame around the image, failed detections are indicated with a red frame. In the fovea detection application there were only 5 fails in a set of 1408 conventional fundus (CF) camera images. Out of the 208 scanning laser ophthalmoscopy (SLO) images there were 6 fails, 3 of them are shown in this figure.
}
\label{fig:resultsOverviewImageFovea}
\end{figure*}

\subsubsection{Pupil Detection}
\label{subsec:pupilResults}

A Table of detection performance for each type of template is provided in Tab.~\ref{tab:resultsPupil}. In Fig.~\ref{fig:resultsOverviewImagePupil} we show a selection of failed and successful detections. By inspection of the failed cases we found that a main source of failed detections was due to rotations of the head. As stated in the previous section \ref{subsec:implementationdetails} we did not employ a rotation invariant detection scheme. Doing so might improve the results. Other failed detections could be attributed to closed eyes, reflection of glasses, distracting background objects and different scales (object distance to camera).

\begin{table}
\centering
\caption{Average template matching results ($\pm$ standard deviation) for pupil detection in 5-fold cross validation, number of failed detections in parentheses. A successful detection has a normalized error $e\le 0.1$.}
\begin{tabular}{l|ll}
\toprule
\multicolumn{1}{l|}{Template} & BioID (Full image)\;\;\; & BioID (Periocular image)\\
ID & 1521 & 1521  \\

\midrule
\multicolumn{3}{c}{{$\mathbb{R}^2$ templates}}\\
\midrule

  $A_{\mathbb{R}^2}$ & \cellcolor{rowcolor}41.03\% {\tiny $\pm$ 1.45\%} (897)     & \cellcolor{rowcolor}59.70\% {\tiny $\pm$ 1.52\%} (613)  \vspace{\smallspacing}\\

  $B_{lin:\mathbb{R}^2}$ & 0.00\% \hspace{0.5em}{\tiny $\pm$ 0.00\%} (1521)     & 3.62\% \hspace{0.5em}{\tiny $\pm$ 1.09\%} (1466)  \\

  $C_{lin:\mathbb{R}^2}$ & \cellcolor{rowcolor}12.95\% {\tiny $\pm$ 2.22\%} (1324)     & 67.26\% {\tiny $\pm$ 2.55\%} (498)  \\

  $D_{lin:\mathbb{R}^2}$ & 8.28\% \hspace{0.5em}{\tiny $\pm$ 1.80\%} (1395)     & \cellcolor{rowcolor}75.68\% {\tiny $\pm$ 2.33\%} (370)  \\

  $E_{lin:\mathbb{R}^2}$ & 11.51\% {\tiny $\pm$ 2.25\%} (1346)     & 71.47\% {\tiny $\pm$ 2.76\%} (434)  \vspace{\smallspacing}\\

  $B_{log:\mathbb{R}^2}$ & 0.00\%\hspace{0.5em} {\tiny $\pm$ 0.00\%} (1521)     & 0.00\% \hspace{0.5em}{\tiny $\pm$ 0.00\%} (1521)  \\

  $C_{log:\mathbb{R}^2}$ & \cellcolor{rowcolor}12.89\% {\tiny $\pm$ 2.06\%} (1325)     & \cellcolor{rowcolor}39.91\% {\tiny $\pm$ 3.37\%} (914)  \\

  $D_{log:\mathbb{R}^2}$ & 1.84\% \hspace{0.5em}{\tiny $\pm$ 0.95\%} (1493)     & 22.09\% {\tiny $\pm$ 2.37\%} (1185)  \\

  $E_{log:\mathbb{R}^2}$ & 10.39\% {\tiny $\pm$ 2.26\%} (1363)     & 37.21\% {\tiny $\pm$ 4.37\%} (955)  \\

\midrule
\multicolumn{3}{c}{{$SE(2)$ templates}}\\
\midrule

  $A_{SE(2)}$ & \cellcolor{rowcolor}57.72\% {\tiny $\pm$ 1.68\%} (643)     & \cellcolor{rowcolor}75.34\% {\tiny $\pm$ 1.31\%} (375)  \vspace{\smallspacing}\\

  $B_{lin:SE(2)}$ & 8.74\% \hspace{0.5em}{\tiny $\pm$ 2.00\%} (1388)     & 41.81\% {\tiny $\pm$ 5.04\%} (885)  \\

  $C_{lin:SE(2)}$ & 84.61\% {\tiny $\pm$ 4.19\%} (234)     & 86.78\% {\tiny $\pm$ 3.68\%} (201)  \\

  $D_{lin:SE(2)}$ & \cellcolor{rowcolor}85.53\% {\tiny $\pm$ 3.44\%} (220)     & \cellcolor{rowcolor}87.18\% {\tiny $\pm$ 3.71\%} (195)  \\

  $E_{lin:SE(2)}$ & 85.47\% {\tiny $\pm$ 3.82\%} (221)     & 87.11\% {\tiny $\pm$ 3.87\%} (196)  \vspace{\smallspacing}\\

  $B_{log:SE(2)}$ & 0.00\% \hspace{0.5em}{\tiny $\pm$ 0.00\%} (1521)     & 0.13\%\hspace{0.5em} {\tiny $\pm$ 0.29\%} (1519)  \\

  $C_{log:SE(2)}$ & 86.52\% \cellcolor{rowcolor}{\tiny $\pm$ 0.77\%} (205)     & \cellcolor{rowcolor}93.95\% {\tiny $\pm$ 1.33\%} (92)  \\

  $D_{log:SE(2)}$ & 75.21\% {\tiny $\pm$ 2.18\%} (377)     & 89.48\% {\tiny $\pm$ 2.27\%} (160)  \\

  $E_{log:SE(2)}$ & 83.30\% {\tiny $\pm$ 1.68\%} (254)     & 92.77\% {\tiny $\pm$ 1.02\%} (110)  \\

\midrule
\multicolumn{3}{c}{{Template combinations (sorted on performance full image)}}\\
\midrule

  \hspace{-0.6em}$^*$$C_{lin:\mathbb{R}^2} + E_{lin:SE(2)}$ & \cellcolor{rowcolor}93.49\% {\tiny $\pm$ 1.49\%} (99)     & 95.60\% {\tiny $\pm$ 1.46\%} (67)  \\

  $C_{lin:\mathbb{R}^2} + D_{lin:SE(2)}$ & 93.16\% {\tiny $\pm$ 1.54\%} (104)     & 95.00\% {\tiny $\pm$ 1.15\%} (76)  \\

  $E_{lin:\mathbb{R}^2} + E_{lin:SE(2)}$ & 93.10\% {\tiny $\pm$ 1.04\%} (105)     & 95.59\% {\tiny $\pm$ 0.89\%} (67)  \\

  $E_{lin:\mathbb{R}^2} + D_{lin:SE(2)}$ & 92.97\% {\tiny $\pm$ 1.62\%} (107)     & 95.27\% {\tiny $\pm$ 1.31\%} (72)  \\

  $C_{lin:\mathbb{R}^2} + C_{lin:SE(2)}$ & 92.70\% {\tiny $\pm$ 1.41\%} (111)     & 95.33\% {\tiny $\pm$ 0.97\%} (71)  \\

  $E_{lin:\mathbb{R}^2} + C_{lin:SE(2)}$ & 92.64\% {\tiny $\pm$ 0.94\%} (112)     & 95.33\% {\tiny $\pm$ 0.94\%} (71)  \\

  $D_{lin:\mathbb{R}^2} + D_{lin:SE(2)}$ & 92.51\% {\tiny $\pm$ 0.96\%} (114)     & 95.79\% {\tiny $\pm$ 0.82\%} (64)  \\

  $D_{lin:\mathbb{R}^2} + E_{lin:SE(2)}$ & 92.24\% {\tiny $\pm$ 1.23\%} (118)     & 95.86\% {\tiny $\pm$ 0.89\%} (63)  \\

  $E_{log:\mathbb{R}^2} + D_{lin:SE(2)}$ & 92.11\% {\tiny $\pm$ 2.26\%} (120)     & 93.23\% {\tiny $\pm$ 1.93\%} (103)  \\

  $D_{lin:\mathbb{R}^2} + C_{log:SE(2)}$ & 92.05\% {\tiny $\pm$ 1.52\%} (121)     & 95.14\% {\tiny $\pm$ 0.78\%} (74)  \\

  \multicolumn{1}{l|}{{\;\;\;\;\;\;\;\;\;\;\;\;\;...}} & \multicolumn{2}{c}{{...\;\;\;\;\;\;\;\;\;\;}}\\


\midrule
\multicolumn{3}{c}{{Template combinations (sorted on performance periocular image)}}\\
\midrule

  \hspace{-0.6em}$^*$$D_{lin:\mathbb{R}^2} + E_{lin:SE(2)}$ & 92.24\% {\tiny $\pm$ 1.23\%} (118)     &\cellcolor{rowcolor}95.86\% {\tiny $\pm$ 0.89\%} (63)  \\

  $D_{lin:\mathbb{R}^2} + D_{lin:SE(2)}$ & 92.51\% {\tiny $\pm$ 0.96\%} (114)     & 95.79\% {\tiny $\pm$ 0.82\%} (64)  \\

  $D_{lin:\mathbb{R}^2} + C_{lin:SE(2)}$ & 91.52\% {\tiny $\pm$ 1.25\%} (129)     & 95.73\% {\tiny $\pm$ 0.77\%} (65)  \\

  $E_{lin:\mathbb{R}^2} + E_{lin:SE(2)}$ & 93.10\% {\tiny $\pm$ 1.04\%} (105)     & 95.59\% {\tiny $\pm$ 0.89\%} (67)  \\

  $C_{lin:\mathbb{R}^2} + E_{lin:SE(2)}$ & 93.49\% {\tiny $\pm$ 1.49\%} (99)     & 95.60\% {\tiny $\pm$ 1.46\%} (67)  \\

  $E_{lin:\mathbb{R}^2} + C_{lin:SE(2)}$ & 92.64\% {\tiny $\pm$ 0.94\%} (112)     & 95.33\% {\tiny $\pm$ 0.94\%} (71)  \\

  $C_{lin:\mathbb{R}^2} + C_{lin:SE(2)}$ & 92.70\% {\tiny $\pm$ 1.41\%} (111)     & 95.33\% {\tiny $\pm$ 0.97\%} (71)  \\

  $E_{lin:\mathbb{R}^2} + D_{lin:SE(2)}$ & 92.97\% {\tiny $\pm$ 1.62\%} (107)     & 95.27\% {\tiny $\pm$ 1.31\%} (72)  \\

  $D_{lin:\mathbb{R}^2} + E_{log:SE(2)}$ & 91.72\% {\tiny $\pm$ 1.23\%} (126)     & 95.27\% {\tiny $\pm$ 0.79\%} (72)  \\

  $D_{lin:\mathbb{R}^2} + C_{log:SE(2)}$ & 92.05\% {\tiny $\pm$ 1.52\%} (121)     & 95.14\% {\tiny $\pm$ 0.78\%} (74)  \\

  \multicolumn{1}{l|}{{\;\;\;\;\;\;\;\;\;\;\;\;\;...}} & \multicolumn{2}{c}{{...\;\;\;\;\;\;\;\;\;\;}}\\

  \hspace{-0.6em}$^\dagger$ $A_{\mathbb{R}^2} \;\;\;\;\,+ A_{SE(2)}$ & 61.34\% {\tiny $\pm$ 1.54\%} (588)     & 68.18\% {\tiny $\pm$ 1.25\%} (484)  \\

  \multicolumn{1}{l|}{{\;\;\;\;\;\;\;\;\;\;\;\;\;...}} & \multicolumn{2}{c}{{...\;\;\;\;\;\;\;\;\;\;}}\\

\bottomrule

  \multicolumn{3}{l}{$^*$\emph{Best template combination that does not rely on logistic regression.}}\\
  \multicolumn{3}{l}{$^\dagger$\emph{Best template combination that does not rely on template optimization.}}
\end{tabular}
\label{tab:resultsPupil}
\end{table}

\begin{figure*}
\begin{center}
\includegraphics[width=\linewidth]{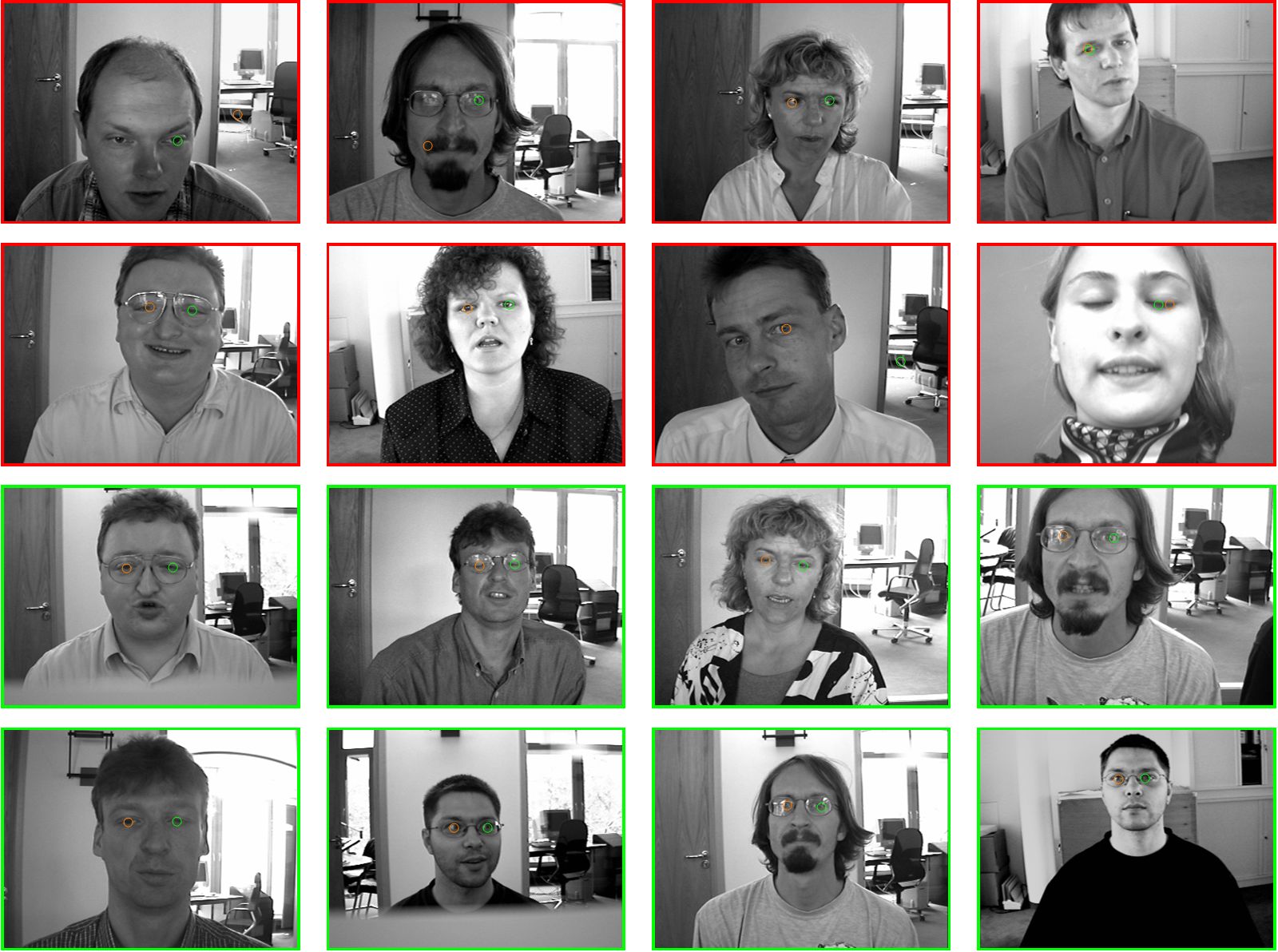}
\end{center}
\caption{Detection results of our best method for pupil detection. Successful detection are indicated with a green frame around the image, failed detections are indicated with a red frame.
}
\label{fig:resultsOverviewImagePupil}
\end{figure*}

\section{Rotation-Scale Invariant Matching}
\subsection{A Basic Extension}
\label{subsec:basicExtension}
The extension to rotation and scale invariant object localization of the 2D cross-correlation based template matching approach, described in Eqs.~(3)-(5) of the main article, is as follows. For the linear potential function (Eq.~(4) of the main article) we can define
\begin{equation}
\label{eq:linearFunctionalInv}
P_{lin,inv}^{\mathbb{R}^2}(\mathbf{x}) := \underset{\begin{array}{c}a\in[a_-,a_+],\\\alpha \in [0,2\pi)\end{array}}{\operatorname{max}}( \mathcal{T}_\mathbf{x} \mathcal{S}_a \mathcal{R}_\alpha \;t , f)_{\mathbb{L}_2(\mathbb{R}^2)},
\end{equation}
and for the logistic regression case (Eq.~(5) of the main article) we define
\begin{equation}
\label{eq:logisticFunctionalInv}
P_{log,inv}^{\mathbb{R}^2}(\mathbf{x})  := \underset{\begin{array}{c}a\in[a_-,a_+],\\\alpha \in [0,2\pi)\end{array}}{\operatorname{max}} \sigma \left( ( \mathcal{T}_\mathbf{x} \mathcal{S}_a \mathcal{R}_\alpha \;t , f)_{\mathbb{L}_2(\mathbb{R}^2)} \right),
\end{equation}
with $\sigma$ the logistic sigmoid function defined in Eq.~(5) of the main article, and with rotation operator $\mathcal{R}_\alpha$ and scaling operator $\mathcal{S}_a$ defined by
\begin{align}
(\mathcal{R}_\alpha t)(\mathbf{x}) &= t(\mathbf{R}_\alpha^{-1} \mathbf{x}),\\
(\mathcal{S}_a t)(\mathbf{x})  &= a^{-1} t(a \mathbf{x}),
\end{align}
with rotation matrix $\mathbf{R}_\alpha$ representing a counter clockwise rotation of angle $\alpha$.
By taking the maximum over scales $a$ (in a suitable range $[a_-,a_+]$) and rotations $\alpha$, the response of the best matching template is obtained at each location $\mathbf{x}$, and invariance is obtained with respect to scaling and rotation of the object of interest.

The rotation/scale invariant extension of the $SE(2)$ cross-correlation template matching case (Eqs.~(25)-(26) of the main article) is equally straightforward; for the linear potential we define
\begin{equation}
\label{eq:linearFunctionalInvSE2}
{P}_{lin,inv}^{SE(2)}(\mathbf{x}) :=   \underset{\begin{array}{c}a\in[a_-,a_+],\\\alpha \in [0,2\pi)\end{array}}{\operatorname{max}}( \mathcal{T}_{\mathbf{x}} \mathcal{S}_a \mathcal{R}_\alpha \; T , U_{f})_{\mathbb{L}_2(SE(2))},
\end{equation}
and for the logistic potential we define
\begin{equation}
\label{eq:logisticFunctionalInvSE2}
{P}_{log,inv}^{SE(2)}(\mathbf{x}) :=  \underset{\begin{array}{c}a\in[a_-,a_+],\\\alpha \in [0,2\pi)\end{array}}{\operatorname{max}} \sigma \left( ( \mathcal{T}_{\mathbf{x}} \mathcal{S}_a \mathcal{R}_\alpha \; T , U_{f})_{\mathbb{L}_2(SE(2))} \right),
\end{equation}
with for orientation score objects $T,U_f \in \mathbb{L}_2(SE(2))$ the rotation and scaling operators defined respectively by
\begin{align}
(\mathcal{R}_\alpha T)(\mathbf{x},\theta) &= T(\mathbf{R}_\alpha^{-1} \mathbf{x}, \theta - \alpha),\\
(\mathcal{S}_a T)(\mathbf{x},\theta)  &= a^{-1} T(a \mathbf{x},\theta).
\end{align}

It depends on the addressed template matching problem whether or not such invariance is desirable or not. In many applications the object is to be found in a human environment context, in which some objects tend to appear in specific orientations or at typical scales, and in which case rotation/scale invariance might not be desirable. E.g. the sizes of anatomical structures in the retina are relatively constant among different subjects (constant scale) and retinal images are typically taken at a fixed orientation (constant rotation). In the pupil detection problem the subjects typically appear in upright position behind the camera (constant rotation), and within a reasonable distance to the camera (constant scale). In the next Subsec.~\ref{subsubsec:detectionResults} we indeed show that in the applications considered in this manuscript rotation/scale invariance is not necessarily a desired property, and that computation time linearly increases with the number of rotations/scalings tested for (cf. Subsec.~\ref{subsubsec:timings}).



\subsection{Results with Rotation and Scale Invariance}
Here we perform rotation and scale invariant template matching via the extension described in Subsec.~\ref{subsec:basicExtension}. We selected the best template combination for each specific application and compared non-invariant template matching (as described in the main article) to rotation and/or scale invariant template matching (Subsec.~\ref{subsec:basicExtension}). The best template combination for ONH detection was $A_{\mathbb{R}^2}+C_{log:SE(2)}$, for fovea detection this was $C_{lin:\mathbb{R}^2}+C_{log:SE(2)}$, and for pupil detection this was $D_{lin:\mathbb{R}^2}+E_{lin:SE(2)}$.

For the retinal applications we only tested for rotation invariance with
$$
\alpha \in \{-\frac{\pi}{6}, -\frac{\pi}{8}, -\frac{\pi}{12}, -\frac{\pi}{24}, 0, \frac{\pi}{24} , \frac{\pi}{12}, \frac{\pi}{8}, \frac{\pi}{6}\},
$$
and did not included scale invariance since each retinal image was already rescaled to a standardized resolution (see Subsec.~\ref{subsec:ProcessingPipeline}). In pupil detection we tested for a range of scalings with
$$
a \in \{0.7,0.8,0.9,1.0,1.1,1.2,1.3\}
$$
to deal with varying pupil sizes caused by varying distances to the camera; and we tested for a range of rotations with
$$
\alpha \in \{-\frac{\pi}{4}, -\frac{\pi}{8}, -\frac{\pi}{16}, 0, \frac{\pi}{16} , \frac{\pi}{8}, \frac{\pi}{4}\}
$$
to deal with rotations of the head.

\subsubsection{Detection Results}
\label{subsubsec:detectionResults}
The detection results are shown in Table.~\ref{tab:resultsInvariant}. Here we can see that in all three applications the inclusion of a rotation/scale invariant matching scheme results in a slight decrease in performance.
This can be explained by the fact that variations in scale an rotation within the databases are small, and that the trained templates can already deal robustly with these variations (due to the presence of such variations in the training set). By introducing rotation/scale invariance one then only increases the likelihood of false positive detections.

\subsubsection{Computation Time}
\label{subsubsec:timings}
The effect on computation time of rotation/scale invariant matching is shown in Fig.~\ref{fig:RotScalTimings}. Here one sees that computation time linearly increases with the number of template rotations and scalings tested for. This timings-experiment is performed on the pupil detection application, and the shown timings are only of \emph{step 5} of the full detection pipeline (see Subsec.~\ref{subsec:ProcessingPipeline} and Table 1) as this is the only step that is affected by the rotation/scale invariant extension.

\begin{table}
\centering
\caption{Average template matching results ($\pm$ standard deviation, number of fails between parenthesis) for optic nerve head (ONH), fovea, and pupil detection in 5-fold cross validation.
}
\begin{tabular}{ll}
\toprule
Method & Success rate\\

\midrule
\midrule
\multicolumn{2}{l}{\;\;\;\;\;\;\;ONH Detection \;\;\; (1737 images)}\\
\midrule

  No invariance &  99.83\% {\tiny $\pm$ 0.26\%} (3)\\
  Rotation invariance &  99.60\% {\tiny $\pm$ 0.16\%} (7)\\

\midrule
\multicolumn{2}{l}{\;\;\;\;\;\;\;Fovea Detection \;\; (1616 images)}\\
\midrule

  No invariance &  99.32\% {\tiny $\pm$ 0.26\%} (11)\\
  Rotation invariance &  97.10\% {\tiny $\pm$ 0.65\%} (47)\\

\midrule
\multicolumn{2}{l}{\;\;\;\;\;\;\;Pupil Detection \;\;\; (1521 images)}\\
\midrule

  No invariance &  95.86\% {\tiny $\pm$ 0.89\%} (63)\\
  Rotation invariance &  94.48\% {\tiny $\pm$ 1.62\%} (84)\\
  Scale invariance &  95.33\% {\tiny $\pm$ 1.46\%} (71)\\
  Rotation + scale invariance &  94.28\% {\tiny $\pm$ 2.10\%} (87)\\

\bottomrule
\end{tabular}
\label{tab:resultsInvariant}
\end{table}

\begin{figure}
\centerline{
\includegraphics[width=\hsize]{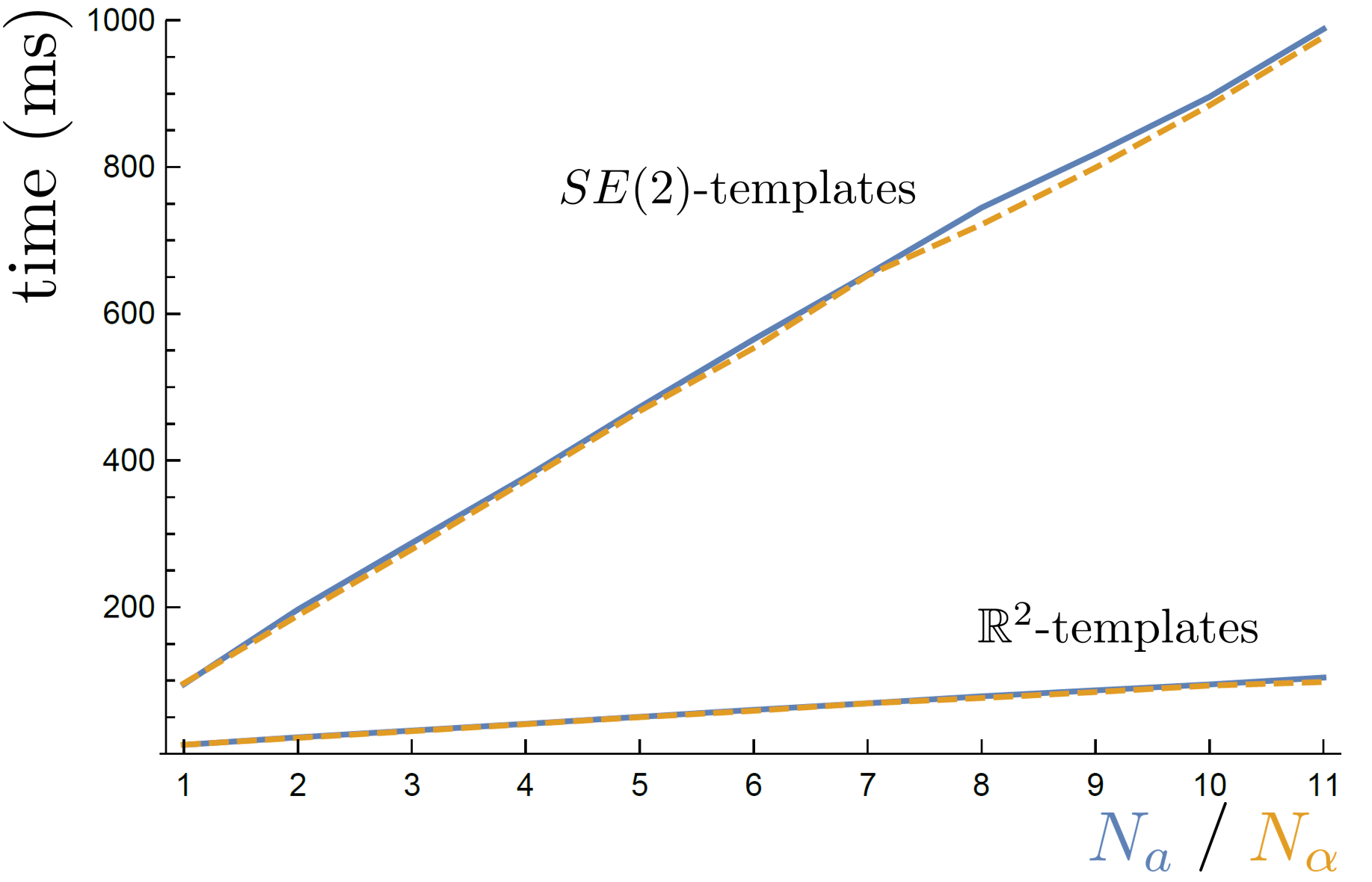}
}
\caption{
Average computation times for the detection of one pupil/eye per image, using $SE(2)$ or $\mathbb{R}^2$ templates, testing for different template orientations or scalings. Two experiments are shown, in blue the number of template orientations $N_\alpha=1$ and the number of scalings $N_a$ is varied, in orange-dashed the number of scalings $N_a=1$ and the number of rotations is $N_\alpha$ is varied.
\label{fig:RotScalTimings}}
\end{figure}



\bibliographystyle{IEEEtran}
\bibliography{references}


\end{document}